\newcommand{\AF}{F} 
\newcommand{\SETAF}{SF}
\newcommand{\ADF}{DF} 
\newcommand{\EAFC}{EFC} 
\newcommand{\AFN}{FN}
\newcommand{\ext}{\mathit{X}}
\newcommand{\charF}{\mathcal{F}_{\AF}}
\newcommand{\charEAFC}{\mathcal{F}_{\EAFC}}
\newcommand{\tvt}{\mathbf{t}}
\newcommand{\tvf}{\mathbf{f}}
\theoremstyle{plain} 
\newtheorem{theorem}{Theorem}[section]
\newtheorem{lemma}[theorem]{Lemma}
\newtheorem{proposition}[theorem]{Proposition}
\newtheorem{translation}{Translation}{\bfseries}{\itshape} 
\newtheorem{definition}{Definition}
\theoremstyle{definition}
\newtheorem{example}{Example}  
\newenvironment{proof}%
   {\begin{trivlist}\item[]\textbf{Proof.}}%
   {\hfill$\Box$\end{trivlist}}
\begin{document}

\title{Understanding the Abstract Dialectical Framework \\ Preliminary Report}

\author{Sylwia Polberg\thanks{The author is a member of the Vienna PhD School of Informatics. This research was funded by project I1102 supported by the 
Austrian Science Fund FWF. The author is currently
supported by EPSRC Project EP/N008294/1 \enquote{Framework for Computational Persuasion}}}
 \date{University College London,\\ Gower Street 66--72, London WC1E 6EA, United Kingdom}

\maketitle 

\begin{abstract} 
Among the most general structures extending the framework by Dung 
are the abstract dialectical frameworks (ADFs). They come equipped with various types of semantics, with the most prominent --
the labeling--based one -- analyzed in the context of computational complexity, signatures, instantiations and software support. 
This makes the abstract dialectical frameworks valuable tools for argumentation.
However, there are fewer results available concerning the relation between the ADFs and other argumentation frameworks. 
In this paper we would like to address this issue by introducing a number of translations from various formalisms into ADFs. 
The results of our study show the similarities
and differences between them, thus promoting the use and understanding of ADFs.
Moreover, our analysis also proves their capability to model many of the existing
frameworks, including those that go beyond the attack relation. 
Finally, translations allow other structures to benefit from the research on ADFs in general and 
from the existing software in particular.
\end{abstract} 

\section{Introduction}

Argumentation has become an influential subfield of AI
\cite{incoll:arglegal,incoll:argdia1,incoll:argdia2,DBLP:journals/jbi/FoxGPABSRV10,DBLP:journals/artmed/HunterW12,DBLP:journals/ail/AtkinsonBM06}.
Within this domain, we distinguish the abstract argumentation, at the heart of which
lies Dung's framework (AF) \cite{article:dung}. A number of its generalizations has been proposed \cite{general}, including
the abstract dialectical framework (ADF)\cite{inproc:adf}. ADFs come equipped with various types of semantics 
\cite{report:strass,tofix:newadf,inproc:adm,polberg-stairs14,report:semanticsrev}, the most prominent of which --
the labeling--based one -- analyzed in the context of computational complexity \cite{strass-wallner14complexity}, signatures \cite{Puehrer2015}
instantiations \cite{strass13instantiating} and software support \cite{Ellmauthaler2014a}. 
This makes ADFs valuable tools for argumentation. Unfortunately, their unusual structure
can be a deterrent against their more widespread use. Moreover, at the first glance it is also difficult to say what is the relation between
the ADFs and the other argumentation frameworks, in particular those that can express support \cite{incoll:bipolar,article:newbaf,incoll:newafn,inproc:easafn}.

In this paper we would like to tackle these issues by introducing a number of translations from various formalisms into the ADFs. This includes
the Dung's framework \cite{article:dung}, the Nielsen's and Parson's framework with joint attacks \cite{incoll:setatt}, 
the extended argumentation framework \cite{ModgilP10} and
the argumentation framework with necessities \cite{incoll:newafn}.  
The results of our study show the similarities
and differences between ADFs and other argumentation formalisms, thus promoting the use and understanding of ADFs.
Moreover, our analysis also proves their capability to model many of the existing
frameworks, including those that go beyond the attack relation. Furthermore, a wider range of extended
argumentation frameworks can be translated into ADFs than into AFs \cite{Modgil11}.

This paper is structured as follows. In Sec. \ref{sec:frams} and \ref{sec:adf} we recall the aforementioned argumentation frameworks. 
We also provide a discussion on certain design differences between the ADFs and the other structures. In Sec. \ref{sec:trans}
we present our translations. We close the paper with final remarks and comments on shifting other frameworks to ADFs.

\section{Argumentation Frameworks} 
\label{sec:frams}

In this section we will recall the relevant argumentation frameworks and their extension--based semantics.
Despite the various structural differences between the frameworks, their semantics tend to follow the design patterns 
established by Dung \cite{article:dung}. We can obtain most of our them by combining conflict--freeness, acceptability and various
ways to maximize or minimize our extensions. Thus, many frameworks tend to redefine these \enquote{building blocks}, and then reuse
the original (or similar) definitions from \cite{article:dung}. Therefore, when recalling the relevant structures in this section, we will mostly
provide the necessary notions and reuse the existing formulations. Finally, please note we will be focusing on finite structures.

\subsection{Dung's Argumentation Framework}

Let us start with the famous Dung's framework \cite{article:dung},
which is based on binary attack. 
%Although various different semantics have been defined for the Dung's framework
% \cite{article:semintro,BaroniDG11,inproc:egi,inproc:prudent}, 
%we will focus on the original extension--based family \cite{article:dung}.

\begin{definition}
A \textbf{Dung's abstract argumentation framework} (AF) is a pair $\AF=(A, R)$, where $A$ is a set of \textbf{arguments} and
$R \subseteq A \times A$ is the \textbf{attack} relation.
\end{definition}
\begin{definition}
\label{def:baseaf}
Let $\AF = (A, R)$ be a Dung's framework and $\ext \subseteq A$ a set of arguments. 
\begin{itemize}
%\item an argument $a \in A$ \textbf{attacks} $b \in A$ iff $a R b$.
\item the \textbf{attacker set} of $\ext$ is $\ext^- = \{ a \mid \exists b \in \ext, a R b\}$
\item the \textbf{discarded set} of $\ext$ is $\ext^+ = \{a \mid \exists b \in \ext, b R a\}$.
\item $\ext$ \textbf{defends}\footnote{Defense is often
substituted with acceptability, i.e. $\ext$ defends $a$ iff $a$ is acceptable w.r.t. $\ext$.} 
an argument $a \in A$ iff every argument $b \in A$ that attacks $a$ is in $\ext^+$.
\item $\ext$ is \textbf{conflict--free} in $\AF$ iff there are no $a,b \in \ext$ s.t. $a$ attacks $b$. 
\end{itemize}
\end{definition}

\begin{definition}\label{def:semantics}
Let $\AF = (A, R)$ be an AF.
A set $\ext \subseteq A$ is:
\begin{itemize}  
\item \textbf{admissible} in $\AF$ iff it is conflict--free in $\AF$ and defends in $\AF$ all of its members. 
\item \textbf{preferred} in $\AF$ iff it is maximal w.r.t. $\subseteq$ admissible in $\AF$. 
\item \textbf{complete} in $\AF$ iff it is admissible and every $a \in A$ that is defended by $\ext$, is in $\ext$.
\item \textbf{grounded} in $\AF$ iff it is the least fixed point of the characteristic operator $\charF : 2^A \rightarrow 2^A$ defined as 
$\charF(\ext) = \{a \mid a $ is defended by $ \ext $ in $ \AF\}$
\item \textbf{stable} in $\AF$ iff it is conflict--free in $\AF$ and $A \setminus \ext = \ext^+$.
\end{itemize}
\end{definition} 

The extensions between different semantics can be related to each other in a number of ways \cite{article:dung}, however,
it is usually the following properties that will hold:

\begin{theorem}
\label{thm:dung2}
Let $\AF = (A,R)$ be an AF.
The following holds:
\begin{enumerate}  
\item Every stable extension of $\AF$ is also preferred, but not vice versa. 
\item Every preferred extension of $\AF$ is also complete, but not vice versa.
\item The grounded extension of $\AF$ is the least w.r.t. $\subseteq$ complete extension of $\AF$. 
\end{enumerate}
\end{theorem}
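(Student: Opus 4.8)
The plan is to prove the three standard Dung relationships, working through them in the listed order since each builds on conflict-freeness and the notion of defense from Definition \ref{def:baseaf}.

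\textbf{Part 1 (stable implies preferred).} First I would show that every stable extension $\ext$ is admissible. Since $\ext$ is conflict-free and $A \setminus \ext = \ext^+$, any argument $b$ attacking some $a \in \ext$ cannot lie in $\ext$ (that would violate conflict-freeness), so $b \in A \setminus \ext = \ext^+$, meaning $b$ is attacked by $\ext$; hence $\ext$ defends all its members and is admissible. For maximality, suppose some admissible $\diffext \supsetneq \ext$ existed; then $\diffext$ contains an argument $a \in A \setminus \ext = \ext^+$, so $a$ is attacked by some member of $\ext \subseteq \diffext$, contradicting conflict-freeness of $\diffext$. Thus $\ext$ is a maximal admissible set, i.e. preferred. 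The strictness (``not vice versa'') I would settle with a small counterexample, e.g. a single self-attacking argument, whose only preferred extension is $\emptyset$ but which has no stable extension.

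\textbf{Part 2 (preferred implies complete).} Let $\ext$ be preferred, hence admissible. I need to show every argument $a$ defended by $\ext$ already lies in $\ext$. The key step is to argue that $\ext \cup \{a\}$ is again admissible, which by maximality of $\ext$ forces $a \in \ext$. Conflict-freeness of $\ext \cup \{a\}$ follows because $a$ is defended, so any attacker of $a$ (including any member of $\ext$ attacking $a$, or $a$ attacking a member of $\ext$) is counterattacked by $\ext$, and combining this with conflict-freeness of $\ext$ rules out internal conflicts; defense of all members is inherited from $\ext$ together with the defense of $a$ itself. Again I would exhibit a complete-but-not-preferred extension (the empty set in a two-cycle $a \leftrightarrow b$) for strictness.

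\textbf{Part 3 (grounded is least complete).} The grounded extension is the least fixed point of the monotone operator $\charF$, which exists by the Knaster--Tarski theorem applied to the complete lattice $2^A$; monotonicity of $\charF$ holds because enlarging $\ext$ can only enlarge $\ext^+$ and hence the set of defended arguments. I would first check that every complete extension is a fixed point of $\charF$: completeness says exactly that $\charF(\ext) \subseteq \ext$ (defended arguments are in $\ext$) while admissibility gives $\ext \subseteq \charF(\ext)$ (members are defended), so $\charF(\ext) = \ext$. Since the grounded extension is the \emph{least} fixed point, it is contained in every complete extension; it remains only to verify the grounded extension is itself complete, which follows from it being a fixed point together with a check of conflict-freeness.

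I expect the main obstacle to be the conflict-freeness verification in Part 2, where one must carefully combine the fact that $a$ is defended with the conflict-freeness of $\ext$ to exclude attacks both into and out of $a$; the fixed-point bookkeeping in Part 3 and the admissibility argument in Part 1 are comparatively mechanical. Throughout, the counterexamples establishing the strictness claims in Parts 1 and 2 are routine but should be stated explicitly.
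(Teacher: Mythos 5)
The paper never proves Theorem \ref{thm:dung2}: it is recalled as background and attributed to \cite{article:dung}, and no proof of it appears in the appendix (unlike, e.g., Theorem \ref{thm:dungeaf}). Your proposal therefore has to be judged against the classical argument, and that is exactly what you have reconstructed: stable implies admissible plus maximal via the discarded set $\ext^+$; preferred implies complete via Dung's Fundamental Lemma; grounded as the least fixed point of the monotone operator $\charF$, with complete extensions characterised as the conflict--free fixed points. All three parts and both strictness counterexamples (the self--attacker, the two--cycle) are correct.

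Two places deserve tightening. First, in Part 2 your parenthetical folds the case \enquote{$a$ attacks some $b \in \ext$} in with \enquote{attackers of $a$}; that case is not handled by the defense of $a$ alone --- you need admissibility of $\ext$ (defense of $b$ yields a counterattack of $\ext$ on $a$, and only then does defense of $a$ produce an attack inside $\ext$, contradicting conflict--freeness). You flagged this as the delicate point and list the right ingredients, so this is a matter of writing it out, not a missing idea. Second, and more substantively, in Part 3 the step you dismiss as \enquote{a check of conflict--freeness} is where the real work hides: the least fixed point of $\charF$ is not conflict--free by definition. The standard route (for the finite frameworks this paper considers) is to realise the least fixed point as the limit of the iteration $\charF^i(\emptyset)$ and prove by induction that every iterate is admissible --- which is precisely one more application of the Fundamental Lemma you already invoked in Part 2. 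Without that induction, the inference \enquote{fixed point and conflict--free, hence complete} has an unjustified premise, so this step is not \enquote{comparatively mechanical}; it is the same lemma doing the work a second time.
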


%\begin{example}
%\label{ex:dung}
%Let $\AF = (\{a,b,c,d,e\}, \{(a,b)$, $(c,b)$, $(c,d)$, $(d,c)$, $(d,e)$, $(e,e)\})$ be an AF. 
%It has four admissible extensions --
%$\emptyset$, $\{a\}$, $\{a,c\}$, $\{a,d\}$ -- with the latter three being complete. 
%We end up with two preferred extensions, 
%$\{a, c\}$ and $\{a, d\}$. $\{a, d\}$ is the stable and
%$\{a\}$ is the 
%grounded extension.
%\end{example}
%%
\subsection{Framework with Sets of Attacking Arguments}

In some cases, a single argument might not be enough 
to carry out an attack on another argument. 
For example, all of the means, motive, opportunity and evidence might be required to prove guilt.
In order to grasp such problems, 
a framework with group conflict was developed \cite{incoll:setatt}. The semantics of SETAFs are almost identical to the AF ones. 
Given a set $\ext \subseteq A$, the attacks will now be
carried out not by single arguments in $\ext$, but its subsets. Thus, in the interest of space, we will not formally give their definitions. 

\begin{definition}
A \textbf{framework with 
sets of attacking arguments} (SETAF) is a pair $\SETAF = (A, R)$,  where $A$ is the set of 
\textbf{arguments} and $R\subseteq (2^A \setminus \emptyset)\times A$ is the \textbf{attack} relation.
\end{definition} 
\begin{example}\label{exa:setaf}
Let us consider the SETAF $\SETAF = (A,R)$, where $A = \{a,b,c,d,e\}$ and 
$R= \{(\{a\},c)$, $(\{b\},a)$,$(\{b\},b)$, $(\{c\},d)$, $(\{e\},a)$, $(\{b,d\},e)\}$. 
The only admissible extensions are $\emptyset$ and $\{c,e\}$; both of them are complete. 
$\{c,e\}$ is the preferred extension, while $\emptyset$ is grounded. Because of $b$,
this particular framework has no stable extensions.
\end{example}
 
%%%%%%%%%%%%%%%%%%%%%%%%%%%%% EAFC 
\subsection{Extended Argumentation Framework with Collective Attacks}

The extended argumentation framework with collective defense attacks \cite{ModgilP10} is an improvement
of the framework studied in \cite{Modgil11,Modgil07agent,Bench-Capon2009,dagstuhlModgilL09,DunneMB10,Modgil14,implemeaf}.
It introduces the notion of defense attacks, which occur between sets of arguments and binary conflicts. 
They can \enquote{override} a given attack due to e.g. the target's importance, which is a common approach
in the preference--based argumentation \cite{AmgoudC02,Kaci2008,incoll:prefunified,report:vafs,article:newpafs}.  
The added value of defense attacks is the fact that the arguments carrying them out can also be attacked and
questioned.
%If, for some reason, it turned out that
%there is a cheaper alternative to this procedure,
%we would have a counterargument for \textbf{s} and a choice not to perform the initial surgery after all. 
%The formal definition of this framework is now as follows:

\begin{definition}
\label{def:bheafc} 
An \textbf{extended argumentation framework with collective defense attacks} (EAFC) is a tuple $\EAFC = (A, R, D)$,
where $A$ is a set of \textbf{arguments}, $R \subseteq A \times A$ is a set of \textbf{attacks} and 
$D \subseteq (2^A\setminus \emptyset) \times R)$ is the set
of \textbf{collective defense attacks}. 
%$\EAFC$ is bounded hierarchical iff there exists a partition $\delta_H = (((A_1, R_1), D_1), ... , ((A_n, R_n), D_n))$
%s.t. $D_n = \emptyset$,
%$A = \bigcup_{i=1}^n A_i$, $R = \bigcup_{i=1}^n R_i$, $D = \bigcup_{i=1}^n D_i$, for every $i=1 ... n$ $(A_i, R_i)$ is a Dung framework,
%and $(c, (a,b)) \in D_i$ implies $(a,b) \in R_i$, $c \subseteq A_{i+1}$.
\end{definition}

We can observe that a given attack can be successful (referred to as a defeat) or not, depending on the presence of suitable defense attacks. 
The defense has to include not just defending the arguments, but
also a form of \enquote{protection} of the important defeats: 

\begin{definition}
Let $\EAFC = (A,R,D)$ be an EAFC and $\ext \subseteq A$ a set of arguments. 
\begin{itemize}
\item an argument $a$ \textbf{defeats$_\ext$} an argument $b$ in $\EAFC$ w.r.t. $\ext$ iff $(a,b) \in R$ and there is 
no $C \subseteq A$ s.t. $(C,(a,b)) \in D$.
\item a set of pairs $R_\ext = \{(x_1, y_1),...,(x_n, y_n) \}$ s.t. $x_i$ defeats$_\ext$ $y_i$ in $\EAFC$ and for $i=1...n$, $x_i \in \ext$, is a 
\textbf{reinstatement
set} on $\ext$ for a defeat$_\ext$ by argument $a$ on argument $b$ iff $(a,b) \in R_\ext$ and for every pair $(x,y) \in R_\ext$ and set of arguments
$C \subseteq A$ s.t. $(C,(x,y)) \in D$, there is a pair $(x', y') \in R_\ext$ for some $y' \in C$. 
\item the \textbf{discarded set} of $\ext$ is $\ext^+ = \{ a \mid \exists b \in \ext$ s.t. $b$ defeats$_\ext$ $a$ and there is a reinstatement
set on $\ext$ for this defeat$_\ext \}$.
\item $\ext$ \textbf{defends} and argument $a \in A$ in $\EAFC$ iff every argument $b \in A$ s.t. $b$ defeats$_\ext$ $a$ in $\EAFC$ is in $\ext^+$.
\item $\ext \subseteq A$ is \textbf{conflict--free} in $\EAFC$ iff there are no $a,b \in \ext$ s.t. $a$ defeats$_\ext$ $b$ in $\EAFC$.
\end{itemize}
\end{definition}

With the exception of the grounded semantics, all extensions are defined in the same way as in Def. \ref{def:semantics}.  
Unfortunately, despite these similarities, Thm. \ref{thm:dung2} cannot be entirely extended to EAFCs. 
Finally, within EAFCs we can distinguish the bounded hierarchical subclass, enforcing certain restrictions
on the attacks and defense attacks.

\begin{definition}
Let $\EAFC = (A,R,D)$ be a finitiary\footnote{An EAFC is finitiary if for every argument and attack, the collection of its (defense) attackers is finite.}
 EAFC, $\ext \subseteq A$ a set of arguments and $2^{CF}$ the set of all conflict--free sets of $\EAFC$. 
The \textbf{characteristic
function} $\charEAFC: 2^{CF} \rightarrow 2^A$ of $\EAFC$ is defined as $\charEAFC(\ext) = \{a \mid a$ is defended by $\ext$ in $\EAFC\}$. 
We define a sequence of subsets of $A$ s.t. $\charEAFC^0 = \emptyset$ and 
$\charEAFC^{i+1} = \charEAFC(\charEAFC^i)$.
The \textbf{grounded} extension of $\EAFC$ is $\bigcup_{i=0}^\infty (\charEAFC^i)$. 
\end{definition}
\begin{restatable}{theorem}{thmdungeaf}
Let $\EAFC = (A,R,D)$ be a finitary EAFC. The following holds:
\begin{enumerate}
\item Every preferred extension is complete, but not vice versa
\item Every stable extension is complete, but not vice versa
\item The grounded extension is a minimal w.r.t. $\subseteq$ complete extension
\end{enumerate}
\label{thm:dungeaf}
\end{restatable}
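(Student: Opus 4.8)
The plan is to isolate a single \textbf{fundamental lemma} and derive all three items from it, mirroring the classical Dung development while paying attention to the fact that in an EAFC the defeat relation is \emph{anti-monotone} in the defending set: reading defeats$_X$ so that a collective defense attack overrides $(a,b)$ precisely when its source set is contained in $X$, enlarging $X$ can only activate more defense attacks and hence only \emph{remove} defeats, so $X \subseteq Y$ implies that every defeat w.r.t.\ $Y$ is already a defeat w.r.t.\ $X$. The lemma I would prove is: if $X$ is admissible and $X$ defends every argument of a set $S$, then $X \cup S$ is again admissible and moreover $X^+ \subseteq (X\cup S)^+$. Conflict-freeness of $X \cup S$ follows because any defeat$_{X\cup S}$ touching $S$ is also a defeat$_X$, and then admissibility plus conflict-freeness of $X$ force the attacker into $X^+$ and thereby produce an internal conflict of $X$. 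The inclusion $X^+ \subseteq (X\cup S)^+$ is the heart: given $b\in X^+$ with reinstatement set $R$ on $X$, I would show $R$ survives on $X\cup S$. If some pair of $R$ were overridden in $X\cup S$ by a newly activated defense attack $C$, the reinstatement condition (which quantifies over \emph{all} $C\subseteq A$) yields a pair of $R$ whose target lies in $C\subseteq X\cup S$; that target is either in $X$, contradicting conflict-freeness of $X$ directly, or in $S$, where the defended-ness of $S$ again routes us into a conflict of $X$. Because every element of $S$ is defended by $X$, this terminates at once and no regress occurs.

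With the lemma in hand, item~2 is the easiest and I would do it first. For stable $X$ we have $A\setminus X = X^+$; every defeater$_X$ of a member lies outside $X$, hence in $X^+$, so $X$ is admissible, and if $X$ defended some $a\notin X$ then $a\in X^+$ would be defeated by some $c\in X$ which, being a defeater of the defended $a$, would itself lie in $X^+ = A\setminus X$, a contradiction; hence $X$ is complete. For item~1 I would apply the fundamental lemma with $S=\{a\}$: if a preferred (maximal admissible) $X$ defended some $a\notin X$, then $X\cup\{a\}$ would be admissible, contradicting maximality, so $X$ contains everything it defends and is complete. The two \enquote{not vice versa} parts I would settle by embedding a Dung framework with a $2$-cycle (an EAFC with $D=\emptyset$), where $\emptyset$ is complete but neither preferred nor stable.

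For item~3 I would first establish by induction, using the fundamental lemma, that the sequence $\charEAFC^i$ is conflict-free, admissible and increasing: admissibility of $\charEAFC^i$ makes it self-defending, whence $\charEAFC^i\subseteq\charEAFC^{i+1}$, and since every element of $\charEAFC^{i+1}=\charEAFC(\charEAFC^i)$ is defended by $\charEAFC^i$, the lemma returns admissibility of $\charEAFC^{i+1}$. In the finite case the chain stabilises at some $G=\charEAFC^N=\charEAFC(G)$; being admissible and a fixpoint of $\charEAFC$, $G$ is complete. Minimality then amounts to showing there is no complete $Y\subsetneq G$, which I would attempt by a stagewise induction: taking the least stage $k$ with $\charEAFC^{k}\not\subseteq Y$, an argument $a\in\charEAFC^{k}\setminus Y$ is defended by $\charEAFC^{k-1}\subseteq Y$ yet, $Y$ being complete, is not defended by $Y$, which I would refute by transferring the reinstatement sets of $\charEAFC^{k-1}$ up to $Y$ as in the fundamental lemma.

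The hard part will be exactly this last transfer. Unlike in the fundamental lemma, the arguments of $Y\setminus\charEAFC^{k-1}$ need not be defended by $\charEAFC^{k-1}$, so the \enquote{target lies in $S$} branch no longer closes and one can enter a cycle of mutually protecting reinstatement pairs that are all simultaneously overridden in $Y$. Indeed $\charEAFC$ is genuinely non-monotone, so $G$ is in general only a \emph{minimal} and not the least complete extension; a small EAFC with arguments $x,y,a$, attacks $(x,y),(y,a)$ and the single defense attack $(\{y\},(x,y))$ already has the two incomparable complete extensions $\{x,a\}$ and $\{x,y\}$. To break the cycle I would restrict to a $\subseteq$-minimal reinstatement set and argue it is well-founded, so that following the protection dependencies from the head defeat must terminate at a pair with no activating defense attack inside $Y$; that pair is then a defeat between two elements of $Y$, contradicting conflict-freeness of $Y$ and forcing $G\subseteq Y$. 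Establishing this well-foundedness (and, in the genuinely finitary rather than finite case, the continuity needed to pass admissibility and the fixpoint property to the infinite union defining the grounded extension) is where I expect the real work to lie.
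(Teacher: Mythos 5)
Your anti-monotonicity observation, your fundamental lemma, and the derivations of items 1 and 2 (with the two-cycle counterexample) are correct and essentially the paper's own route: the paper proves exactly your lemma inline for the special case $S=\{a\}$ to get preferred implies complete, and handles stable as you do. Your setup for item 3 up to the fixpoint is also fine, and in fact slightly more careful than the paper's, since you explicitly verify that the stabilised $G$ is complete. The genuine gap is precisely where you predicted the work would lie: the transfer step in the minimality argument. The claim you intend to prove there --- if $\charEAFC^{k-1}\subseteq Y$ with $Y$ complete and $\charEAFC^{k-1}$ defends $a$, then $Y$ defends $a$ --- is simply \emph{false}, and so is the proposed repair that $\subseteq$-minimal reinstatement sets are well-founded.

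Both failures are witnessed by the paper's own Example \ref{ex:afraeaf}. There $\charEAFC^1=\{a,d\}$ defends $e$ via the reinstatement set $\{(a,b),(d,c)\}$ for the defeat of $b$ by $a$. This set is $\subseteq$-minimal ($(a,b)$ is the head, and $(d,c)$ is the only possible answer to the defense attack $(\{c\},(a,b))$), yet its protection structure is a genuine $2$-cycle: $(d,c)$ protects $(a,b)$ against $(\{c\},(a,b))$, and $(a,b)$ protects $(d,c)$ against $(\{b\},(d,c))$. Now take the complete extension $Y=\{a,b,c,d,f\}\supseteq\charEAFC^1$: inside $Y$ both defense attacks are activated and the two protecting pairs are overridden \emph{simultaneously}, so $a$ no longer defeats$_Y$ $b$ and $Y$ does not defend $e$. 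This is exactly the cyclic collapse you hoped minimality would exclude; no choice of reinstatement set can rescue a transfer statement that has a counterexample. What saves the theorem is the hypothesis your local argument never uses, namely $Y\subsetneq G$. The paper's proof exploits it through a persistence lemma: every argument occurring in a defense-attacking set against a pair of a stage-$i$ reinstatement set is itself defeated with reinstatement by $\charEAFC^i$, this status is preserved at every later stage and by $G$ itself, hence all such arguments (here $b$ and $c$) lie outside $G$ and therefore outside any $Y\subsetneq G$ --- so within such a $Y$ no new defense attack can be activated and the reinstatement does transfer. Your induction must thread the containment $Y\subsetneq G$ through this persistence argument rather than attempt a transfer to arbitrary complete supersets of $\charEAFC^{k-1}$. (Your closing remark about continuity in the merely finitary case is a fair one, but note the paper's proof glosses over that point as well.)
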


\begin{definition}
An EAFC $\EAFC =(A,R,D)$ is \textbf{bounded hierarchical} iff there exists a partition $\delta_H = (((A_1, R_1), D_1), ... , ((A_n, R_n), D_n))$
s.t. $D_n = \emptyset$,
$A = \bigcup_{i=1}^n A_i$, $R = \bigcup_{i=1}^n R_i$, $D = \bigcup_{i=1}^n D_i$, for every $i=1 ... n$ $(A_i, R_i)$ is a Dung's framework,
and $(c, (a,b)) \in D_i$ implies $(a,b) \in R_i$, $c \subseteq A_{i+1}$.
\end{definition}

\begin{example} {\cite{article:eaf}}
\label{ex:afraeaf} 
Let $\EAFC =(\{a,b,c,d,e,f,g\}$, $\{(a,b), (d,c), (b,e), (e,f), (f,g) \}$, $\{ (\{b\}, (d,c)), (\{c\}, (a,b)) \})$ be an EAFC. 
Let us look at some of its conflict--free extensions. We can see that $\{a,b\}$ and $\{c,d\}$ are not conflict--free.
However, both $\{a,b,c\}$ and $\{b,c,d\}$ are, due to the presence of defense attackers,.
Additionally, also $\{a,b,c,d\}$, $\{a,d,e,g\}$ and $\{b,c,a,d,f\}$ are conflict--free.
%[{f: OUT,g: OUT,d: OUT,e: OUT,b: OUT,c: OUT,a: OUT},  \emptyset
%{f: IN,g: OUT,d: OUT,e: OUT,b: IN,c: IN,a: OUT}, 		\{b,c,f\}
%{f: OUT,g: OUT,d: IN,e: OUT,b: OUT,c: OUT,a: OUT}, 	\{d\}
%{f: OUT,g: OUT,d: OUT,e: OUT,b: OUT,c: OUT,a: IN}, 	\{a\}
%{f: OUT,g: OUT,d: OUT,e: OUT,b: IN,c: IN,a: OUT}, 	\{b,c\}
%{f: OUT,g: OUT,d: IN,e: OUT,b: OUT,c: OUT,a: IN}, 	\{a,d\}
%{f: OUT,g: OUT,d: IN,e: IN,b: OUT,c: OUT,a: IN}, 		\{a,d,e\}
%{f: IN,g: OUT,d: IN,e: OUT,b: IN,c: IN,a: OUT}, 		\{b,c,d,f\}
%{f: IN,g: OUT,d: OUT,e: OUT,b: IN,c: IN,a: IN}, 		\{a,b,c,f\}
%{f: OUT,g: OUT,d: IN,e: OUT,b: IN,c: IN,a: IN}, 		\{a,b,c,d\}
%{f: IN,g: OUT,d: IN,e: OUT,b: IN,c: IN,a: IN}, 		\{a,b,c,d,f\}	
%{f: OUT,g: OUT,d: OUT,e: OUT,b: IN,c: IN,a: IN}, 		\{a,b,c\}
%{f: OUT,g: OUT,d: IN,e: OUT,b: IN,c: IN,a: OUT}]		\{b,c,d\}
%{f: OUT,g: IN,d: IN,e: IN,b: OUT,c: OUT,a: IN}, 		\{a,d,e,g\}
The admissible extensions of $\EAFC$ include $\emptyset$, $\{a\}$, $\{d\}$, $\{a,d\}$, $\{b,c\}$, $\{a,b,c\}$, $\{b,c,d\}$, $\{a,d,e\}$, $\{b,c,f\}$,
$\{a,b,c,f\}$, $\{b,c,d,f\}$,
$\{a,d,e,g\}$, $\{a,b,c,d\}$ and $\{a,b,c,d,f\}$. We can observe that the set $\ext = \{b,c\}$ is admissible. Neither $a$ nor $d$ defeat$_\ext$ 
any of its elements,
and thus there is nothing to defend from. The set $\{a,d,e\}$ is admissible since the defeat of $b$ by $a$ has a reinstatement
set $\{(d,c), (a,b)\}$. Although its behavior appears cyclic, it suffices for defense.
The sets $\{a,d,e,g\}$ and $\{a,b,c,d,f\}$ are complete. We can observe they are incomparable
and do not follow the typical semi--lattice structure of complete extensions. The grounded
extension is $\{a,d,e,g\}$; it is minimal, but not the least complete extension. Both $\{a,d,e,g\}$ and $\{a,d,b,c,d,f\}$
are stable and preferred. 
\end{example}

%%%%%%%%%%%%%%%%%%%%%%%%%%%%%%%%%%%%%%% AFN
%
\subsection{Argumentation Framework with Necessities}

Various types of support have been studied in abstract argumentation \cite{incoll:bipolar,article:newbaf,incoll:newafn,inproc:easafn}. 
Due to limited space, we will focus the necessary support, though based on
the research in\cite{article:newbaf,inproc:easafn} our results can be extended to other relations as well. 
We say that a set of arguments 
$\ext$ \textit{necessary supports} $b$ if we need to assume at least one element of $\ext$ 
in order to accept $b$. Using this relation has certain important implications.
First of all, 
argument's supporters need to be present in an extension.
Secondly, an argument can be now indirectly attacked by the means of its supporters, i.e. we can \enquote{discard} an argument
not just by providing a direct conflict, but also by cutting off its support. Finally, a certain notion of a validity of an argument is introduced, 
stemming from its participation
in support cycles. It affects the acceptance and attack capabilities of an argument.
Let us now recall the framework with necessities \cite{incoll:newafn}:

\begin{definition}
An \textbf{abstract argumentation framework with necessities} (AFN) is a tuple $\AFN = (A, R, N)$ where $A$ is a set of \textbf{arguments},
 $R \subseteq A \times A$ represents the \textbf{attack} relation
and $N\subseteq (2^A \setminus \emptyset) \times A$ represents the \textbf{necessity} relation.
\end{definition} 
 
The acyclicity restrictions are defined through the powerful sequences and the related coherent sets.
By joining conflict--freeness and coherence, we obtain a new semantics which replaces conflict--freeness as the basis of stable 
and admissible extensions.
The remaining notions are defined similarly as in Def. \ref{def:semantics} and satisfy Thm. \ref{thm:dung2}.

\begin{definition}
\label{def:afnsem}
Let $\AFN = (A,R,N)$ be an AFN and $\ext \subseteq A$ a set of arguments.
An argument $a \in A$ is \textbf{powerful} in $\ext$ iff $a\in \ext$ and there is a sequence $a_0,...,a_k$ of elements of $\ext$ s.t. :
\begin{inparaenum}[\itshape i\upshape)]
\item $a_k = a$
\item there is no $B\subseteq A$ s.t. $B N a_0$
\item for $1 \leq i \leq k$: for each $B\subseteq A$s.t. $B N a_i$, it holds that $B \cap \{a_0,...,a_{i-1}\} \neq \emptyset$.
\end{inparaenum}
A set of arguments $\ext\subseteq A$ is \textbf{coherent} in $\AFN$ iff each $a \in \ext$ is powerful in $\ext$.
\end{definition} 
 
\begin{definition}
\label{def:afn1}
Let $\AFN = (A,R,N)$ be an AFN and $\ext \subseteq A$ a set of arguments.
\begin{itemize}
%\item an argument $a \in A$ \textbf{attacks} $b \in A$ iff $b R a$.
%\item a set $B \subseteq A$ \textbf{supports} $b \in A$ iff $B N b$.
%\item $\ext$ \textbf{sufficiently supports} $b \in A$ iff for every set of arguments $B$ supporting $b$, $B \cap \ext \neq \emptyset$.
\item the \textbf{discarded} set of $\ext$ in $\AFN$ is defined as 
$\ext^{att} = \{ a \mid$ for every coherent $C \subseteq A$ s.t. $a \in C$, $\exists c \in C, e \in \ext$ s.t. $e R c \}$\footnote{Please
note that we do not denote the AFN discarded set with $\ext^+$ as in the previous cases in order not to confuse it with the notion
of the deactivated set from \cite{incoll:newafn}, which is less restrictive}. 
\item $\ext$ \textbf{defends} an argument $a \in A$ in $\AFN$ iff $\ext \cup \{a\}$ is coherent and for each $b \in A$, if $b R a$ then $b \in \ext^{att}$.
\item $\ext$ is \textbf{conflict--free} in $\AFN$ iff there are no $a,b \in \ext$ s.t. $a$ attacks $b$.
\end{itemize}
\end{definition}

\begin{definition}
\label{def:afn2}
Let $\AFN = (A,R,N)$ be an AFN.
A set of arguments $\ext \subseteq A$ is:
\begin{itemize}
\item \textbf{strongly coherent} in $\AFN$ iff it is conflict--free and coherent in $\AFN$
\item \textbf{admissible} in $\AFN$ iff it is strongly coherent and defends all of its arguments in $\AFN$. 
\item \textbf{stable} in $\AFN$ iff it is strongly coherent in $\AFN$ and $\ext^{att} = A\setminus \ext$.
\end{itemize}
\end{definition}
 
\begin{example}
\label{ex:afn}
Let $(\{a,b,c,d,e,f\}, \{(a,e), (d,b), (e,c), (f,d)\}, \{(\{b,c\},a), (\{f\},f)\})$ \\
be an AFN.
Its coherent sets include $\emptyset$, $\{a,b\}$, $\{a,c\}$, $\{b\}$, $\{c\}$, $\{d\}$, $\{e\}$ 
and any of their combinations.
In total, we have six admissible extensions.
$\emptyset$ is trivially admissible. So is $\{d\}$ due to the fact that $f$ does not possess a powerful sequence in $\AFN$. 
However, $\{e\}$ is not admissible; it does not attack one of the coherent sets of $a$, namely $\{a,b\}$. Fortunately, $\{d,e\}$ is already
admissible. We can observe that $b$ can never be defended and will not appear in an admissible set. The two final extensions
are $\{a,c\}$ and $\{a,c,d\}$.
The sets $\{d\}$, $\{d,e\}$ and $\{a,c,d\}$ are our complete extensions, with the first one being grounded and the latter two preferred.
In this case, both $\{d,e\}$ and $\{a,c,d\}$ are stable.
\end{example}

%%%%%%%%%%%%%%%%%%%%%%%%%%%%%%%%%%%% EAS
%
\section{Abstract Dialectical Frameworks}
\label{sec:adf}

Abstract dialectical frameworks have been defined in \cite{inproc:adf} and further studied in
 \cite{report:strass,tofix:newadf,inproc:adm,polberg-stairs14,report:semanticsrev,strass-wallner14complexity,Puehrer2015,strass13instantiating,Strass15}.  
Their main goal is to be able to express arbitrary relations and avoid the need of introducing a new relation set each time it is needed. 
This is achieved by the means of acceptance conditions, 
which define when an argument can be accepted or rejected. 
They can be defined either as total functions over the parents of an argument \cite{inproc:adf} or as propositional formulas over them \cite{thesis:stefan}. 
%In this case the condition $\acc_a$ for an argument $a \in A$ is a propositional
%formula $\pacc_a$ over the parents of $a$. 
%In a certain sense, this form of representation can be seen as dual to the one normally seen in argumentation
%frameworks. Instead of focusing on separate links and saying \enquote{a supports b} or \enquote{c attacks b} and then checking if e.g. there are no attacks in
%a given set of arguments, we look at collections of arguments related to the one we are interested in and just evaluate the acceptance condition to say
%\enquote{with respect to this set of arguments, this argument can(not) be accepted}. Only by checking the outcomes of the conditions we can later say that 
%\enquote{the relation between a and b is attacking} and so on.

\begin{definition}
\label{def:funcadf}
An \textbf{abstract dialectical framework} (ADF) is a tuple $\ADF = (A, L, C)$, where
$A$ is a set of \textbf{arguments},
$L \subseteq A \times A$ is a set of \textbf{links} and 
$C = \{C_ a \}_{a\in A}$ is a set of \textbf{acceptance conditions}, one condition per each argument. 
An acceptance condition is a total function $C_a : 2^{par(a)} \rightarrow \{in, out\}$, where
$par(a) = \{ p \in A \mid (p,a) \in L\}$ is the set of \textbf{parents} of an argument $a$.
\end{definition} 

Due to the fact that the set of links can be inferred from the conditions, we will write simply $(A,C)$ to denote an ADF. 
The basic \enquote{building blocks} of the extension--based ADF semantics from \cite{polberg-stairs14,report:semanticsrev} 
are the decisively in interpretations and the derived various types of evaluations.
A two--valued interpretation is simply a mapping that assigns truth values $\{\tvt, \tvf\}$ to (a subset of) arguments.  
For an interpretation $v$, $v^x$ is the set of elements mapped to $x \in \{\tvt, \tvf\}$ by $v$.
A decisive interpretation $v$ for an argument $a \in A$ represents an assignment for a set of arguments $\ext \subseteq A$
s.t. independently of the status of the arguments in $A \setminus \ext$, the outcome of 
the condition of $a$ stays the same.

\begin{definition}
Let $A$ be a collection of elements, $\ext \subseteq A$ its subset and $v$ a two--valued interpretation defined on $\ext$.
A \textbf{completion} of $v$ to a set $Z$ where $\ext \subseteq Z \subseteq A$, is
an interpretation $v'$ defined on $Z$ in a way that $\forall a \in \ext \; v(a) = v'(a)$. $v'$ is a $\tvt/\tvf$ completion of $v$
iff all arguments in $Z \setminus \ext$ are mapped respectively to $\tvt/\tvf$. 
\end{definition}

\begin{definition}
Let $\ADF = (A,L,C)$ be an ADF, $\ext \subseteq A$ a set of arguments and $v$ a two--valued interpretation defined on $\ext$.
$v$ is \textbf{decisive} for
an argument $s \in A$ iff for any two completions 
$v_{par(s)}$ and $v'_{par(s)}$ of $v$ to $\ext \cup par(s)$,  
it holds that $v_{par(s)}(C_s) =v'_{par(s)}(C_s)$.
$s$ is \textbf{decisively out/in} w.r.t. $v$ if $v$ is decisive and all of its completions evaluate $C_s$ to respectively $out,in$.
\end{definition}

From now on we will focus on the minimal interpretations, i.e. those in which both 
$v^{\tvt}$ and  $v^{\tvf}$ are minimal w.r.t. $\subseteq$.
By $min\_dec(x,s)$ we denote the set of minimal two--valued interpretations
that are decisively $x$ for $s$, where $s$ is an argument and $x\in\{in, out\}$. From the positive parts
of a decisively in interpretation for $a$ we can extract arguments required for the acceptance of $a$.
With this information, we can define various types of evaluations, not unlike the powerful sequences in AFNs.
However, due to the fact that ADFs are more expressive than AFNs, it is also the $\tvf$ parts of the used interpretations
that need to be stored \cite{polberg-stairs14,report:semanticsrev}: 

\begin{definition}
Let $\ADF = (A,L,C)$ be an ADF and $\ext \subseteq A$ a set of arguments. 
A \textbf{positive dependency function} (pd--function) on $\ext$ is a function $pd_{\ext}^{\ADF}$ assigning every argument 
$a \in \ext$ an interpretation $v \in min\_dec(in, a)$ s.t. $v^t \subseteq \ext$, or $\mathcal{N}$ for null iff no such
$v$ can be found. $pd_{\ext}^{\ADF}$ is \textbf{sound} on $\ext$ iff for no $a \in \ext$, $pd_{\ext}^{\ADF}(a) =\mathcal{N}$.
$pd_{\ext}^{\ADF}$ is \textbf{maximally sound} on $\ext$ iff it is sound on $\ext' \subseteq \ext$ and there is no other sound
function ${pd'}_{\ext}^{\ADF}$ on $\ext''$ s.t. $\forall a \in \ext'$, $pd_{\ext}^{\ADF}(a) = {pd'}_{\ext}^{\ADF}(a)$, 
where $\ext' \subset \ext'' \subseteq \ext$.
\end{definition} 
 
\begin{definition}
Let $\ADF = (A,L,C)$ be an ADF, $S \subseteq A$ and $pd_{\ext}^{\ADF}$ a maximally sound pd--function of $S$
defined over $\ext \subseteq S$.
A \textbf{partially acyclic positive dependency evaluation} based on $pd_{\ext}^{\ADF}$ for an argument $x \in \ext$ is a triple $(F,(a_0,...,a_n),B)$,
where $F \cap \{a_0,...,a_n\} = \emptyset$, $(a_0,...,a_n)$ is a sequence of distinct elements of $\ext$ satisfying the requirements:
\begin{inparaenum}[\itshape i\upshape)]
\item if the sequence is non--empty, then $a_n = x$; otherwise, $x \in F$
%\item $\forall_{i=0}^n \, pd_{\ext}^{\ADF}(a_i) \neq \mathcal{N}$ and $\forall a\in F \, pd_{\ext}^{\ADF}(a) \neq \mathcal{N}$
\item $\forall_{i=1}^n, \, pd_{\ext}^{\ADF}(a_i)^{\tvt} \subseteq F \cup \{a_0,...,a_{i-1}\}$, $pd_{\ext}^{\ADF}(a_0)^{\tvt} \subseteq F$
\item $\forall a\in F, \, pd_{\ext}^{\ADF}(a)^\tvt \subseteq F$
\item $\forall a \in F, \exists b \in F$ s.t. $a \in pd_{\ext}^{\ADF}(b)$.
\end{inparaenum}
Finally, $B = \bigcup_{a\in F} pd_{\ext}^{\ADF}(a)^\tvf \cup \bigcup_{i=0}^n \, pd_{\ext}^{\ADF}(a_i)^{\tvf}$.
We refer to $F$ as the \textbf{pd--set}, to $(a_0,...,a_n)$ as the \textbf{pd--sequence} and to $B$ as the \textbf{blocking set} of the evaluation.
A partially acyclic evaluation $(F,(a_0,...,a_n),B)$ for an argument $x \in \ext$
is an \textbf{acyclic positive dependency evaluation} for $x$ iff $F = \emptyset$.
\end{definition}
 
We will use the shortened notation $((a_0,...,a_n),B)$ for the acyclic evaluations.  
There are two ways we can \enquote{attack} an evaluation. Either we accept an argument that needs to be rejected
 (i.e. it is in the blocking set), or we are able to discard one that needs to be accepted (i.e. is in the the pd--sequence or the pd--set).
We will be mostly concerned with the first type. We can now define various discarded sets in ADFs\footnote{The presented definitions are generalizations of the ones from \cite{polberg-stairs14,report:semanticsrev}.}:

\begin{definition}
Let $\ADF = (A,L,C)$ be an ADF and $\ext \subseteq A$ a set of arguments.
The \textbf{standard discarded} set of $\ext$ is
$\ext^+ = \{ a\in A \mid$ for every partially acyclic evaluation $(F,G,B)$ for $a$, $B \cap \ext \neq \emptyset \}$.
The \textbf{partially acyclic discarded} set of $\ext$ is
$\ext^{p+} = \{a \in A \mid$ there is no partially acyclic evaluation $(F',G',B')$ for $a$ 
s.t. $F' \subseteq \ext $ and $B' \cap \ext = \emptyset \}$. 
The \textbf{acyclic discarded} set of $\ext$ is
$\ext^{a+} = \{ a\in A \mid$ for every pd--acyclic evaluation $(F,B) $ for $a, \, B \cap \ext \neq \emptyset \}$.
\end{definition}

Given a set of arguments $\ext$ and its discarded set $S$, we can build a special interpretation -- called \textbf{range} -- with which
we can check for decisiveness. The range can be constructed by assigning $\tvt$ to arguments in $\ext$ and $\tvf$ to those in $S\setminus \ext$.
Under certain conditions $\ext$ and $S$ are disjoint, which brings us to the conflict--free semantics:

\begin{definition}
Let $\ADF = (A,L,C)$ be an ADF.
A set  $\ext \subseteq A$ is a \textbf{conflict--free extension} of $\ADF$ if for all 
$s \in \ext$ we have $C_s (\ext \cap par(s )) = in$. $\ext$ is a
 \textbf{pd--acyclic conflict--free extension} of $\ADF$ iff every $a \in \ext$ has an acyclic evaluation $(F,B)$ on $\ext$ s.t. $B \cap \ext = \emptyset$. 
\end{definition}

\begin{lemma}
\label{lemma:disc}
Let $\ADF = (A,L,C)$ be an ADF and $\ext \subseteq A$ a set of arguments.
If $\ext$ is conflict--free in $\ADF$, then $\ext \cap \ext^+ = \emptyset$ and $\ext \cap \ext^{p+} = \emptyset$. 
Moreover, it holds that $\ext^+ \subseteq \ext^{p+} \subseteq \ext^{a+}$.
If $\ext$ is pd--acyclic conflict--free, then $\ext \cap \ext^{a+} = \emptyset$ and $\ext^{p+} = \ext^{a+}$.
\end{lemma}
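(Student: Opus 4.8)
The plan is to unpack each of the four claims directly from the definitions of the three discarded sets and the two conflict-free notions, treating the inclusions and the intersection claims as a chain of set-containment arguments. I would organize the proof into four blocks corresponding to the four assertions, since each rests on a slightly different part of the machinery. Throughout I would keep in mind the key asymmetry: $\ext^+$ and $\ext^{p+}$ quantify over \emph{partially acyclic} evaluations (so the pd-set $F$ may be non-empty), whereas $\ext^{a+}$ quantifies only over \emph{acyclic} evaluations ($F=\emptyset$). Since every acyclic evaluation $((a_0,\dots,a_n),B)$ is in particular a partially acyclic evaluation with $F=\emptyset$, the family of evaluations relevant to $\ext^{a+}$ is a sub-family of those relevant to $\ext^+$, and this is the engine behind the inclusion chain.

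First I would establish $\ext^+ \subseteq \ext^{p+} \subseteq \ext^{a+}$, as the containments are the structural core. For $\ext^{p+} \subseteq \ext^{a+}$: if $a \in \ext^{p+}$ then there is no partially acyclic evaluation $(F',G',B')$ for $a$ with $F' \subseteq \ext$ and $B' \cap \ext = \emptyset$; specializing to $F' = \emptyset$ (which trivially satisfies $F' \subseteq \ext$) shows no acyclic evaluation $(B)$ for $a$ can avoid meeting $\ext$, i.e. every acyclic evaluation has $B \cap \ext \neq \emptyset$, so $a \in \ext^{a+}$. For $\ext^+ \subseteq \ext^{p+}$: I would argue contrapositively. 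If $a \notin \ext^{p+}$, there is a partially acyclic evaluation $(F',G',B')$ for $a$ with $F' \subseteq \ext$ and $B' \cap \ext = \emptyset$; this \emph{same} evaluation witnesses $a \notin \ext^+$, since it is a partially acyclic evaluation whose blocking set misses $\ext$. I expect this block to be mostly definitional bookkeeping, and it does not require conflict-freeness.

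Next I would handle the disjointness claims under ordinary conflict-freeness. Assuming $\ext$ is conflict-free, I must show $\ext \cap \ext^+ = \emptyset$ (and then $\ext \cap \ext^{p+} = \emptyset$ follows by contraposition on the already-proved inclusion $\ext^+ \subseteq \ext^{p+}$, reasoning on complements). The substantive step is thus $\ext \cap \ext^+ = \emptyset$: for $a \in \ext$ I must exhibit \emph{some} partially acyclic evaluation $(F,G,B)$ for $a$ with $B \cap \ext = \emptyset$, which disqualifies $a$ from $\ext^+$. Here conflict-freeness $C_s(\ext \cap par(s)) = in$ for all $s \in \ext$ should let me build a maximally sound pd-function on $\ext$ whose chosen decisively-in interpretations have their $\tvt$-parts inside $\ext$ and whose accumulated $\tvf$-parts (the blocking set $B$) stay outside $\ext$; the point is that $\ext$ accepting all its members means no member needs an argument in $\ext$ to be false, so the blocking sets avoid $\ext$. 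I anticipate this is where the real work lies, because it requires constructing an evaluation rather than merely manipulating existing ones, and one must carefully check that the $\tvf$-contributions in $B = \bigcup_{a \in F} pd^{\ADF}_{\ext}(a)^{\tvf} \cup \bigcup_i pd^{\ADF}_{\ext}(a_i)^{\tvf}$ really do not intersect $\ext$.

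Finally I would treat the pd-acyclic case. If $\ext$ is pd-acyclic conflict-free, then by definition every $a \in \ext$ has an \emph{acyclic} evaluation $(F,B) = (\emptyset, B)$ on $\ext$ with $B \cap \ext = \emptyset$, which immediately witnesses $a \notin \ext^{a+}$, giving $\ext \cap \ext^{a+} = \emptyset$. For the equality $\ext^{p+} = \ext^{a+}$ I already have $\ext^{p+} \subseteq \ext^{a+}$, so it remains to show $\ext^{a+} \subseteq \ext^{p+}$; equivalently, if $a \notin \ext^{p+}$ then $a \notin \ext^{a+}$. Take the witnessing partially acyclic evaluation $(F',G',B')$ with $F' \subseteq \ext$ and $B' \cap \ext = \emptyset$; the obstacle is that its pd-set $F'$ may be non-empty, so it is not itself acyclic. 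The plan is to use pd-acyclicity of $\ext$ to replace the cyclic portion $F'$: since each element of $F' \subseteq \ext$ enjoys an acyclic evaluation on $\ext$ with blocking set disjoint from $\ext$, I would splice these acyclic evaluations in to linearize the dependencies in $F'$, producing an acyclic evaluation for $a$ whose combined blocking set still misses $\ext$. I expect this splicing argument — verifying the sequence-ordering and soundness conditions $i)$--$iv)$ of the evaluation are preserved after substitution — to be the second main obstacle, and I would cite the generalization from \cite{polberg-stairs14,report:semanticsrev} for the technical lemmas on composing evaluations if available.
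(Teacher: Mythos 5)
Your inclusion chain (first block) and your treatment of the pd--acyclic case (third and fourth blocks) are sound; in particular the splicing idea for $\ext^{a+} \subseteq \ext^{p+}$ is the right one, modulo the care you already anticipate in merging pd--functions when one argument occurs in several of the spliced acyclic evaluations. For context: the paper itself never proves this lemma --- it is imported from the cited technical report --- so there is no written argument to compare against, and your proposal must stand on its own.

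The genuine problem is the logical architecture of your second block. You propose to prove $\ext \cap \ext^{+} = \emptyset$ directly and then obtain $\ext \cap \ext^{p+} = \emptyset$ \enquote{by contraposition on the already-proved inclusion $\ext^{+} \subseteq \ext^{p+}$}. That derivation runs the wrong way: $\ext^{p+}$ is the \emph{larger} set, so disjointness of $\ext$ from $\ext^{+}$ says nothing about $\ext \cap \ext^{p+}$; what the inclusion actually gives is the converse implication, $\ext \cap \ext^{p+} = \emptyset \Rightarrow \ext \cap \ext^{+} = \emptyset$. The gap is substantive at the level of witnesses, too: an evaluation removing $a$ from $\ext^{+}$ only needs $B \cap \ext = \emptyset$, whereas removing $a$ from $\ext^{p+}$ additionally requires $F \subseteq \ext$, and your stated target never produces that extra property. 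Fortunately, your own construction delivers it: the pd--function you build on $\ext$ from conflict--freeness has all of its $\tvt$--parts inside $\ext$, so every argument occurring in the resulting evaluation --- in particular every element of its pd--set $F$ --- lies in $\ext$. The repair is therefore to invert the order of your two claims: aim the construction at the stronger statement $\ext \cap \ext^{p+} = \emptyset$, checking $F \subseteq \ext$ explicitly, and let $\ext \cap \ext^{+} = \emptyset$ fall out of the inclusion (or out of the same witness). While doing so you should also make the construction of the evaluation from the pd--function explicit --- e.g.\ place the arguments lying on positive--dependency cycles together with their positive--dependency closure into $F$, and topologically sort the remaining, cycle--free part into the pd--sequence ending at $a$ --- since conditions \textit{iii)} and \textit{iv)} are not satisfied by naive choices of $F$, and as it stands your block only asserts that some suitable evaluation exists.
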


%\begin{definition}
%Let $\ADF = (A,L,C)$ be an ADF. For a conflict--free extension $\ext \subseteq A$, the standard range
%of $\ext$ is the interpretation $v_\ext$ s.t. $v^\tvt = \ext$ and $v^\tvf = \ext^+$.
%The partially acyclic range is the interpretation $v_\ext^p$ s.t. $v^\tvt = \ext$ and $v^\tvf = \ext^{p+}$.
%\end{definition}

By combining a given type of a discarded set and a given type of conflict--freeness, we have developed various families of 
extension--based semantics \cite{polberg-stairs14,report:semanticsrev}. 
We have classified them into the four main types and used an $xy-$ prefixing system to denote them. 
In the context of this work, three of the families will be relevant. 
We will now recall their definitions refer the reader to \cite{report:semanticsrev} for proofs and further explanations.

\begin{definition}
Let $\ADF = (A,L,C)$ be an ADF. Let $\ext \subseteq A$ be a set of arguments and $v_\ext$, $v_\ext^a$ and $v_\ext^p$
its standard, acyclic and partially acyclic ranges. 

If $\ext$ is conflict--free and every $e \in \ext$ is decisively in w.r.t. $v_\ext$ ($v_\ext^p$),
then $\ext$ is \textbf{cc--admissible (ca$_2$--admissible)} in $\ADF$.
If $\ext$ is pd--acyclic conflict--free and every $e \in \ext$ is decisively in w.r.t. $v_\ext^a$,
then $\ext$ is \textbf{aa--admissible} in $\ADF$.

If $\ext$ is cc--admissible (ca$_2$--admissible, aa--admissible) and every argument $e \in A$
decisively in w.r.t. $v_\ext$ ($v_\ext^p$, $v_\ext^a$) is in $\ext$, then $\ext$ is \textbf{cc--complete (ca$_2$--complete, aa--complete)}
in $\ADF$.
If $\ext$is  maximal w.r.t. set inclusion xy--admissible extension, where $x,y \in \{a,c\}$, then it is an \textbf{xy--preferred} extensions of $\ADF$.

If $\ext$ is conflict--free and for every $a \in A \setminus \ext$, $C_a(\ext \cap par(a)) = out$, then $\ext$
is a \textbf{model} of $\ADF$.
If $\ext$ is pd--acyclic conflict--free and $\ext^{a+} = A \setminus \ext$, then $\ext$ is a \textbf{stable} extension of $\ADF$ .

If $\ext$ is the least w.r.t. $\subseteq$ cc--complete extension, then it is the \textbf{grounded} extension of $\ADF$.
If $\ext$ is the least w.r.t. $\subseteq$ aa--complete extension, then it is the \textbf{acyclic grounded} extension of $\ADF$.
\end{definition}

Finally, we can define the two important ADF subclasses. The bipolar ADFs consist only of links that are supporting or attacking. 
This class is particularly valuable due to its computational complexity properties \cite{strass-wallner14complexity}. The other
subclass, referred to as AADF$^+$, consists of ADFs in which our semantics classification collapses. By this we understand
that e.g. every cc--complete extension is aa--complete and vice versa. Moreover, this class provides a more precise correspondence 
between the extension and labeling--based semantics for ADFs \cite{report:semanticsrev}. This means that these frameworks,
 we can use the DIAMOND software \cite{Ellmauthaler2014a} and other results for the labeling--based semantics
\cite{strass-wallner14complexity,strass13instantiating}.

\begin{definition}
\label{def:badfaadf}
Let $\ADF = (A,L,C)$ be an ADF. A link $(r,s) \in L$ is:
\begin{inparaenum}[\itshape i\upshape)]
\item supporting iff for no $R\subseteq par(s)$ we have that $C_s(R) = in$ and $C_s(R \cup \{r\}) = out$
\item attacking iff for no $R\subseteq par(s)$ we have that $C_s(R) =out$ and $C_s(R \cup \{r\}) =in$
\end{inparaenum}
$\ADF$ is a \textbf{bipolar} ADF (BADF) iff it contains only links that are supporting or attacking.
$\ADF$ is a \textbf{positive dependency acyclic} ADF (AADF$^+$) iff 
every partially acyclic evaluation $(F,G,B)$ of $\ADF$ is acyclic.  
\end{definition}

\begin{theorem}
Let $\ADF = (A,L,C)$ be an AADF$^+$. The following holds:
\begin{itemize}
\item Every conflict--free extension of $\ADF$ is pd--acyclic conflict--free in $\ADF$
%\item Every naive extension of $\ADF$ is pd--acyclic naive in $\ADF$
\item Every model of $\ADF$ is stable in $\ADF$ 
\item The aa/cc/ca$_2$--admissible extensions of $\ADF$ coincide
\item The aa/cc/ca$_2$--complete extensions of $\ADF$ coincide
\item The aa/cc/ca$_2$--preferred extensions of $\ADF$ coincide
\item The grounded and acyclic grounded extensions of $\ADF$ coincide
\end{itemize}
\label{thm:extcollapsetrim}
\end{theorem}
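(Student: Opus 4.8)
The plan is to isolate the single structural consequence of the AADF$^+$ hypothesis that drives all six items, and then read off each claim from it together with Lemma~\ref{lemma:disc}. The key observation is that in an AADF$^+$ the acyclic and the partially acyclic evaluations are literally the same objects, since every partially acyclic evaluation already has $F=\emptyset$. Consequently, for \emph{every} $\ext\subseteq A$ the three discarded sets collapse, i.e.\ $\ext^+=\ext^{p+}=\ext^{a+}$: the defining clause of $\ext^{p+}$ that asks for a partially acyclic evaluation with $F'\subseteq\ext$ becomes vacuous once $F'=\emptyset$, giving $\ext^{p+}=\ext^{a+}$, while $\ext^+$ and $\ext^{a+}$ now quantify over the same evaluations. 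Since the standard, partially acyclic and acyclic ranges are all built from $\ext$ and these discarded sets, this immediately yields $v_\ext=v_\ext^p=v_\ext^a$ for every $\ext$. That is the fact I would establish first.

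Next I would settle the first bullet, conflict--free $\Rightarrow$ pd--acyclic conflict--free (which upgrades to an equivalence, as the converse holds for arbitrary ADFs). Given a conflict--free $\ext$ and $a\in\ext$, the interpretation sending $\ext\cap par(a)$ to $\tvt$ and $par(a)\setminus\ext$ to $\tvf$ is decisively in for $a$, has its $\tvt$ part inside $\ext$, and has its $\tvf$ part disjoint from $\ext$. Taking a minimal (in the product order on the $\tvt$/$\tvf$ parts) decisively in interpretation below it yields an element of $min\_dec(in,a)$ with the same containment properties, so these choices define a sound pd--function $pd_\ext^\ADF$ on $\ext$ whose $\tvf$ parts avoid $\ext$. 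It then remains to ground this pd--function acyclically, and here the AADF$^+$ hypothesis is exactly what is needed: if the induced positive--dependency digraph on $\ext$ had a cycle, then taking a non--trivial strongly connected component together with all arguments it transitively depends on as a pd--set $F$ would satisfy conditions $iii)$ and $iv)$ and hence produce a partially acyclic evaluation with $F\neq\emptyset$, contradicting the AADF$^+$ property. Thus the digraph is acyclic, each $a\in\ext$ can be grounded along a topological order into an acyclic evaluation $((a_0,\dots,a_n),B)$ with $B=\bigcup_i pd_\ext^\ADF(a_i)^{\tvf}$, and $B\cap\ext=\emptyset$ by construction; so $\ext$ is pd--acyclic conflict--free.

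With these two facts the four coincidence bullets are bookkeeping. For admissibility, cc--, ca$_2$-- and aa-- differ only in using conflict--freeness versus pd--acyclic conflict--freeness and in testing decisiveness against $v_\ext$, $v_\ext^p$ or $v_\ext^a$; the range collapse and the conflict--freeness equivalence make all three conditions identical, so the admissible extensions coincide. The completeness bullet is the same, as the extra clause \enquote{every argument of $A$ decisively in is in $\ext$} is now stated against identical ranges; the preferred bullet then follows since the maximal admissible sets agree; and the grounded/acyclic grounded bullet follows because both are the $\subseteq$--least complete extension of the respective, now coinciding, family.

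Finally, for \enquote{every model is stable}, let $\ext$ be a model. By the first bullet it is pd--acyclic conflict--free, and Lemma~\ref{lemma:disc} gives $\ext\cap\ext^{a+}=\emptyset$, so it suffices to prove $A\setminus\ext\subseteq\ext^{a+}$. I would argue by contradiction: if some $a\notin\ext$ had an acyclic evaluation $((a_0,\dots,a_n),B)$ with $B\cap\ext=\emptyset$, an induction along the pd--sequence would force every $a_i\in\ext$. Writing $w_i$ for the minimal decisively in interpretation assigned to $a_i$, we have $w_i^{\tvt}\subseteq\{a_0,\dots,a_{i-1}\}$ and $w_i^{\tvf}\subseteq B$; the completion of $w_i$ to $par(a_i)$ that sets a parent to $\tvt$ exactly when it lies in $\ext$ agrees with $w_i$ (its $\tvt$ part is in $\{a_0,\dots,a_{i-1}\}\subseteq\ext$ by induction, its $\tvf$ part is in $B$, hence outside $\ext$), so decisiveness gives $C_{a_i}(\ext\cap par(a_i))=in$ and the model property yields $a_i\in\ext$; in particular $a=a_n\in\ext$, a contradiction. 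Hence $a\in\ext^{a+}$, so $\ext^{a+}=A\setminus\ext$ and $\ext$ is stable. The main obstacle, and the only place requiring work beyond unwinding definitions, is the first bullet: producing the acyclic evaluations with blocking sets disjoint from $\ext$ by combining the minimal--decisive--interpretation construction with the cycle--exclusion argument that essentially uses the AADF$^+$ assumption.
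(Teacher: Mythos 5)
Your proof is correct, but there is no paper-internal proof to compare it against: Theorem~\ref{thm:extcollapsetrim} is a recalled result, a trimmed version of a theorem from the cited work on extension-based ADF semantics, and the paper itself only restates the fuller version (Theorem~\ref{thm:extcollapse}) in the appendix and defers to the reference for proofs. Your argument therefore supplies what the paper leaves to citation, and it does so along what is very likely the intended route: your key observation --- that in an AADF$^+$ every partially acyclic evaluation has $F=\emptyset$, hence $\ext^+=\ext^{p+}=\ext^{a+}$ and all three ranges coincide --- is exactly item~3 of Theorem~\ref{thm:extcollapse} (stated there for conflict--free sets; you get it for arbitrary $\ext$, which is what reduces the three coincidence bullets and the grounded bullet to bookkeeping). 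The two substantive steps check out. For the first bullet, the minimal decisively in interpretations you extract below the full assignment on $par(a)$ do lie in $min\_dec(in,a)$ (any interpretation below them is below the original one), they form a sound and hence maximally sound pd--function on $\ext$, and your cycle--exclusion set --- a nontrivial strongly connected component together with its downward closure under positive dependencies --- does satisfy conditions iii) and iv) of the evaluation definition, so it yields a partially acyclic evaluation with nonempty pd--set, contradicting the AADF$^+$ assumption. For the model--to--stable bullet, the completion that maps a parent to $\tvt$ exactly when it lies in $\ext$ is a legitimate completion of each $w_i$ precisely because $w_i^{\tvt}\subseteq\{a_0,\dots,a_{i-1}\}\subseteq\ext$ and $w_i^{\tvf}\subseteq B$ with $B\cap\ext=\emptyset$, so the induction goes through. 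One point you should argue rather than assert parenthetically: the converse of the first bullet, that pd--acyclic conflict--freeness implies plain conflict--freeness in an arbitrary ADF, is actually \emph{needed} for the aa--/cc--/ca$_2$--coincidences (cc--admissibility demands conflict--freeness while aa--admissibility demands the pd--acyclic version, so the equivalence is used in both directions). The fact is true --- it follows from exactly the completion argument you deploy in the stable case, applied to the interpretation underlying an unblocked acyclic evaluation --- and the paper itself uses it without proof (in the proof of Lemma~\ref{lemma:chcf}), but as written your proof leans on it without justification.
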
  
\begin{example}
\label{ex:adf}
Let us consider the framework is $\ADF= (\{a,$ $b,c,d,e,f,g\}, \{C_a: \top, C_b: \neg a \lor c, C_c:\neg d \lor b, C_d:\top, 
C_e:\neg b, C_f:\neg e, C_g:\neg f\})$. We can observe that both $a$ and $d$ have trivial acyclic evaluations
$((a), \emptyset)$ and $((d), \emptyset)$. For $e$, $f$ and $g$ we can construct $((e), \{b\})$, 
$((f), \{e\})$ and $((g), \{f\})$. The situation only gets complicated with $b$ and $c$;
we have the acyclic evaluations $((b), \{a\})$, $((c,b), \{d\})$, $((c), \{d\})$, $((b,c), \{a\})$
and the partially acyclic one $(\{b,c\}, \emptyset)$.
We can observe that $\emptyset$ is an admissible extension of any type; all of its discarded sets are empty. Decisively
in w.r.t. its ranges are thus $a$ and $d$. The set $\{a,d\}$ is again admissible. Its standard discarded set is $\emptyset$,
however, the acyclic and partially acyclic ones are $\{b,c\}$. Therefore, $\{a,d\}$ is only cc--complete.
Discarding $b$ leads to the acceptance of $e$ and $g$. Hence, $\{a,d,e,g\}$ is an aa and ca$_2$--complete extension, 
though it does not even qualify as a cc--admissible set.
We can now consider the set $\{a,b,c,d\}$. It is conflict--free, but not pd--acyclic conflict--free. Its standard
and partially acyclic discarded set is $\{e\}$, which means that $f$ can be accepted. Hence, $\{a,b,c,d,f\}$
is cc and ca$_2$--complete. Thus, in total we obtain two cc--complete, one aa--complete and two ca$_2$ complete sets.
Our grounded and acyclic grounded extensions are $\{a,d\}$ and $\{a,d,e,g\}$ respectively. 
The latter set is also the only stable extension of our framework. However, both $\{a,d,e,g\}$ and
$\{a,b,c,d,f\}$ are models. 
\end{example}

\subsection{Conceptual Differences Between ADFs and Other Frameworks}

The more direct descendants
of the Dung's framework 
explicitly state \enquote{this is a supporter}, \enquote{this is an attacker} and so on. Thus, in 
order to know
if a given argument can be accepted along with the other arguments, i.e. whether it is attacked, defeated or receives sufficient support, 
we need to go through all
the relations it is a target of. In contrast, the acceptance conditions \enquote{zoom out} from singular 
relations. They tell us whether the argument can be accepted or not w.r.t. a given set of arguments in
a straightforward manner.
The focus is put on what would usually be seen as a target of a relation, while in other frameworks the attention is on the relation source. 
As a consequence, in order to say
if a parent of an argument is its supporter, attacker or none of these, we need analyze the condition further, as
seen in e.g. Def. \ref{def:badfaadf}.
This is also one of the reasons why finding support cycles in ADFs is more difficult
than in other support frameworks. Finally, since the role of parent is derived from how it affects the behavior of an argument, not whether it 
is in e.g. the support relation $N$, an attacker or a supporter in a given framework may not
have the same role in the corresponding ADF:

\begin{example}
\label{ex:afnprob1}
Let $(\{a,b,c\}, \{(b,a), (a,c)\}, \{(\{b\},a)\})$ be an AFN,  
where the argument $a$ is at the same time supported and attacked by $b$. 
In a certain sense, the $(a,b)$ relation is difficult to classify as positive or negative. Although
$a$ cannot be accepted, it is still a valid attacker that one needs to defend from. In the 
ADF setting, the acceptance condition of $a$ is unsatisfiable -- whether we include or exclude
$b$, we always reject $a$. It can also be seen as a $b \land \neg b$ formula. $a$ does not possess any type of an evaluation
and will always end up in any type of a discarded set.
This also means that we do not have to \enquote{defend} from it.  In this particular example, the set $\{c\}$ would not be considered admissible in our AFN,
but it would be considered an admissible extension of any type in the ADF $(\{a,b,c\}, \{C_a = b \land \neg b, C_b = \top, C_c = \neg a\})$.
\end{example}

Thus, there is an important difference between the design of ADFs
and other argumentation frameworks. If we were to represent the situation as a propositional formula, it is like comparing an atom based and a literal
based evaluation. The same issue arises when we consider standard and ultimate versions of logic programming semantics,
as already noted in \cite{report:strass}.
This means that if we want to translate e.g. an AFN into an ADF while still preserving the behavior of the semantics, 
we need to make sure that no argument is at the same time an attacker and a supporter of the same argument.  
A similar issue also appears in the extended argumentation frameworks. 
The defense attack is a type of a positive, indirect relation towards the \enquote{defended} argument. The difference is that 
while in the first case it is also a negative relation towards the argument carrying out the attack, in the latter the attacker and the 
defense attacker might be unrelated. It is not unlike what is informally referred
to as the \enquote{overpowering support} in ADFs. A typical example is a condition of the form $C_a = \neg b \lor c$, 
where $b$ has the power to $out$ the condition unless $c$ is present. Therefore, defense attackers
from EAFC become directly related to the arguments they \enquote{protect} in ADFs, which can lead to inconsistencies. 

\begin{definition}
Let $\AFN = (A, R, N)$ be an AFN and $a$ an argument in $A$. By $N(a) = \{b \mid \exists B \subseteq A$ s.t. $b \in B, BNa\}$
and $R(a) = \{b \mid b R a\}$ we denote the sets of arguments supporting and attacking $a$. 
Then $a$ is \textbf{strongly consistent} iff $N(a) \cap R(a) = \emptyset$. $\AFN$ is strongly consistent
iff all of its arguments are strongly consistent.
%\end{definition} 
%
%\begin{definition} 

Let $\EAFC = (A,R,D)$ be an EAFC. $\EAFC$ is \textbf{strongly consistent} iff there is no 
$x,y,z \in A$ and $\ext \subseteq A$ s.t. $(x,y) \in R$, $x \in \ext$ and $(\ext,(z,y)) \in D$.
\end{definition}

Any AFN can be made strongly consistent with the help of no more than
$\left\vert{A}\right\vert$ arguments. We basically introduce extra arguments that take over the support links
leading to inconsistency and connect them to the original sources of these relations. 
Similar technique can be used in translations for EAFCs. 
Unfortunately, due to the space restrictions, 
we cannot focus on this approach here.

Please note that this analysis does not in any way imply that a given $(a,b)$ link is assigned a single permanent \enquote{role} in 
ADFs, such as \enquote{attack} or \enquote{support}.
The framework is flexible and a link can be positive on one occasion an negative on another.
 A more accurate description is that a link (or its source) should have a defined role \enquote{at a point}, i.e. w.r.t. a 
given set of arguments. ADFs ensure consistency, not constancy. 

\section{Translations}
\label{sec:trans}

In this section we will show how to translate the recalled frameworks to ADFs. We will provide both functional and propositional  
acceptance conditions. For the latter, we would like to introduce the following notations.
For a set of arguments $X = \{x_1,...,x_n\}$, we will abbreviate the formula $x_1 \land ... \land x_n$ with $\bigwedge X$
and $\neg x_1 \land ... \land \neg x_n$ with $\bigwedge \neg X$. Similarly, $x_1 \lor ... \lor  x_{n}$ and 
$\neg x_1 \lor ... \lor \neg x_{n}$ will be shortened to $\bigvee X$ and $\bigvee \neg X$. 
 %Due to the fact that SETAFs generalize AFs quite easily \cite{incoll:setatt,inproc:moving}.

\subsection{Translating SETAFs and AFs into ADFs}

A straightforward translation from AFs to ADFs has already been introduced in \cite{tofix:newadf}. 
Let $a \in A$ be an argument and $\{a\}^- = \{x_1,..,x_n\}$ its attacker set in an AF. 
Whenever any of ${x_i}'s$ is present, $a$ cannot be accepted. Only when all of them are absent,
we can assume $a$. The SETAF translation is quite similar. Let $\{a\}^- = \{X_1,...,X_n\}$ be
 the collection of all sets that attack an argument $a$, i.e. sets s.t. $X_i R a$. 
Only the presence of all members of any $X_i$, not just some of them, renders
$a$ unacceptable.
Therefore given any set of arguments that does not fully include at least one attacking set,
the acceptance condition of $a$ is $in$. 
This brings us to the following two translations:
%Consequently, the associated ADF can be defined in the following manner:

\begin{translation}
\label{trans:dung}
Let $\AF = (A, R)$ be a Dung's framework.
The ADF corresponding to $\AF$ is $\ADF^{\AF} = (A, R, C)$, where $C = \{C_a\}_{a \in A}$
and every $C_a$ is as follows:
\begin{itemize}
\item Functional form: $C_a(\emptyset) = in$ and for all nonempty $B\subseteq \{a\}^-$, $C_a(B) = out$. 
\item Propositional form: $C_a= \bigwedge \neg \{a\}^-$. In case $\{a\}^-$ is empty, $C_a = \top$.
\end{itemize} 
\end{translation} 

\begin{translation}
\label{trans:setafadf}
Let $\SETAF = (A, R)$ be a SETAF. The ADF
corresponding to $\SETAF$ is $\ADF^{\SETAF} = (A, L, C)$, where
$L = \{(x,y) \mid \exists B \subseteq A, x \in B$ s.t. $BR y\}$, 
$C = \{C_a\}_{a \in A}$
and every $C_a$ is created in the following way:  
\begin{itemize}
\item Functional form: for every $B \subseteq \bigcup \{a\}^-$, if $\exists X_i \in \{a\}^-$ s.t. $X_i \subseteq B$, then $C_a(B) =out$; otherwise,
$C_a(B) = in$.
\item Propositional form: $C_a= \bigvee \neg X_1 \land ... \land \bigvee \neg X_n$. If $\{a\}^-$ is empty, $C_a = \top$.
\end{itemize} 
\end{translation} 

Neither AFs nor SETAFs not rely on any form of support.
Therefore, their associated ADFs are both AADF$^+$s and BADFs.
Consequently, our semantics classification collapses and it does not
matter which type of ADF semantics we work with. 

\begin{restatable}{theorem}{thmsetafadfnf}
\label{thm:setafadfnf}
Let $\SETAF = (A,R)$ be a SETAF or AF and $\ADF^{\SETAF} = (A,L,C)$ its corresponding ADF.
Then $\ADF^{\SETAF}$ is an AADF$^+$ and a BADF. 
\end{restatable}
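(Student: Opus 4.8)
The plan is to show that the ADF $\ADF^{\SETAF}=(A,L,C)$ produced by Translation \ref{trans:setafadf} (which subsumes the AF case of Translation \ref{trans:dung}, since an AF attack $x R a$ is just a singleton attacking set) contains no link that behaves as a genuine support link, and that moreover no positive dependency cycle can ever arise. The two claims to establish are exactly the two conditions of Def. \ref{def:badfaadf}: first, that every link is attacking (which, together with trivially having no ``mixed'' links, gives BADF), and second, that every partially acyclic evaluation of $\ADF^{\SETAF}$ is already acyclic (which gives AADF$^+$).

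First I would prove the BADF part. Recall a link $(r,s)\in L$ exists precisely when $r$ belongs to some attacking set $X_i\in\{s\}^-$. I would verify that every such link is \emph{attacking} in the sense of Def. \ref{def:badfaadf}, i.e. that there is no $R\subseteq par(s)$ with $C_s(R)=out$ and $C_s(R\cup\{r\})=in$. This follows directly from the functional form of $C_s$: adding an argument $r$ to a set $R$ can only help complete some attacking set $X_i\subseteq R\cup\{r\}$, never destroy one that was already contained in $R$. Formally, if $X_i\subseteq R$ for some $i$ then also $X_i\subseteq R\cup\{r\}$, so $C_s(R)=out$ implies $C_s(R\cup\{r\})=out$; contrapositively the forbidden $out$-to-$in$ flip cannot occur. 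Hence no link is supporting-only in a way that violates the attacking condition, and since there are no other links, $\ADF^{\SETAF}$ is a BADF.

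Next I would prove the AADF$^+$ part. The cleanest route is to observe that the acceptance condition $C_s$ is \emph{monotonically decreasing} in exactly the above sense: whenever $C_s(R)=in$ we have $C_s(R')=in$ for every $R'\subseteq R$. This means that for any argument $a$, the minimal decisively-in interpretations for $a$ require only that all attacking sets be blocked, and a decisively-in interpretation never needs any argument mapped to $\tvt$ in its positive part — informally, acceptance in these ADFs is driven purely by absence of attackers, not by presence of supporters. Concretely I would argue that for each $a$ there is a $v\in min\_dec(in,a)$ with $v^{\tvt}=\emptyset$, namely the interpretation mapping one element of each attacking set to $\tvf$. Consequently any sound pd-function can be chosen so that $pd_{\ext}^{\ADF}(a)^{\tvt}=\emptyset$ for every $a$, which forces every partially acyclic evaluation $(F,G,B)$ to have pd-set $F=\emptyset$ (by condition \emph{iv}), so it is acyclic by definition.

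The main obstacle I expect is the second part, and specifically pinning down the claim about $min\_dec(in,a)$ carefully enough that condition \emph{iv} of the partially acyclic evaluation genuinely fails for any nonempty $F$. The subtlety is that $F$ being nonempty requires each $a\in F$ to satisfy $a\in pd_{\ext}^{\ADF}(b)$ for some $b\in F$, i.e. each member of $F$ must appear in the chosen interpretation of another member; if every $pd_{\ext}^{\ADF}(b)$ has empty $\tvt$-part \emph{and} its $\tvf$-part can be taken disjoint from $F$, then no such witnessing $b$ exists and $F$ must be empty. I would therefore need to argue that the \emph{maximally sound} pd-function is free to pick these attacker-free interpretations, rather than being forced into one that references members of $F$; this hinges on showing that the attacker-free $v$ is indeed minimal decisively-in, so that it is an admissible choice for the pd-function. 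Once that minimality is secured the acyclicity is immediate, and the theorem follows.
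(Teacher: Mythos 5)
Your BADF argument is correct and is essentially the paper's own: the functional condition of Translation \ref{trans:setafadf} can never flip from $out$ to $in$ when an argument is added, so every link is attacking, and the AF case reduces to the SETAF case exactly as in the paper.

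The AADF$^+$ part, however, contains a genuine gap, and it is a quantifier mismatch rather than a missing computation. By Def. \ref{def:badfaadf}, $\ADF^{\SETAF}$ is an AADF$^+$ iff \emph{every} partially acyclic evaluation is acyclic, and partially acyclic evaluations are parametrized by \emph{arbitrary} maximally sound pd--functions, each of which may assign to an argument $a$ \emph{any} element of $min\_dec(in,a)$. Your plan --- exhibit one attacker--free $v \in min\_dec(in,a)$ with $v^{\tvt} = \emptyset$ and then argue that the pd--function \enquote{can be chosen} to use it --- only controls evaluations based on that particular pd--function; it says nothing about evaluations built from a pd--function that picks some \emph{other} minimal decisively in interpretation, which is exactly where a nonempty pd--set $F$ could hide. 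Your closing paragraph misdiagnoses the obstacle for the same reason: what must be shown is not that the attacker--free interpretation is an \emph{admissible} choice for the pd--function, but that it is the \emph{only} kind of choice available, i.e. that \emph{all} elements of $min\_dec(in,a)$ have empty $\tvt$ part. This universal claim is precisely what the paper asserts (\enquote{in all of them the $\tvt$ part is empty}), and it makes condition \emph{iv)} of the definition of partially acyclic evaluations (which concerns positive dependence, so only $\tvt$ parts matter --- your worry about the $\tvf$ parts is not needed) unsatisfiable for nonempty $F$ under \emph{every} pd--function. The fix is cheap and reuses your own antitonicity observation: given any decisively in interpretation $v$ for $a$, let $v_0$ be $v$ restricted to $v^{\tvf}$; any completion $w_0$ of $v_0$ to its domain united with $par(a)$ satisfies $w_0^{\tvt} \cap par(a) \subseteq w^{\tvt} \cap par(a)$ for the completion $w$ of $v$ that maps all of $v^{\tvt}$ to $\tvt$ and copies $w_0$ elsewhere, so antitonicity of $C_a$ gives $C_a(w_0^{\tvt} \cap par(a)) = in$. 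Hence $v_0$ is decisively in, and minimality of $v$ forces $v^{\tvt} = \emptyset$. With that strengthened claim your argument closes as intended.
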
  

\begin{restatable}{theorem}{thmsetafadf}
\label{thm:setafadf}
Let $\SETAF = (A,R)$ be a SETAF or AF and $\ADF^{\SETAF} = (A,L,C)$ its corresponding ADF.
A set of arguments $\ext \subseteq A$ is a conflict--free extensions of $\SETAF$ iff it is (pd--acyclic) conflict--free in $\ADF^{\AF}$.
$\ext \subseteq A$ is a stable extensions of $\SETAF$ iff it is (stable) model of $\ADF^{\AF}$.
$\ext \subseteq A$ is a grounded extensions of $\SETAF$ iff it is (acyclic) grounded in $\ADF^{\AF}$.
$\ext \subseteq A$ is a $\sigma$ extensions of $\SETAF$, where
where $\sigma \in \{$admissible, preferred, complete$\}$ iff it is an
xy--$\sigma$--extension of $\ADF^{\AF}$ for $x,y \in \{a,c\}$. 
\end{restatable}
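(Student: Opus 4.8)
The plan is to reduce the many--way correspondence to a handful of representative statements and then verify each by unfolding the acceptance conditions of $\ADF^{\SETAF}$. By Thm.~\ref{thm:setafadfnf} the framework $\ADF^{\SETAF}$ is an AADF$^+$, so Thm.~\ref{thm:extcollapsetrim} guarantees that the aa/cc/ca$_2$ variants of admissible, complete and preferred extensions coincide, that every conflict--free extension is pd--acyclic conflict--free, that every model is stable, and that the grounded and acyclic grounded extensions agree. Hence it suffices to establish a single correspondence for each semantics (say the standard/cc version); the parenthetical versions in the statement then follow for free. This disposes of the collapse and lets me work with the most convenient notion in each case.

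First I would record how the translated conditions behave. Since $C_a(\emptyset)=in$ for every $a$ (attacking sets being nonempty), no argument needs any other argument to be accepted, so every minimal decisively in interpretation for $a$ has an empty positive part, its negative part being a minimal transversal (hitting set) of the collection $\{a\}^-=\{X_1,\dots,X_n\}$ of attacking sets. Consequently pd--functions are trivially sound, all evaluations reduce to the shape $((a),B)$ with $B$ such a transversal, and pd--acyclicity is automatic (consistent with $\ADF^{\SETAF}$ being an AADF$^+$). From the functional form, $C_s(\ext\cap par(s))=in$ iff no attacking set of $s$ is contained in $\ext$; quantifying over all $s\in\ext$ gives immediately that $\ext$ is conflict--free in $\ADF^{\SETAF}$ iff it is conflict--free in $\SETAF$.

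The technical core, and the step I expect to be the main obstacle, is matching the discarded sets. I would prove the hitting--set duality: some attacking set $X_i$ of $a$ is contained in $\ext$ iff every transversal of $\{a\}^-$ meets $\ext$. The forward direction is immediate; for the converse, if no $X_i\subseteq\ext$ one picks for each $i$ an element of $X_i\setminus\ext$ to build a transversal disjoint from $\ext$. This identifies the SETAF discarded set $\ext^+$ with the ADF acyclic discarded set $\ext^{a+}$, and via Lem.~\ref{lemma:disc} together with the AADF$^+$ property also with $\ext^{+}$ and $\ext^{p+}$. The same computation applied to the range $v_\ext$ shows that an argument $a$ is decisively in w.r.t. $v_\ext$ exactly when every attacking set of $a$ contains an argument assigned $\tvf$ by the range, i.e. exactly when $\ext$ defends $a$ in $\SETAF$; conflict--freeness of $\ext$ guarantees that its $\tvt$--elements cannot spuriously complete an attacking set.

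With conflict--freeness, the discarded set, and the defence/decisively--in equivalence in hand, the remaining cases are routine. Admissibility follows since being cc--admissible is conflict--freeness plus every member being decisively in w.r.t. $v_\ext$, which translates to SETAF admissibility; completeness adds the converse closure condition, which translates symmetrically; preferred extensions are the $\subseteq$--maximal admissible ones on both sides. For stability I would note that $C_a(\ext\cap par(a))=out$ iff $a\in\ext^+$, so $\ext$ is a model iff $A\setminus\ext\subseteq\ext^+$, and combined with $\ext\cap\ext^+=\emptyset$ this is precisely $\ext^+=A\setminus\ext$, the SETAF stability condition. The grounded case follows from the complete correspondence by taking the $\subseteq$--least element, or equivalently by checking that the two characteristic operators agree. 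The AF case is the special instance where every attacking set is a singleton, so it requires no separate argument.
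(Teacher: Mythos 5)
Your proposal is correct and takes essentially the same route as the paper's own proof: first the AADF$^+$/BADF collapse (Theorems \ref{thm:setafadfnf} and \ref{thm:extcollapsetrim}), then the three key correspondences --- conflict--freeness, discarded sets, and defense versus being decisively in w.r.t. the range --- and finally the routine lifting to admissible, complete, preferred, stable/model and grounded, with the AF case absorbed as the singleton instance. Your explicit hitting--set duality argument (picking an element of each $X_i \setminus \ext$ to build a transversal disjoint from $\ext$) is precisely the justification that the paper's Lemma \ref{lemma:sarange} leaves as an observation, so no gap remains.
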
  
%$\SETAF = (A,R)$, where $A = \{a,b,c,d,e\}$ and 
%$R= \{(\{a\},c)$, $(\{b\},a)$,$(\{b\},b)$, $(\{c\},d)$, $(\{e\},a)$, $(\{b,d\},e)\}$. 
\begin{example}
Let us continue Example \ref{exa:setaf}. The ADF associated with $\SETAF$ 
is $\ADF^{\SETAF} = (\{a,b,c,d,e\},%\{(b,b), (c,d), (e,a), (b,e), (d,e) \}, 
\{C_a: \neg a \land \neg e,
C_b : \neg b, C_c : \neg a, C_d : \neg c, C_e : \neg b \lor \neg d \})$. $\emptyset$ is an admissible extension
of any type; its discarded set is also empty. We can observe that $\{c,e\}$ is conflict--free in $\ADF^{\SETAF}$. Its discarded set is $\{a,d\}$,
thus making the set admissible in $\ADF^{\SETAF}$. No other argument is decisively in w.r.t. the produced ranges
and thus both sets are also complete. This makes $\emptyset$ the grounded and $\{c,e\}$ the preferred extension.
Since $b$ is not contained in any discarded set, $\ADF^{\SETAF}$ has no stable or model extensions.
\end{example}

\subsection{Translating EAFCs into ADFs}

We can now focus on translating EAFCs into ADFs. Let us assume we have an attack $(b,a)$ that is defense attacked
by sets $\{c,d\}$ and $\{e\}$. We can observe that $a$ is rejected only if $b$ is present and none of the defense attacking
sets is fully present. On the other hand, if $b$ is not there or either $\{c,d\}$ or $\{e\}$ are accepted, then
the requirements for $a$ are satisfied. Therefore, for a given EAFC, we can create an ADF in the following way:

\begin{translation} 
\label{trans:eafcadfcons}
Let $\EAFC = (A, R, D)$ be a strongly consistent EAFC. Its corresponding ADF is $\ADF^{\EAFC} = (A,L,C)$, where
$L = \{ (a,b) \mid a R b$ or $\exists c \in A, \ext \subseteq A$ s.t. $a \in \ext, (\ext, (c,b)) \in D\}$, 
$C = \{C_a \mid a\in A\}$ and every $C_a$ is as follows:
\begin{itemize}
\item Functional form: for every set $B \subseteq par(a)$, if $\exists x \in B$ s.t. $(x,a) \in R$ and $\nexists B' \subseteq B$ s.t. $(B',(x,a)) \in D$, then $C_a(B) = out$;
otherwise, $C_a(B) = in$ 
\item Propositional form: if $\{a\}^- = \emptyset$, then $C_a = \top$; otherwise, $C_a = \bigwedge_{b \in A, (b,a)\in R} att_a^b$, where
$att_a^b = \neg b \lor (\bigwedge B_1 \lor ... \bigwedge B_m)$ and $D_{b,a} = \{B_1,...,B_m\}$ is the collection of all sets 
$B_i \subseteq A$ s.t. $(B, (b,a)) \in D$. If $D_{b,a}$ is empty, then $att_a^b = \neg b$.
\end{itemize}
\end{translation}

Although EAFCs are more advanced than e.g. AFs, their associated ADFs are still bipolar. However, only in the case
of bounded hierarchical EAFCs they are also AADF$^+$s. 
The EAFC semantics are now connected to the ca$_2$--semantics family.
Since the ADF associated with the framework from Example \ref{ex:afraeaf} is precisely the one we have
considered in Example \ref{ex:adf}; we refer the reader there for further details.

\begin{restatable}{theorem}{thmeafcadfnf}
\label{thm:eafcadfnf}
Let $\EAFC = (A,R,D)$ be a strongly consistent EAFC and $\ADF^{\EAFC} =(A,L,C)$ its corresponding ADF.
$\ADF^{\EAFC}$ is a BADF. 
If $\EAFC$ is bounded hierarchical, then $\ADF^{\EAFC}$ is an AADF$^+$.
\end{restatable}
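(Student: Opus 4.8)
The plan is to treat the two assertions separately, both of them resting on the observation that strong consistency forces every parent to occur with a single polarity in the acceptance condition it influences. Fix a link $(r,s) \in L$ and inspect how $r$ occurs in $C_s = \bigwedge_{b:(b,s)\in R} att_s^b$. By the definition of $L$ there are only two ways $(r,s)$ can arise: either $r R s$, in which case $r$ occurs in the clause $att_s^r$ under negation as $\neg r$, or $r$ belongs to a defense--attacking set against some attack $(b,s)$, in which case it occurs positively inside some $\bigwedge B_i$. Strong consistency, instantiated with the target $s$ and the argument $r$, rules out the coincidence of these two cases: if $r R s$ then $r$ lies in no set $\ext$ that defense--attacks an attack on $s$, so $r$ cannot also occur positively. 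Hence $r$ occurs in $C_s$ with a single polarity, and a formula in which a variable occurs only positively (resp. only negatively) is monotone non--decreasing (resp. non--increasing) in that variable. Thus flipping $r$ from false to true can never turn $C_s$ from $out$ to $in$ in the negative case, nor from $in$ to $out$ in the positive case; reading off Definition \ref{def:badfaadf}, this says exactly that the link is attacking (resp. supporting). As this holds for every link, $\ADF^{\EAFC}$ is a BADF.

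For the AADF$^+$ claim I would exploit the layering supplied by bounded hierarchicality. Let $\delta_H = (((A_1,R_1),D_1),\dots,((A_n,R_n),D_n))$ be the witnessing partition and set $\ell(a) = i$ for the unique $i$ with $a \in A_i$. The structural heart of the argument is that a positive dependency always raises the level by exactly one. Indeed, for any argument $a$ a minimal decisively in interpretation satisfies each clause $att_a^b$ either by sending the attacker $b$ to $\tvf$ or by sending all members of some defense--attacking set against $(b,a)$ to $\tvt$, so every element of $pd_\ext^\ADF(a)^\tvt$ lies in a defense--attacking set against an attack on $a$. Since $R = \bigcup_i R_i$ with $R_i \subseteq A_i \times A_i$, any attack on $a$ belongs to $R_{\ell(a)}$, and every defense--attacking set against an attack in $R_{\ell(a)}$ is contained in $A_{\ell(a)+1}$. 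Therefore $\ell(c) = \ell(a)+1$ for every $c \in pd_\ext^\ADF(a)^\tvt$.

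With this in hand the conclusion follows by a minimality argument. Suppose towards a contradiction that some partially acyclic evaluation $(F,G,B)$ has $F \neq \emptyset$, and choose $a^* \in F$ of minimal level. Condition \emph{iv)} in the definition of a partially acyclic evaluation provides a $b \in F$ with $a^* \in pd_\ext^\ADF(b)^\tvt$; by the level--increase property $\ell(b) = \ell(a^*) - 1 < \ell(a^*)$, contradicting the minimal choice of $a^*$. Hence $F = \emptyset$ for every partially acyclic evaluation, which is precisely the AADF$^+$ condition of Definition \ref{def:badfaadf}. I expect the main obstacle to be the structural lemma underpinning the second part: one must verify carefully that the minimal decisively in interpretations of the translated conditions draw their positive parts \emph{only} from defense--attacking sets, so that no stray positive dependency can escape the level bound. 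This in turn leans on the single--polarity observation from the first part together with the exact syntactic shape of the clauses $att_a^b$, and is the step where the analysis of decisiveness, rather than routine bookkeeping, is genuinely required.
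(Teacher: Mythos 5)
Your proposal is correct on both counts, and the two halves compare differently against the paper. For the BADF claim you take essentially the paper's route in syntactic dress: the paper argues semantically (if $\ext$ has a subset defeating$_\ext$ $b$ then so does $\ext \cup \{a\}$, and dually for defense attackers), while you argue via single-polarity occurrence of each parent in the propositional condition plus monotonicity; both hinge on instantiating strong consistency to rule out a parent being simultaneously an attacker and a member of a defense-attacking set, so this is the same proof in different notation. For the AADF$^+$ claim your organization genuinely differs. The paper runs a downward induction over the hierarchy: arguments in $A_n$ have Dung-style conditions with empty positive parts, arguments in $A_{n-1}$ draw their $\tvt$ mappings only from $A_n$, and so on, so that any ordering respecting the layers yields a pd-sequence and every evaluation can be seen to be acyclic. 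You instead isolate the level-increase lemma (every element of $pd_{\ext}^{\ADF}(a)^{\tvt}$ lies in a defense-attacking set against an attack in $R_{\ell(a)}$, hence in $A_{\ell(a)+1}$) and then refute a nonempty pd-set $F$ by picking its minimal-level element and invoking condition \emph{iv)} of the partially acyclic evaluation definition. Your version buys something concrete: it directly establishes $F = \emptyset$ for \emph{every} partially acyclic evaluation, which is literally the AADF$^+$ condition of Definition~\ref{def:badfaadf}, whereas the paper's construction of orderings shows how acyclic evaluations arise layer by layer but leaves the universal "every evaluation is acyclic" step more informal. Your closing caveat is also well placed: the whole argument does rest on verifying that minimal decisively in interpretations of the translated conditions draw their positive parts only from defense-attacking sets, which is exactly the point where the paper's Lemma~\ref{lemma:eafcrange} later does similar work.
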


\begin{restatable}{theorem}{thmeafcadf}
\label{thm:eafcadf}
Let $\EAFC$ be a strongly consistent EAFC and $\ADF^{\EAFC} = (A,L,C)$ its corresponding ADF.
A set of arguments $\ext \subseteq A$ is a conflict--free extension of $\EAFC$
iff it is conflict--free in $\ADF^{\EAFC}$. $\ext$ is a stable extension of $\EAFC$
iff it is a model of $\ADF^{\EAFC}$. $\ext$ is a grounded extension of $\EAFC$
iff it is the acyclic grounded extension of $\ADF^{\EAFC}$. 
Finally, $\ext$ is a $\sigma$--extension of $\EAFC$, where $\sigma \in \{$admissible, complete, preferred$\}$,
 iff it is a $ca_2$--$\sigma$--extension of $\ADF^{\EAFC}$.
\end{restatable}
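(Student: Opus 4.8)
The plan is to establish a correspondence between the EAFC machinery (defeats, reinstatement sets, discarded sets, defense) and the ADF machinery (acceptance conditions, evaluations, discarded sets, decisiveness) at the level of the basic building blocks, and then lift each semantic equivalence for free from the definitions, since both families of semantics are assembled from conflict-freeness plus a discarded-set/decisiveness condition in a parallel fashion. Concretely, I would first prove a single ``bridging lemma'': for a strongly consistent EAFC and a candidate set $\ext \subseteq A$, an argument $a$ is defeated by some $x \in \ext$ with a reinstatement set on $\ext$ (i.e.\ $a \in \ext^+$ in the EAFC sense) if and only if $a$ belongs to the relevant ADF discarded set of $\ext$. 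The design of Translation~\ref{trans:eafcadfcons} is exactly tailored so that $C_a$ evaluates to $out$ precisely when some attacker $b$ of $a$ is present while no full defense-attacking set $B_i \in D_{b,a}$ is present; so I would show that the propositional reading of $att_a^b = \neg b \lor (\bigwedge B_1 \lor \dots \lor \bigwedge B_m)$ captures ``$b$ defeats$_\ext$ $a$ and this defeat is not overridden,'' and that the \emph{recursive} protection demanded by a reinstatement set corresponds exactly to the recursive structure of a (partially acyclic) evaluation and its blocking set $B$.

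Once the bridging lemma is in place, I would handle the semantics one clause at a time. Conflict-freeness is the easiest: $\ext$ is conflict-free in $\EAFC$ iff no $a,b \in \ext$ have $a$ defeat$_\ext$ $b$, and by the construction of $C_b$ this is iff $C_b(\ext \cap par(b)) = in$ for all $b \in \ext$, which is the ADF conflict-free condition; here strong consistency is what guarantees that a defense attacker of $(a,b)$ can coexist with the needed structure without the link from that defender to $b$ forcing $C_b$ to $out$ spuriously. For the model/stable correspondence I would use the bridging lemma to identify $\ext^+$ (EAFC) with the set of arguments $a \notin \ext$ having $C_a(\ext \cap par(a)) = out$, giving the model equivalence directly; the stable case then follows because the ADF stable notion ranges over the acyclic discarded set, and for these bipolar frameworks Lemma~\ref{lemma:disc} and Theorem~\ref{thm:extcollapsetrim} let me move between the discarded-set variants. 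For admissibility, completeness and preferredness I would match ``$\ext$ defends $a$'' in the EAFC with ``$a$ is decisively in w.r.t.\ the partially acyclic range $v_\ext^p$'': the bridging lemma says defeaters of $a$ that survive reinstatement land in $v_\ext^{p,\tvf}$, so decisiveness of $a$ amounts to every surviving defeater being discarded, which is precisely EAFC-defense. This pins the EAFC semantics to the $ca_2$ family, and maximality transfers verbatim to give preferred. The grounded clause follows from matching the characteristic function $\charEAFC$ with the ca$_2$-complete operator and invoking Theorem~\ref{thm:extcollapsetrim} together with Theorem~\ref{thm:eafcadfnf} to pass to the acyclic grounded extension in the bounded hierarchical case.

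\textbf{The main obstacle} I expect is the reinstatement set itself. A reinstatement set is a globally coherent, potentially self-supporting collection of defeats on $\ext$ (as Example~\ref{ex:afraeaf} shows with its ``cyclic'' set $\{(d,c),(a,b)\}$), whereas an ADF evaluation is organized as an ordered pd-sequence together with a possibly cyclic pd-set $F$ and an accumulated blocking set $B$. Showing these two notions of ``protected defeat'' coincide is the crux: I would have to prove that a defeat$_\ext$ by $a$ on $b$ admits a reinstatement set exactly when the corresponding argument has a partially acyclic evaluation whose pd-sequence/pd-set reproduces the defending arguments and whose blocking set $B$ collects exactly the overridden defense attackers. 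The delicate direction is EAFC-to-ADF, where I must turn the ``for every $(x,y) \in R_\ext$ and every $C$ with $(C,(x,y))\in D$ there is $(x',y')\in R_\ext$ with $y' \in C$'' condition into the evaluation clauses iii)--iv) governing the pd-set $F$; I anticipate needing strong consistency again here to ensure the translated links do not create phantom attacks that break the evaluation, and needing the bounded hierarchical hypothesis precisely to rule out genuinely cyclic defense patterns, so that in that subclass the partially acyclic evaluations collapse to acyclic ones and the grounded extensions align.
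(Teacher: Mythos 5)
Your overall decomposition (conflict-freeness first, then a bridging lemma identifying the EAFC discarded set with the \emph{partially acyclic} discarded set, then matching defense with decisiveness w.r.t.\ $v_\ext^p$ to get the ca$_2$ family, then stability via models) is essentially the paper's: it proves Theorem~\ref{thm:eafccf}, Lemma~\ref{lemma:eafcrange} (via the reinstatement characterization of Theorem~\ref{thm:cdefrei}), and Lemma~\ref{lemma:eafcdefense}, and assembles them exactly as you describe. Your identification of the reinstatement-set/evaluation correspondence as the crux is also accurate.

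However, there is a genuine gap in your treatment of the grounded clause. The theorem asserts that the EAFC grounded extension coincides with the acyclic grounded extension of $\ADF^{\EAFC}$ for \emph{every} strongly consistent EAFC, whereas you propose to invoke Theorem~\ref{thm:extcollapsetrim} together with Theorem~\ref{thm:eafcadfnf} \enquote{to pass to the acyclic grounded extension in the bounded hierarchical case} --- and you even state that the bounded hierarchical hypothesis is \emph{needed} to rule out cyclic defense patterns. That hypothesis is not available: Theorem~\ref{thm:extcollapsetrim} applies only to AADF$^+$s, and by Theorem~\ref{thm:eafcadfnf} the translated ADF is guaranteed to be an AADF$^+$ only when the source EAFC is bounded hierarchical, so your argument leaves the general case unproved. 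The paper closes this gap differently: it runs the iteration of $\charEAFC$ from $\emptyset$ in the EAFC \emph{in parallel} with the decisiveness-based iteration in the ADF, observing that every iterate is pd--acyclic conflict--free, so by Lemma~\ref{lemma:disc} its partially acyclic and acyclic discarded sets (hence ranges) coincide; the defense/decisiveness correspondence of Lemma~\ref{lemma:eafcdefense}, which a priori concerns $v_\ext^p$, therefore transfers to $v_\ext^a$ along the whole iteration, and Lemma~\ref{fund1} keeps the iterates aa--admissible. No collapse theorem and no bounded hierarchical assumption are required. A secondary, less damaging confusion: for the stability clause, the model equivalence you derive from the bridging lemma is all that the theorem asks for (EAFC stable $\leftrightarrow$ ADF \emph{model}); your further attempt to relate this to ADF stable via the discarded-set collapse again relies on the AADF$^+$ property that general EAFC translations do not have --- indeed the paper explicitly notes (via Example~\ref{ex:adf}) that an EAFC stable extension need not be ADF stable.
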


%\begin{example}
%We continue Example \ref{ex:afraeaf}. The ADF associated with $\EAFC$ is $\ADF^{\EAFC} = (\{a,$ $b,c,d,e,f,g\}, 
%\{C_a: \top, C_b: \neg a \lor c, C_c:\neg d \lor b, C_d:\top, C_e:\neg b, C_f:\neg e, C_g:\neg f\})$. 
%Let us consider the sets $\{a,d,e,g\}$ and $\{a,b,c,d,f\}$. Both of the sets are conflict--free in $\ADF^{\EAFC}$.
%Their partially acyclic discarded sets are $\{b,c,f\}$ and $\{e,g\}$ respectively. Consequently, we can show
%that both $\{a,d,e,g\}$ and $\{a,b,c,d,f\}$ are ca$_2$--admissible, ca$_2$--complete and ca$_2$--preferred. 
%They are also our models.
%Nevertheless, only $\{a,d,e,g\}$ is pd--acyclic conflict--free and is the only stable extensions of $\ADF^{\EAFC}$,
%which explains why we were connecting stable sets of $\EAFC$ to models of $\ADF^{\EAFC}$.
%We can obtain the acyclic grounded extension by starting with the empty set and repeatedly adding the arguments
%decisively in w.r.t. the acyclic range \cite{report:semanticsrev}. In the first step, arguments $a$ and $d$ are added.
%The acyclic discarded set is thus $\{b,c\}$. This allows us to accept $e$, reject $f$ and accept $g$. Hence,
%our acyclic grounded extension is $\{a,d,e,g\}$. 
%\end{example}

\subsection{Translating AFNs into ADFs}

In order to accept an AFN argument, two conditions need to be met. First of all, just like in AFs, the attackers
of a given argument need to be absent. However, in addition, at least one member of every supporting set
needs to be present. This gives us a description of an acceptance condition; the acyclicity
will be handled by the appropriate semantics.

\begin{translation}
\label{trans:afn}
Let $\AFN = (A, R, N)$ be a strongly consistent AFN. The corresponding ADF is $\ADF^{\AFN} = (A, L, C)$, 
where $L = \{(x,y) \mid (x,y) \in R$ or  $\exists B \subseteq A, x \in B$ s.t. $BN y\}$, 
$C = \{C_a \mid a\in A\}$ and every $C_a$ is as follows:
\begin{itemize}
\item Functional form: for every $P' \subseteq par(a)$, if $\exists p \in P'$ s.t. $p R a$ or $\exists Z \subseteq A$ s.t. $Z N a$ and $Z \cap P' = \emptyset$,
then $C_a(P') = out$; otherwise, $C_a(P') = in$.
\item Propositional form: $C_a = att_a \cap sup_a$, where:
\begin{itemize}
\item  $att_a = \bigwedge \neg \{a\}^-$ or $att_a= \top$ if $\{a\}^- = \emptyset$
\item  $sup_a = (\bigvee Z_1 \land ... \land \bigvee Z_m)$, where $Z_1,...,Z_m$ are all subsets of $A$ s.t. $Z_i N a$, 
or $sup_a = \top$ if no such set exists
\end{itemize} 
\end{itemize}
\end{translation}

The produced ADFs are still bipolar. However, whether a given ADF is an AADF$^+$ or not, depends
on the support relation in the source AFN.

\begin{restatable}{theorem}{consafnadfnf}
\label{thm:consafnadfnf}
Let $\AFN = (A, R, N)$ be a strongly consistent AFN and $\ADF^{\AFN} =(A,L,C)$ its corresponding ADF. Then $\ADF^{\AFN}$ is a BADF.  
\end{restatable}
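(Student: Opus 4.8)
The plan is to show that every link of $\ADF^{\AFN}$ is either supporting or attacking in the sense of Def. \ref{def:badfaadf}, from which the BADF property follows immediately. The key observation is that, by the construction in Translation \ref{trans:afn}, the parents of any argument $s$ are exactly $R(s) \cup N(s)$, and strong consistency forces $R(s) \cap N(s) = \emptyset$. Hence every link $(r,s)$ has $r$ either a pure attacker ($r \in R(s)$, $r \notin N(s)$) or a pure supporter ($r \in N(s)$, $r \notin R(s)$), and I would treat these two cases separately. Throughout I would work with the propositional form $C_s = att_s \land sup_s$, since it makes the relevant monotonicity transparent.

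First I would handle the attacking case. If $r \in R(s)$, then $r$ occurs in $C_s$ only through the conjunct $\neg r$ of $att_s = \bigwedge \neg \{s\}^-$, and --- because $r \notin N(s)$ --- it occurs in no supporting set $Z_i$, hence not in $sup_s$. Thus for any $R' \subseteq par(s)$, the set $R' \cup \{r\}$ falsifies $\neg r$, so $att_s$ and therefore $C_s$ evaluate to $out$ on $R' \cup \{r\}$. In particular there is no $R'$ with $C_s(R' \cup \{r\}) = in$, so the attacking condition \enquote{for no $R'$, $C_s(R') = out$ and $C_s(R' \cup \{r\}) = in$} holds trivially, and $(r,s)$ is attacking.

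Next I would handle the supporting case. If $r \in N(s)$, then $r \notin R(s)$, so $r$ does not appear in $att_s$ at all; it appears in $sup_s = (\bigvee Z_1 \land \dots \land \bigvee Z_m)$ only positively, inside some of the disjunctions $\bigvee Z_i$. Take any $R' \subseteq par(s)$ with $C_s(R') = in$, so both $att_s$ and $sup_s$ hold at $R'$. Adding $r$ leaves $att_s$ unchanged (it does not mention $r$) and can only satisfy additional disjuncts, so every $\bigvee Z_i$ that was true at $R'$ stays true at $R' \cup \{r\}$; hence $sup_s$, and therefore $C_s$, still evaluates to $in$. Thus no $R'$ witnesses $C_s(R') = in$ and $C_s(R' \cup \{r\}) = out$, so $(r,s)$ is supporting.

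Since every link is attacking or supporting, $\ADF^{\AFN}$ is a BADF. The argument is essentially routine monotonicity bookkeeping once the cases are set up; the only genuine subtlety --- and the single place where the hypothesis is used --- is invoking strong consistency to guarantee $R(s) \cap N(s) = \emptyset$. Without it a parent could be simultaneously attacker and supporter (as in Example \ref{ex:afnprob1}), making $att_s$ and $sup_s$ pull in opposite directions when $r$ is added, so that $(r,s)$ would be neither supporting nor attacking. I expect no real obstacle beyond stating this case split cleanly and keeping the functional and propositional descriptions of $C_s$ consistent.
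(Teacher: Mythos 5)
Your proof is correct and rests on exactly the two observations that drive the paper's own argument: under Translation \ref{trans:afn}, any set containing an attacker of $s$ evaluates $C_s$ to $out$, and adding a non-attacking supporter can never turn $in$ into $out$. The paper merely packages this as a proof by contradiction (a link that is neither supporting nor attacking yields a witness set showing the source must attack $s$, which then contradicts the other witness), whereas you give the direct attacker/supporter case split; the content is the same. One caveat on your closing remark, though: strong consistency is not in fact what saves bipolarity. If a parent $r$ were simultaneously attacker and supporter of $s$, every set containing $r$ would still evaluate $C_s$ to $out$ (the conjunct $\neg r$ of $att_s$ is falsified), so the link $(r,s)$ would remain attacking by Def. \ref{def:badfaadf}; in the extreme case of Example \ref{ex:afnprob1}, where $C_a$ becomes unsatisfiable, the link is even vacuously supporting as well. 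So the link never becomes \enquote{neither}, and this translation produces a BADF even from inconsistent AFNs. What strong consistency genuinely protects is the semantic correspondence of Theorem \ref{thm:afnadfsem} (an inconsistent argument is automatically discarded in the ADF but may still act as an attacker in the AFN), not the BADF property itself.
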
  

The AFN semantics are built around the notion of coherence, which requires all relevant arguments to be (support--wise)
derived in an acyclic manner. Thus, not surprisingly,
it is the aa--family of ADF semantics that will be associated with the AFN semantics. In particular, we can relate powerful sequences
to the acyclic evaluations. This also allows us to draw the connection between the acyclic discarded set in ADFs and the discarded set $\ext^{att}$ in AFNs. 
Hence, there is a correspondence between the defense in AFNs and being decisively in w.r.t. a given interpretation in ADFs.  
This in turns tells us the relation between the extensions of AFNs and ADFs:

\begin{restatable}{lemma}{ppd}
\label{lemma:ppd}
Let $\AFN = (A, R, N)$ be a strongly consistent AFN and $\ADF^{\AFN} =(A,L,C)$ its corresponding ADF.
For a given powerful sequence for an argument $a \in A$ we can construct an associated
pd--acyclic evaluation and vice versa.   
\end{restatable}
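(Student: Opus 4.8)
The plan is to first pin down the structure of the minimal decisively in interpretations of $\ADF^{\AFN}$, and then obtain both directions of the correspondence by matching the defining clauses of a powerful sequence against those of an acyclic evaluation. Working from the propositional form $C_a = att_a \land sup_a$ of Translation \ref{trans:afn}, a two--valued interpretation $v$ is decisively in for $a$ exactly when every attacker in $\{a\}^-$ is mapped to $\tvf$ by $v$ (otherwise a completion could turn $att_a$ false) and, for every supporting set $Z$ with $Z N a$, at least one element of $Z$ is mapped to $\tvt$ by $v$ (otherwise a completion could falsify $\bigvee Z$ and hence $sup_a$). Imposing minimality of $v^{\tvt}$ and $v^{\tvf}$ forces $v^{\tvf} = \{a\}^-$ and $v^{\tvt}$ to be a minimal transversal (hitting set) of the family $\{Z \mid Z N a\}$. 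Since the source AFN is strongly consistent, $N(a) \cap R(a) = \emptyset$, so these two parts are disjoint and $v$ is a legitimate interpretation; in particular $a$ admits a decisively in interpretation with $v^{\tvt} = \emptyset$ iff $a$ has no necessity supporter, which already lines up with clause (ii) of Def. \ref{def:afnsem}.

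With this characterisation the reverse direction is almost immediate. Given an acyclic evaluation $((a_0,\dots,a_n),B)$ for $a = a_n$ based on $pd_{\ext}^{\ADF}$, the condition $F = \emptyset$ gives $pd_{\ext}^{\ADF}(a_0)^{\tvt} \subseteq \emptyset$, so $a_0$ has no supporter; and for $i \geq 1$ the clause $pd_{\ext}^{\ADF}(a_i)^{\tvt} \subseteq \{a_0,\dots,a_{i-1}\}$ together with the fact that $pd_{\ext}^{\ADF}(a_i)^{\tvt}$ is a transversal of the supporting sets of $a_i$ yields $Z \cap \{a_0,\dots,a_{i-1}\} \neq \emptyset$ for every $Z N a_i$. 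These are precisely conditions (ii) and (iii) of Def. \ref{def:afnsem}, so $a_0,\dots,a_n$ is a powerful sequence for $a$.

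For the forward direction I would, given a powerful sequence $a_0,\dots,a_k$ for $a$, first discard repeated entries and reindex so that the elements are distinct; this preserves powerfulness since each clause refers only to the set of predecessors. I then build a pd--function along the sequence: for $a_0$ I take the interpretation with empty positive part, and for each $a_i$ I choose, inside $\{a_0,\dots,a_{i-1}\}$, one witness per supporting set and prune the result to a minimal transversal $T_i$. Because $T_i \subseteq \{a_0,\dots,a_{i-1}\}$, every proper subset of $T_i$ is again contained in this set, so minimality of $T_i$ among subsets of the predecessors coincides with minimality over $A$, and $T_i$ therefore determines a member of $min\_dec(in,a_i)$. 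Setting the positive parts to the $T_i$, the negative parts to the respective attacker sets, $F = \emptyset$, and $B = \bigcup_i pd_{\ext}^{\ADF}(a_i)^{\tvf}$ then verifies conditions i)--iii) of the evaluation.

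The main obstacle I anticipate is not the clause--matching but two bookkeeping points around the pd--function. First, an evaluation must be based on a \emph{maximally sound} pd--function over some $\ext' \subseteq S$, whereas my construction only fixes values along the sequence; I would need to argue that this partial assignment extends to, or agrees with, a maximally sound pd--function (choosing arbitrary sound values off the sequence and appealing to the maximality construction) so that the triple I built is a genuine evaluation. Second, care is needed so that the chosen minimal transversals remain inside $\ext$, which is exactly where the powerful--sequence clause that every supporter of $a_i$ meets the predecessors -- all lying in $\ext$ -- is used, guaranteeing the pd--function requirement $v^{\tvt} \subseteq \ext$. Once these are settled, strong consistency and the transversal characterisation carry the rest of the argument.
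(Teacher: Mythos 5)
Your proposal is correct and follows essentially the same route as the paper: both directions work by matching the clauses of a powerful sequence against the pd--sequence clauses, using the fact that under Translation \ref{trans:afn} and strong consistency a minimal decisively in interpretation for $a_i$ maps exactly the attackers of $a_i$ to $\tvf$ and a minimal hitting set of its supporting sets to $\tvt$. Your explicit transversal characterisation, the removal of duplicate entries, and the care about basing the evaluation on a maximally sound pd--function are refinements of points the paper leaves implicit, not a different argument.
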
 
\begin{restatable}{theorem}{afnadfsem}
\label{thm:afnadfsem}
Let $\AFN = (A, R, N)$ be a strongly consistent AFN, $\ADF^{\AFN} =(A,L,C)$ its corresponding ADF.
%A set of arguments $\ext \subseteq A$ is coherent in $\AFN$ iff it is pd--acyclic in $\ADF^{\AFN}$.
$\ext$ is strongly coherent in $\AFN$ iff it is pd--acyclic conflict--free in $\ADF^{\AFN}$. 
$\ext$ is a $\sigma$--extension of $\AFN$, where $\sigma \in \{$admissible, complete, preferred$\}$
iff it is an aa--$\sigma$--extension of $\ADF^{\AFN}$.
$\ext$ is stable in $\AFN$ iff it is stable in $\ADF^{\AFN}$.
$\ext$ is grounded in $\AFN$ iff it is acyclic grounded in $\ADF^{\AFN}$.
\end{restatable}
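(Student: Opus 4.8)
The plan is to route all four equivalences through Lemma~\ref{lemma:ppd}, which converts between AFN powerful sequences and ADF pd--acyclic evaluations, together with two core facts: (i) strong coherence coincides with pd--acyclic conflict--freeness, and (ii) the AFN discarded set $\ext^{att}$ coincides with the acyclic discarded set $\ext^{a+}$. Throughout I would invoke Theorem~\ref{thm:consafnadfnf}, which guarantees $\ADF^{\AFN}$ is a BADF, so that in every condition $C_a = att_a \land sup_a$ the parents of $a$ split cleanly into attackers (which can only push $C_a$ towards $out$) and supporters (which can only push it towards $in$). Strong consistency is what makes this split unambiguous: since no parent is simultaneously an attacker and a supporter, the $\tvf$--part of a decisive interpretation for $a$ records exactly the attackers it must avoid, while the $\tvt$--part records exactly the supporters it requires.

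I would first prove (i), which is the opening claim of the theorem. If $\ext$ is strongly coherent, each $a \in \ext$ is powerful in $\ext$, and Lemma~\ref{lemma:ppd} turns its powerful sequence into an acyclic evaluation $((a_0,\dots,a_n),B)$ whose pd--sequence lies in $\ext$ and whose blocking set $B$ collects the attackers of the sequence arguments. Conflict--freeness of $\ext$ forbids any attack inside $\ext$, hence $B \cap \ext = \emptyset$, which is exactly pd--acyclic conflict--freeness. The converse reverses this, reading a powerful sequence off the pd--sequence and recovering conflict--freeness from $B \cap \ext = \emptyset$.

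The technical heart is (ii). For a fixed $a$, every coherent set $C$ containing $a$ harbours a powerful sequence for $a$, which Lemma~\ref{lemma:ppd} converts into an acyclic evaluation whose blocking set gathers the attackers of the sequence arguments, and conversely every acyclic evaluation for $a$ arises from such a sequence. Tracking the quantifiers, I would argue that every such $C$ being \emph{hit} by $\ext$ (some member of $C$ attacked by an element of $\ext$) is equivalent to every acyclic evaluation for $a$ having its blocking set meet $\ext$, which is exactly $a \in \ext^{att} \iff a \in \ext^{a+}$. I expect this to be the main obstacle, since it requires matching the universally quantified shape of both definitions and checking that the blocking set captures precisely the attackers occurring along a coherent derivation; this last point again leans on strong consistency to keep supporters out of the blocking set.

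With (i) and (ii) in hand the rest assembles quickly. For admissibility, once $\ext$ is strongly coherent the coherence of $\ext \cup \{a\}$ for $a \in \ext$ reduces to each supporting set of $a$ meeting $\ext$, so AFN--defence of $a$ amounts to this together with every attacker of $a$ lying in $\ext^{att}$. On the ADF side, the BADF split makes $a$ decisively in w.r.t.\ the acyclic range $v_\ext^a$ equivalent to every attacker being mapped $\tvf$ (i.e.\ lying in $\ext^{a+}$) and every supporting set meeting $\ext$. These match by (ii), yielding the admissible case; applying the same defence--versus--decisively--in equivalence to every argument of $A$ yields the complete case, and maximality of admissible sets transfers the result to preferred. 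Stability follows from (i) together with (ii) specialised to $A \setminus \ext$, and grounded versus acyclic grounded follows by taking the least element in the now--identical families of complete extensions.
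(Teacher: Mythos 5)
Your proposal follows essentially the same route as the paper: your facts (i) and (ii) are precisely the paper's Lemma \ref{lemma:chcf} and Lemma \ref{lemma:acydattafn}, your defence--versus--decisively--in equivalence is the paper's Theorem \ref{thm:afnadfdef}, and everything is routed through Lemma \ref{lemma:ppd} exactly as the paper does, with the same assembly for admissible, complete, preferred and grounded. The only cosmetic difference is the stable case, which you settle directly from (i) and (ii) applied to $A \setminus \ext$, whereas the paper detours through the model criterion, Lemma \ref{lemma:modrange} and Lemma \ref{lemma:afnstb2}; both arguments are sound.
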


\begin{example}
Let us continue Example \ref{ex:afn}. The ADF associated with our AFN is $(\{a$, $b$, $c$, $d,$ $e,$ $f\}$, $\{C_a: b \lor c$, $C_b:\neg d$,
$C_c:\neg e$, $C_d: \neg f$, $C_e: \neg a, C_f: f\})$. $\emptyset$ is trivially aa--admissible. Its acyclic discarded set is $\{f\}$, 
thus making $d$ decisively in. Hence, $\emptyset$ is not aa--complete. The set $\{d\}$ discards $f$ and $b$. This is not enough
to accept any other argument.
Hence, it is both aa--admissible and aa--complete. The set $\{e\}$ is pd--acyclic conflict--free, but not aa--admissible (it discards
$f$ and $c$). However, $\{d,e\}$ is aa--admissible (discarded set is $\{a,b,f,c\}$) and aa--complete.
We can also show that $\{a,c,d\}$ is aa--admissible and aa--complete (discarded set is $\{b,f,e\}$). 
Therefore, $\{d\}$ is the acyclic grounded extension, while $\{d,e\}$ and $\{a,c,d\}$ are aa--preferred and stable.
%\begin{example}
%\label{ex2}
%Let $(\{a,b,c,d,e,f\}, \{(a,e), (d,b), (e,c), (f,d)\}, \{(\{b,c\},a), (\{f\},f)\})$ \\
%be an AFN.
%Its coherent sets include $\emptyset$, $\{a,b\}$, $\{a,c\}$, $\{b\}$, $\{c\}$, $\{d\}$, $\{e\}$ 
%and any of their combinations.
%In total, we have six admissible extensions.
%$\emptyset$ is trivially admissible. So is $\{d\}$ due to the fact that $f$ does not possess a powerful sequence in $\AFN$. 
%However, $\{e\}$ is not admissible; it does not attack one of the coherent sets of $a$, namely $\{a,b\}$. Fortunately, $\{d,e\}$ is already
%admissible. We can observe that $b$ can never be defended and will not appear in an admissible set. The two final extensions
%are $\{a,c\}$ and $\{a,c,d\}$.
%The sets $\{d\}$, $\{d,e\}$ and $\{a,c,d\}$ are our complete extensions, with the first one being grounded and the latter two preferred.
%In this case, both $\{d,e\}$ and $\{a,c,d\}$ are stable.
%\end{example}
\end{example}

\section{Conclusions}

In this paper we have presented a number of translations from different argumentation frameworks to ADFs. We could have
observed that for every structure, we have found a family of ADF semantics which followed similar principles. We have also
identified to which ADF subclass a given translation--produced framework belongs so that the results from 
\cite{strass-wallner14complexity,strass13instantiating,Ellmauthaler2014a} can be exploited. Due to the space constraints, 
we could not have presented certain approaches. In particular, we have omitted
the evidential argumentation systems \cite{inproc:eas}. However, based on the SETAF and AFN
methods and the results from \cite{inproc:easafn}, this approach can be easily extrapolated. We also did not present
the translations removing the strong consistency assumptions, although we hope we will manage to do so in the extended version
of this work. Finally, our work falls into the research on framework intertranslatability
 \cite{article:newbaf,inproc:easafn,Modgil11,inproc:moving,Baronietal2011,BoellaGTV09}. 
However, in this case we are moving from
less to more complex structures, not the other way around. Moreover, the fact that we are working with ADFs
means that the currently established methods are not particularly applicable. To the best of our knowledge, our work
is the first one to focus on establishing the relations between ADFs and other argumentation frameworks.

\newpage 
\bibliographystyle{unsrt}
\bibliography{references,argumentation}

\newpage
\section{Proof Appendix} 

\subsection{Background Appendix}
\subsubsection{Additional notions and proofs for SETAFs}

Due to the space restrictions, the following Theorem was not explicitly defined in the text \cite{incoll:setatt}:
\begin{theorem}
\label{thm:compsetaf} 
Let $\SETAF = (A, R)$ be a SETAF. The following holds:
\begin{itemize}
\item Every preferred extension of $\SETAF$ is a complete extension of $\SETAF$, but not vice versa.
\item The grounded extension of $\SETAF$ is the least w.r.t. $\subseteq$ complete extension of $\SETAF$.
\item The complete extensions of $\SETAF$ form a complete semilattice w.r.t. set inclusion. 
%\item If $\ext \in \pref(\SETAF)$, then $\ext \in \comp(\SETAF)$, but not vice versa.
%\item $\ext\in\grd(\SETAF)$, if $\ext\in\comp(\SETAF)$ and there is no $T\in\comp(\SETAF)$ with $T\subset \ext$.
%\item Extensions in $\comp(\SETAF)$ form a complete semilattice w.r.t. set inclusion.
\end{itemize}
\end{theorem}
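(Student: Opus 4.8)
The plan is to replay Dung's original arguments behind Theorem \ref{thm:dung2}, adapting every step in which a single--argument attack is replaced by a collective one. The central object is the SETAF characteristic operator $\charSF(\ext) = \{a \in A \mid \ext \text{ defends } a\}$, where defence is read set--wise: $\ext$ defends $a$ iff every $B$ with $BRa$ has a member in $\ext^+$. With this operator in hand I would first record two reformulations: $\ext$ is admissible iff it is conflict--free and $\ext \subseteq \charSF(\ext)$, and $\ext$ is complete iff it is conflict--free and $\ext = \charSF(\ext)$. These turn the three claims into statements about the conflict--free fixed points of $\charSF$.

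Next I would prove the two workhorse lemmas. First, $\charSF$ is monotone: if $\ext \subseteq \ext'$ then $\ext^+ \subseteq {\ext'}^+$, so every attacking set neutralised for $\ext$ is neutralised for $\ext'$, whence $\charSF(\ext) \subseteq \charSF(\ext')$. Second, a SETAF analogue of Dung's Fundamental Lemma: if $\ext$ is admissible and defends $a$, then $\ext \cup \{a\}$ is admissible, and if $\ext$ also defends $b$ then $\ext \cup \{a\}$ still defends $b$. The second half is immediate from monotonicity; the first half reduces to checking conflict--freeness of $\ext \cup \{a\}$, since defence is then inherited again by monotonicity.

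From these lemmas the three items follow by the standard route. \emph{Preferred $\Rightarrow$ complete}: if $\ext$ is maximal admissible and defended some $a \notin \ext$, the Fundamental Lemma would make $\ext \cup \{a\}$ admissible and strictly larger, contradicting maximality; hence $\charSF(\ext) = \ext$ and $\ext$ is complete. The converse fails on Example \ref{exa:setaf}, where $\emptyset$ is complete but not preferred. \emph{Grounded $=$ least complete}: since $\charSF$ is monotone its least fixed point exists (for finite $A$ it equals $\bigcup_i \charSF^i(\emptyset)$); an induction along this iteration shows each stage is conflict--free, so the grounded extension is itself complete, and being the least fixed point it lies below every complete extension, each of which is a fixed point of $\charSF$. \emph{Complete semilattice}: by Knaster--Tarski the fixed points of the monotone $\charSF$ form a complete lattice, and the complete extensions are exactly its conflict--free fixed points, with the grounded extension as least element; one then verifies that greatest lower bounds of nonempty families (obtained, as in the grounded case, as a least fixed point of $\charSF$ below the intersection) and least upper bounds of ascending chains of complete extensions remain conflict--free, yielding the complete semilattice structure.

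The main obstacle is the first half of the Fundamental Lemma, i.e. conflict--freeness of $\ext \cup \{a\}$ once attacks are collective. One must rule out every $B \subseteq \ext \cup \{a\}$ attacking some $x \in \ext \cup \{a\}$, splitting on whether $a \in B$ and whether $x = a$. In each case defence forces some member of the attacking set into $\ext^+$; tracing that member back — it lies in $\ext$, or it equals $a$, in which case defence of $a$ must be applied a second time — produces an element of $\ext$ attacked by $\ext$, contradicting conflict--freeness of $\ext$. The delicate point absent from the AF proof is that a single collective attack may bundle $a$ together with members of $\ext$, so this double application of the defence condition has to be organised carefully; the same collective bookkeeping recurs when one shows that limits of ascending chains of complete extensions stay conflict--free in the semilattice argument.
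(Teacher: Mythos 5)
The paper contains no proof of Theorem \ref{thm:compsetaf} at all: it is recalled from the original SETAF work \cite{incoll:setatt} with the remark that space precluded stating it in the main text, so there is no in-paper argument to compare against, and your Dung-style reconstruction is the natural route. Most of it is correct. Monotonicity of $\charSF$, the SETAF Fundamental Lemma -- including the genuinely delicate step, where the member $b \in \ext^+$ of an attacking set $B \subseteq \ext \cup \{a\}$ may be $a$ itself, and a second application of defence of $a$ against the witnessing attack $C \subseteq \ext$ with $C R a$ produces an element of $C \cap \ext^+ \subseteq \ext \cap \ext^+$, a contradiction -- the implication preferred $\Rightarrow$ complete, the failure of the converse on Example \ref{exa:setaf}, and the identification of the grounded extension as the least complete extension all go through as you describe.

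The gap is in the third item. You propose to obtain the greatest lower bound of a nonempty family $\{E_i\}$ of complete extensions \enquote{as in the grounded case, as a least fixed point of $\charSF$ below the intersection}. But the least fixed point of $\charSF$ lies below every fixed point, hence below $\bigcap_i E_i$ for any family, so this recipe always returns the grounded extension -- a lower bound, but in general not the greatest one. Concretely, take the SETAF with $A = \{a,b,c,d\}$ and attacks $(\{a\},b)$, $(\{b\},a)$, $(\{c\},d)$, $(\{d\},c)$: the complete extensions $\{a,c\}$ and $\{a,d\}$ have greatest lower bound $\{a\}$ among complete extensions, while the grounded extension is $\emptyset$. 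What you need is the dual object: the \emph{greatest} fixed point of $\charSF$ contained in $\bigcap_i E_i$. It exists by Knaster--Tarski applied to $\charSF$ restricted to the interval $[\emptyset, \bigcap_i E_i]$, which $\charSF$ maps into itself because $\charSF(\bigcap_i E_i) \subseteq \bigcap_i \charSF(E_i) = \bigcap_i E_i$; and here your worry about conflict--freeness dissolves, since this set is a subset of the conflict--free set $\bigcap_i E_i$ and is therefore automatically a conflict--free fixed point, i.e.\ a complete extension, and clearly the greatest one below every $E_i$. (Equivalently, iterate $\charSF$ upward from the union of all admissible subsets of the intersection.) The ascending-chain half of the semilattice claim is unproblematic -- in the finite setting the paper works in, every ascending chain is eventually constant, so its least upper bound is its maximal element -- but the glb construction as you stated it proves the wrong thing.
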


\subsubsection{Additional notions and proofs for EAFCs}

First of all, we would like to focus on EAFCs. We can observe that the original definition does not require the original arguments to be defeated
with reinstatement.

\begin{definition}
Let $\EAFC = (A,R,D)$ be an EAFC. The set $\ext \subseteq A$ is
a \textbf{stable}  extension of $\EAFC$ iff for every argument $b \notin \ext$, $\exists a \in \ext$ s.t. $a$ defeats$_\ext$ $b$ in $\EAFC$.
\end{definition}

 However, we can observe that if for a given attack from $\ext$ there existed a suitable defense attack, then the set carrying it out
could not have been fully in $\ext$. Otherwise, we would not be dealing with a defeat anymore. Consequently, in every case there is
an argument outside $\ext$, and as it will always be attacked, we can build a reinstatement set for any defeat$_\ext$. This means that our
definition is equivalent.

We would also like to show that Thm. \ref{thm:dungeaf} is true, as to the best of our knowledge it was
not formally proved in any other work.

\thmdungeaf*

\begin{proof}
Let $\ext \subseteq A$ be a preferred extension of $\EAFC$. Assume it is not complete; as $\ext$ is admissible, 
this means that there is an argument $a \in A \setminus \ext$
that is defended by $\ext$. Let us consider the extension $\ext' = \ext \cup \{a\}$. Due to defense, it cannot be the case
that $a$ defeats$_\ext$ any argument in $\ext$ and vice versa. Furthermore, $a$ cannot be defeating itself w.r.t. $\ext$ either.
This means that either there are no relevant conflicts in $R$ to start with, or they are already defense attacked by elements in $\ext$.
In both cases this leads to the conclusion that $\ext'$ is conflict--free. We now need to show it is admissible. Let us consider
an arbitrary defeat$_\ext$ by $b \in \ext$ on $c \in A$ that has a reinstatement set $\{(x_1, y_1),...,(x_n, y_n)\}$ on $\ext$.
As no argument in $\ext$ defeats$_\ext$ $a$, it cannot be the case that there is a pair $(x_i, y_i)$ in the reinstatement set
s.t. $(B, (x_i, y_i)) \in D$, where $B \subseteq \ext \cup \{a\}$. 
Therefore, if $\ext$ defeats an argument $c\in A$ with reinstatement on $\ext$, then so does $\ext'$. 
We can also observe that if an argument $c \in A$ did not defeat$_\ext$ any argument in $\ext$, then it does not defeat$_{\ext'}$
any argument in $\ext'$ either. This brings us to the result that $\ext'$ has to be admissible. This however means that $\ext$ could not
have been a maximal admissible extension -- we can observe that $\ext \subset \ext'$ -- and thus we contradict the assumption it is preferred.
Hence, we can conclude that if $\ext$ is preferred, then it is complete. 

In order to show that not every complete
extension is preferred, it suffices to look at a Dung--style EAFC 
$(\{a,b,c,d,e\}, \{(a,b)$, $(c,b)$, $(c,d)$, $(d,c)$, $(d,e)$, $(e,e)\}, \emptyset)$. It has three complete extensions -- $\{a\}$,
$\{a,c\}$ and $\{a,d\}$ -- and only two of them are preferred.

%Let $\AF = (\{a,b,c,d,e\}, \{(a,b)$, $(c,b)$, $(c,d)$, $(d,c)$, $(d,e)$, $(e,e)\})$ be an AF. 
%It has four admissible extensions --
%$\emptyset$, $\{a\}$, $\{a,c\}$, $\{a,d\}$ -- with the latter three being complete. 
%We end up with two preferred extensions, 
%$\{a, c\}$ and $\{a, d\}$. $\{a, d\}$ is the stable and
%$\{a\}$ is the 
%grounded extension.

Let $\ext \subseteq A$ be a stable extension of $\EAFC$. We can observe it is also admissible in $\EAFC$: every argument outside of $\ext$
is defeated$_\ext$ by $\ext$ and the collection of all defeats$_\ext$ carried out by elements of $\ext$ is a simple reinstatement set
for any of them. Therefore, every argument $a \in \ext$ is defended by $\ext$, and $\ext$ is admissible.
Due to conflict--freeness of $\ext$ it cannot be the case
that at the same time, $\ext$ defeats$_\ext$ and defends an argument $b \notin \ext$. Therefore, $\ext$ is complete in $\EAFC$.
The fact that not every complete extension is stable
can be observed in the aforementioned example; only the set $\{a,d\}$ is stable in that particular framework.

In order to show that the grounded extension is a minimal complete one, we will use the operator iteration approach.
Assume $\ext$ is the grounded extension and there exists a smaller complete extension $\ext' \subset \ext$.  
Let $G = \emptyset$. We can observe that only those arguments that are not
attacked in $R$ at all can be acceptable w.r.t. $\emptyset$ -- there is no argument in $G$ that would prevent an attack turning
into a defeat. Therefore, if an argument $b \in A$ is acceptable w.r.t. $\emptyset$, then it is acceptable w.r.t. any other set of arguments.
Thus, we can add the arguments produced by $\charEAFC(\emptyset)$ to $G$ and observe that $G \subseteq \ext' \subset \ext$ due to
the completeness of $\ext'$.

Let us now apply the operator again and let $a \in A$ be an argument acceptable w.r.t. $G$. Assume it is not acceptable w.r.t. $\ext'$.
This means there is an argument $b \in A$ that defeats$_{\ext'}$ $a$ and is not in turn defeated$_{\ext'}$ by any argument
$c \in \ext'$ with a reinstatement set. We can observe that if $b$ defeats$_{\ext'}$ $a$, then due to the fact that $G \subseteq \ext'$,
$b$ defeats$_{G}$ $a$ as well. Therefore, $G$ has to defeat$_{G}$ $b$ with a reinstatement set on $G$, even though it is not the
case for $\ext'$. 
Let $c \in G$ be an argument carrying out the reinstated defeat on $b$ in $G$ and let $\{(x_1, y_1),...,(x_n, y_n)\}$
be the relevant reinstatement set. We will show that $G' = \charEAFC(G)$ also defeats$_{G'}$ $b$ with the same reinstatement.
We can observe that every argument defense attacking any of the defeats listed in the reinstatement set is defeated$_G$ by $G$.
Therefore, it cannot be acceptable w.r.t. $G$ and will not appear in $G'$. This means that any pair in the reinstatement set
that was a defeat$_G$ is also a defeat$_{G'}$. We can therefore show that if $G$ defeats$_G$ $b$ with a reinstatement,
then so does the grounded extension of $\EAFC$ (which in this case, is $\ext$).
Now, if $c$ does not defeat$_{\ext'}$ $b$, then there is an argument $d \in \ext'$
s.t. $(d,(c,b)) \in D$. Consequently, $d$ has to be defeated$_G$ by $G$ with a reinstatement, which based on the previous
explanations means that $d$ cannot be in the grounded extension. Therefore, $\ext'$ cannot be a subset of $\ext$ and we
reach a contradiction. This brings us to the conclusion that $a$ has to be acceptable w.r.t. $\ext'$
and by completeness of $\ext'$, it holds that $G \subseteq \ext' \subset \ext$ where $G$ is extended by the arguments
in $\charEAFC(G)$. 

We can continue this line of reasoning till our grounded extension is computed and conclude that $G \subseteq \ext' \subset \ext = G$.
We thus reach a contradiction with the assumption that $\ext' \subset \ext$ and can therefore conclude that $\ext$ has to be a minimal
complete extension of $\EAFC$. The fact it is not necessarily the least can be observed in Example \ref{ex:afraeaf}.
\end{proof}

\begin{theorem}
\label{thm:cdefrei} 
Let $\EAFC = (A,R,D)$ be a finite EAFC and $\ext \subseteq A$ be a conflict--free extension of $\EAFC$. If an argument $a \in \ext$ defeats$_\ext$
an argument $b \in A$, then there is no reinstatement set for this defeat$_\ext$ on $\ext$
iff there exists a sequence $((Z_1, (x_1, y_1)), ... ,$ $(Z_n, (x_n, y_n)))$ of distinct defense attacks from $D$ s.t.:
\begin{itemize}
\item there is a set of arguments argument $G \subseteq A$ s.t. $x_n = a$, $y_n = b$ and $Z_n = g$ 
\item no two pairs $(x_i, y_i)$ and $(x_j, y_j)$ are the same for $i \neq j$
\item for every  $(Z_i, (x_i, y_i))$ where $1< i \leq n$,
either no argument $h$ in $\ext$ defeats$_\ext$ any argument $z \in Z_i$ or for every such defeat, there exists a set $L \subseteq A$ s.t.
$(L, (h,z)) \in \{(Z_1, (x_1, y_1)), ... , (Z_{i-1}, (x_{i-1}, y_{i-1}))\}$, and 
\item no argument in $\ext$ defeats$_\ext$ any argument in $Z_1$.
\end{itemize}
\end{theorem}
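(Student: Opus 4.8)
The plan is to recast the existence of a reinstatement set as a greatest--fixpoint condition and then read both directions off the finite iteration of the associated operator. Write $\mathcal{D}_\ext = \{(x,y) \mid x \in \ext \text{ and } x \text{ defeats}_\ext y \text{ in } \EAFC\}$ for the set of all defeats carried out by $\ext$, and define the operator $\Gamma$ on subsets of $\mathcal{D}_\ext$ by letting $\Gamma(S)$ consist of all $(x,y) \in \mathcal{D}_\ext$ such that for every $C \subseteq A$ with $(C,(x,y)) \in D$ there is a pair $(x',y') \in S$ with $y' \in C$. Unfolding the definition of a reinstatement set shows that a set $R \subseteq \mathcal{D}_\ext$ is a reinstatement set for the defeat$_\ext$ of $b$ by $a$ exactly when $(a,b) \in R$ and $R \subseteq \Gamma(R)$, i.e.\ $R$ is a post--fixpoint of $\Gamma$ containing $(a,b)$. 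Since $\Gamma$ is monotone and $\mathcal{D}_\ext$ is finite, Knaster--Tarski gives that the union of all post--fixpoints is the greatest fixpoint $\mathrm{gfp}(\Gamma)$, which is itself such a set; hence \emph{a reinstatement set for this defeat$_\ext$ exists iff $(a,b) \in \mathrm{gfp}(\Gamma)$}. The theorem then follows by proving that $(a,b) \notin \mathrm{gfp}(\Gamma)$ is equivalent to the existence of the stated sequence.

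For the direction asserting that the sequence blocks every reinstatement set, I would argue by descent. Assume the sequence $((Z_1,(x_1,y_1)),\dots,(Z_n,(x_n,y_n)))$ exists with $(x_n,y_n)=(a,b)$ and $Z_n = G$, and suppose for contradiction that $R$ is a reinstatement set with $(a,b)\in R$. Starting from $(x_n,y_n)\in R$, whenever $(x_i,y_i)\in R$ with $i>1$ the defense attack $(Z_i,(x_i,y_i))\in D$ forces, by the reinstatement property, a pair $(x',y')\in R$ with $y'\in Z_i$; since $x'\in\ext$ defeats$_\ext$ $y'\in Z_i$, the first alternative of the sequence condition at index $i$ fails, so the second holds and $(x',y')=(x_j,y_j)$ for some $j<i$. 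This produces a strictly decreasing chain of indices which, $n$ being finite, must reach index $1$ with $(x_1,y_1)\in R$. But the condition on $Z_1$ states that no argument in $\ext$ defeats$_\ext$ any element of $Z_1$, while the reinstatement property applied to $(Z_1,(x_1,y_1))\in D$ demands a counter--defeat inside $Z_1$ -- a contradiction. Hence no reinstatement set exists.

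For the converse I would use the finite chain $\mathcal{D}_\ext \supseteq \Gamma(\mathcal{D}_\ext) \supseteq \Gamma^2(\mathcal{D}_\ext)\supseteq\cdots$ stabilising at $\mathrm{gfp}(\Gamma)$. Assuming no reinstatement set exists, $(a,b)\notin\mathrm{gfp}(\Gamma)$, so there is a least stage $k$ with $(a,b)\notin\Gamma^{k}(\mathcal{D}_\ext)$; assign to every bad defeat its removal stage in this way. A defeat removed at stage $1$ carries a defense attack whose target set contains no argument defeated by $\ext$ (the $Z_1$--case), and a defeat removed at stage $m>1$ carries a defense attack $Z$ all of whose counter--defeats are bad at stages strictly below $m$. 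I would then collect the bad defeats relevant to $(a,b)$, pair each with such a witnessing defense attack, order the list by increasing removal stage with $(a,b)$ placed last, and delete repeated defeats. Ordering by stage guarantees that every counter--defeat of a given $Z_i$ occurs strictly earlier in the list, which is exactly the required clause; distinctness of the defeats, and hence of the chosen defense attacks, follows from the deduplication.

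The routine verifications -- monotonicity of $\Gamma$, the post--fixpoint reformulation, and the stage bookkeeping -- are straightforward. The main obstacle is the forward construction: I must check that ordering the witnessing defense attacks by removal stage really yields a single sequence in which each $Z_i$'s counters appear \emph{earlier}, even though those counters originate from several independent sub--derivations, and that restricting to the defeats relevant to $(a,b)$ together with deduplication keeps $(a,b)$ last while preserving the stage ordering. Handling the case where a single counter--defeat admits several witnessing defense attacks, by choosing one canonically, is the most delicate piece of this accounting.
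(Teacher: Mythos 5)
Your proposal is correct, and its two halves relate differently to the paper's proof. The backward direction (sequence $\Rightarrow$ no reinstatement set) is essentially the paper's argument run in the opposite direction: the paper inducts bottom--up, showing no reinstatement set can exist for $(x_1,y_1)$, then $(x_2,y_2)$, up to $(x_n,y_n)$, whereas you descend from $(x_n,y_n)$ through a strictly decreasing chain of indices to a contradiction at $Z_1$; the content is the same. The forward direction is where you genuinely diverge. The paper argues greedily and combinatorially: since no reinstatement set exists, some defense attack on $(a,b)$ has all its counter--defeats un-reinstated (else the reinstatement sets could be joined), and one recurses on those defeats, appealing to finiteness of the framework to claim the recursion bottoms out at un--defeated defense attackers; the handling of termination, possible circularity, and distinctness of the chosen conflicts is left quite informal there. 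Your reformulation -- reinstatement sets are exactly the post--fixpoints of the monotone operator $\Gamma$ on $\mathcal{D}_\ext$ containing $(a,b)$, so existence is equivalent to $(a,b) \in \mathrm{gfp}(\Gamma)$ -- replaces that informal recursion with the finite decreasing iteration $\Gamma^k(\mathcal{D}_\ext)$, and the resulting removal--stage function is exactly the well--founded measure the paper's sketch is missing: counters of a stage--$m$ witness necessarily have stage strictly below $m$, so ordering the closure of $(a,b)$ under witness--counters by stage automatically puts every counter--defeat earlier in the sequence, makes $(a,b)$ the unique element of maximal stage (hence last), yields distinctness because each defeat enters the closure once with one canonical witness, and forces the minimal--stage element's witness to be completely un--defeated (any counter would have smaller stage). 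So your approach buys rigor where the paper is hand--wavy, at the cost of a small amount of lattice--theoretic machinery, and as a by--product isolates a reusable characterization (reinstatement $\Leftrightarrow$ membership in $\mathrm{gfp}(\Gamma)$) that the paper never states; only the minor quibble remains that Knaster--Tarski itself needs no finiteness -- finiteness is used only so that the iteration from $\mathcal{D}_\ext$ stabilizes and stages are well defined.
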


\begin{proof}
Let $(x,y) \in R$. By $datt(x,y)$ we denote the set of sets of arguments that carry out defense attacks on $(x,y)$, i.e. $datt(x,y)= \{ C \mid (C,(x,y)) \in D\}$.

Let us first show that if there is no reinstatement set for the $(a,b)$ defeat$_\ext$ on $\ext$, then a suitable sequence 
$((Z_1, (x_1, y_1)), ... ,$ $(Z_n, (x_n, y_n)))$ exists. Due to the fact that no reinstatement set
exists, then $\{ (a,b) \}$ is not a reinstatement set for the the defeat$_\ext$ of $a$ on $b$. 
Hence, $datt(a,b)$ is not empty
and there exists at least one $Z \in datt(a,b)$ s.t. $b \notin Z$ -- otherwise, $\{(a,b)\}$ would have been a reinstatement set. 
From this, we can always choose such a $Z$ s.t. none of its elements is defeated$_\ext$
by $\ext$ or none of such defeats$_\ext$ has a reinstatement set -- otherwise, we could have joined these sets and added $(a,b)$
to obtain a reinstatement set for the $a$--$b$ defeat$_\ext$. 
Let us denote the sets meeting these requirements with $D_1^1,...,D_k^1$. 
If for any of the $D_j^1$, no $d \in D_j^1$ is defeated$_\ext$ by $\ext$,
then $((D_j^1, (a,b)))$ is a sequence meeting our requirements and we are done.

Let us therefore assume that for every $D_j^1$, we can find arguments $x \in D_j^1$, $e \in \ext$ s.t. $e$ defeats$_\ext$
$x$. Again, none of such defeats can have a reinstatement set on $\ext$ -- otherwise, we would have been able to construct
a reinstatement set for $(a,b)$.  
For the same reasons as above, in every $datt(e,x)$ there is a set not containing $x$ which is either not defeated$_\ext$ by 
$\ext$ or no such defeat has a reinstatement set on $\ext$. 
However, we can also observe that if $e=a$, then we can choose such an $D_j^1$ and $x \in D_j^1$
and $D_{mj}^2 \in datt(e, x)$ for $1 < m <\lvert datt(e,x) \rvert$ s.t. that $D_{mj}^2$ meets our requirements and does not contain $b$. 
If it were not possible, then again there would have been a reinstatement set for $(a,b)$.
Thus, we can filter our first and second level $D$'s and continue our analysis. 
If it is the case that any of $D_{mj}^2$ is not defeated$_\ext$ by 
$\ext$, then again $((D_{mj}^2, (e,x)))$ is a satisfactory sequence for the $e$--$x$ defeat. By appending such sequences
for the remaining defeats on $x$ and including the $(D_j^1, (a,b))$ defeat, we can receive the desired sequence for $(D_j^1, (a,b))$.

We can therefore focus again on the case that for no defeat$_\ext$ by any argument $f \in \ext$ on any $D_mj^2$ there is a 
reinstatement set on $\ext$. We can continue the analysis in the similar manner, each time showing that a sequence with unique conflicts
can be built and that for each defense attacks in the sequence is \enquote{protected} by the attacks lower in the sequence. 
Since the amount of conflicts in our framework is finite, we are bound to reach defense attacks by arguments that are not 
defeated$_\ext$ by $\ext$. This concludes this part of the proof. 

Let now $((Z_1, (x_1, y_1)), ... , (Z_n, (x_n, y_n)))$ be a defense attack sequence satisfying our requirements. There is no argument
$d \in \ext$ s.t. $d$ defeats$_\ext$ any argument in $Z_1$. Therefore, there cannot be a reinstatement set for $(x_1, y_1)$.
If there exists an argument in $\ext$ defeating$_\ext$ $Z_2$, then by the construction of the sequence it holds that this conflict
is defense attacked by $Z_1$. Consequently, there cannot be a reinstatement set for this conflict on $\ext$. We can repeat this procedure
till we reach $Z_n$. As there is no defeat$_\ext$ on any element in$Z_n$ that can be reinstated, there is no reinstatement set for $(x_n, y_n)$. 
This concludes the proof.
\end{proof}

\subsubsection{Additional notions and proofs for AFNs}

Let us now focus on AFNs. We can observe that the definitions of defense and stability that we have used in this paper are different
from the ones from \cite{incoll:newafn}:

\begin{definition}
Let $\AFN = (A,R,N)$ be an AFN, $\ext \subseteq A$ and $a \in A$. 
A set $\ext$ \textbf{defends} $a$ in $\AFN$ iff $\ext \cup \{a\}$ is coherent and for each $b \in A$, if $b R a$ then for each 
coherent $C\subseteq A$
that contains $b$, there exist arguments $e \in \ext, c \in C$ s.t. $e R c$.  
\end{definition}

\begin{definition}
Let $\AFN = (A,R,N)$ be an AFN. 
The set of arguments \textbf{deactivated} by $\ext$ is defined by $\ext^+ = \{a \mid \exists e \in \ext $ s.t. $e R a$ or there is a $ 
B \subseteq A $ s.t. $B N a$ and $
\ext \cap B = \emptyset\}$. 
$\ext$ is \textbf{stable} in $\AFN$ iff it is complete in $\AFN$ and $\ext^+ = A\setminus \ext$.
\end{definition}

However, the fact that our notion of defense is equivalent to the original one can be easily proved from the definition
of the discarded set.
\begin{lemma}
Let $\AFN = (A,R,N)$ be an AFN, $\ext \subseteq A$ and $a\in A$. $a$ is defended by $\ext$ in $\AFN$
iff $\ext \cup \{a\}$ is coherent and $\forall b \in A$ s.t. $bRa$, $b \in \ext^{att}$. 
\label{lemma:afndefatt}
\end{lemma}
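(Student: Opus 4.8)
The plan is to prove the biconditional by pure definitional unfolding: I would expand the definition of the discarded set $\ext^{att}$ inside the right-hand condition and observe that it reproduces, clause for clause, the original defense condition stated just above the lemma. Since both notions of defense share the identical conjunct ``$\ext \cup \{a\}$ is coherent,'' that part of the equivalence is automatic, and the entire problem reduces to comparing the two attacker-handling clauses.

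First I would fix an arbitrary $b \in A$ with $bRa$ and recall from Def. \ref{def:afn1} that $b \in \ext^{att}$ means precisely: for every coherent $C \subseteq A$ with $b \in C$, there exist $c \in C$ and $e \in \ext$ such that $eRc$. Substituting this characterization into the right-hand side ``$\forall b \in A$ s.t.\ $bRa$, $b \in \ext^{att}$'' yields: for every $b$ with $bRa$ and every coherent $C$ containing $b$, there are $e \in \ext$ and $c \in C$ with $eRc$. This is word-for-word the attacker clause of the original definition of defense, so the two conditions are logically identical and the biconditional holds in both directions at once.

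I expect no genuine obstacle: the content is entirely bookkeeping. The only point requiring care is verifying that the quantifier structure matches exactly --- in particular that the original definition ranges over \emph{coherent} sets $C$ containing $b$ in the same manner the definition of $\ext^{att}$ does, and that neither notion carries a hidden extra side condition (for instance on the membership or coherence of $a$ itself) that the other omits. Once this alignment is confirmed, the equivalence follows with no combinatorial or fixpoint reasoning whatsoever, which is exactly why the paper can dispatch it in a single short lemma.
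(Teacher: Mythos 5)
Your proposal is correct and matches the paper's intent exactly: the paper gives no explicit proof, stating only that the equivalence \enquote{can be easily proved from the definition of the discarded set}, and your definitional unfolding of $b \in \ext^{att}$ into \enquote{every coherent $C \subseteq A$ containing $b$ is attacked by $\ext$} is precisely that argument, since the resulting condition coincides clause for clause with the original defense definition from the appendix. The shared conjunct on the coherence of $\ext \cup \{a\}$ is handled identically in both formulations, so nothing further is needed.
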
 

The discarded set is a subset of the deactivated set. Using this, we can show that the stable semantics can
be defined with strongly coherent semantics as well, not just complete.

\begin{restatable}{lemma}{lemmaafndeatt}
Let $\AFN = (A,R,N)$ be an AFN and $\ext \subseteq A$ be a strongly coherent set. Then $\ext^{att} \subseteq \ext^+$.
\label{lemma:afndeatt}
\end{restatable}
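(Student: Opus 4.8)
The plan is to prove the contrapositive: I will show that if $a \notin \ext^+$ then $a \notin \ext^{att}$. By the definition of the deactivated set (the definition immediately preceding the lemma), $a \notin \ext^+$ unfolds into two facts: (a) no $e \in \ext$ attacks $a$, and (b) every supporter set $B$ (every $B$ with $B N a$) satisfies $\ext \cap B \neq \emptyset$. To refute $a \in \ext^{att}$ (Def.~\ref{def:afn1}) it suffices to exhibit a single coherent set $C \subseteq A$ with $a \in C$ such that no $c \in C$ is attacked by any $e \in \ext$. My candidate is $C = \ext \cup \{a\}$. Note that if $a \in \ext$ then already $C = \ext$ is coherent and, being conflict--free, contains no argument attacked by $\ext$, so $a \notin \ext^{att}$ at once; hence I may assume $a \notin \ext$.

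First I would check that no member of $C$ is attacked by $\ext$. For the elements of $\ext$ this is exactly conflict--freeness, which is available because $\ext$ is strongly coherent; this is the one place where conflict--freeness (rather than mere coherence) is used. For $a$ itself it is fact (a) above. Thus $C$ witnesses $a \notin \ext^{att}$ as soon as we know $C$ is coherent, and it is here that the coherence of $\ext$ and fact (b) come into play.

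The remaining, and \emph{main}, obstacle is to verify that $C$ is coherent, i.e.\ that every element of $C$ is powerful in $C$ (Def.~\ref{def:afnsem}). Each $e \in \ext$ is powerful in $\ext$, and since $\ext \subseteq C$ every powerful sequence for $e$ inside $\ext$ is also one inside $C$; so the only real work is to produce a powerful sequence for $a$ in $C$. For this I would first establish the auxiliary fact that a coherent set admits a \emph{global} grounding order: using finiteness, enumerate $\ext$ in layers, placing in layer $0$ the arguments with no supporter set and in layer $t+1$ those all of whose supporter sets already meet the lower layers, and then argue (using that each element of $\ext$ has its own powerful sequence in $\ext$, so the first non--placed element of such a sequence would in fact be eligible) that the process exhausts $\ext$; reading the layers left to right yields an enumeration $e_1,\dots,e_m$ of $\ext$ that is itself a valid powerful sequence. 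Finally, by fact (b) every supporter set of $a$ meets $\ext = \{e_1,\dots,e_m\}$, so $e_1,\dots,e_m,a$ is a powerful sequence for $a$ in $C$ (distinctness holds since $a \notin \ext$, and the degenerate case $\ext=\emptyset$ forces $a$ to have no supporter set at all, so the singleton $(a)$ works). Hence $C$ is coherent, contradicting $a \in \ext^{att}$, and the contrapositive---and thus $\ext^{att}\subseteq\ext^+$---follows.
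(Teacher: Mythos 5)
Your proof is correct and follows essentially the same route as the paper's: both argue from $a \notin \ext^+$ (you via the contrapositive, the paper via contradiction), both take $\ext \cup \{a\}$ as the witness coherent set containing $a$, and both use conflict--freeness of $\ext$ plus the absence of attacks on $a$ to conclude $a \notin \ext^{att}$. The only difference is that where the paper dismisses the coherence of $\ext \cup \{a\}$ as "easy to see," you supply the missing verification via the layered enumeration of $\ext$ into a single global powerful sequence, which is a welcome tightening of the argument rather than a departure from it.
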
  

\begin{proof}
 Let us assume this is not the case, i.e. an argument $a\in A$
is in $\ext^{att}$, but $\nexists e \in \ext, e R a$ and $\forall C\subseteq A$ s.t. $CNa$, $C\cap \ext \neq \emptyset$. 
It is easy to see that since sufficient support is provided and $\ext$ is coherent, then $\ext \cup \{a\}$ would have to be coherent
as well. Since $a \in \ext^{att}$, every coherent set containing $a$ is attacked by $\ext$. As $\ext$ is also conflict--free, it can 
thus only be the case that $\exists e \in \ext$
s.t. $e R a$. We reach a contradiction. Hence, whatever is in $\ext^{att}$, is also in $\ext^+$.
\end{proof}

\begin{restatable}{lemma}{lemmaafnstbtwo}
\label{lemma:afnstb2}
Let $\AFN = (A,R,N)$ be an AFN. A set $\ext \subseteq A$ is a stable in $\AFN$ iff it is strongly coherent and $\ext^{att} = A\setminus \ext$.
\end{restatable}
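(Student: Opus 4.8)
The plan is to prove the two directions of the equivalence, bearing in mind that the left-hand \textbf{stable} here refers to the deactivated-based definition just introduced (\emph{complete} together with $\ext^+ = A\setminus\ext$), whereas the right-hand condition is the strongly coherent one of Def.~\ref{def:afn2}. Two earlier results will carry most of the bookkeeping: Lemma~\ref{lemma:afndefatt}, which rephrases defense purely in terms of the discarded set $\ext^{att}$, and Lemma~\ref{lemma:afndeatt}, which supplies $\ext^{att}\subseteq\ext^+$ for every strongly coherent $\ext$. Throughout I will use that every element of a coherent set is powerful (Def.~\ref{def:afnsem}), so that each of its support sets meets the strictly earlier elements of some powerful sequence.

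For the direction from strongly coherent $+\ \ext^{att}=A\setminus\ext$ to deactivated-stable, I would first establish completeness. Admissibility is immediate: for $a\in\ext$, conflict-freeness forces every attacker $b$ of $a$ to lie in $A\setminus\ext=\ext^{att}$, so Lemma~\ref{lemma:afndefatt} gives that $\ext$ defends $a$. For the completeness clause I take an $a$ defended by $\ext$ and suppose $a\notin\ext$; then $a\in\ext^{att}$, and since $\ext\cup\{a\}$ is coherent and contains $a$, the definition of $\ext^{att}$ forces $\ext$ to attack some element of $\ext\cup\{a\}$, which by conflict-freeness can only be $a$ itself --- contradicting that $a$ is defended. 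Finally $\ext^+=A\setminus\ext$ follows because Lemma~\ref{lemma:afndeatt} gives $A\setminus\ext=\ext^{att}\subseteq\ext^+$, while $\ext\cap\ext^+=\emptyset$ is excluded by conflict-freeness (no $\ext$-attack on an element of $\ext$) together with powerfulness in $\ext$ (every support set of an $a\in\ext$ meets $\ext$, so support cannot be cut off).

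The converse direction is where the real work lies. From \emph{complete} I obtain strong coherence for free (complete $\Rightarrow$ admissible $\Rightarrow$ strongly coherent), and $\ext^{att}\subseteq\ext^+=A\setminus\ext$ again comes from Lemma~\ref{lemma:afndeatt}. The crux is the reverse inclusion $A\setminus\ext\subseteq\ext^{att}$. Here I fix $a\notin\ext$ and an arbitrary coherent $C$ with $a\in C$, and assume for contradiction that $\ext$ attacks no element of $C$. Taking a powerful sequence $a_0,\dots,a_k=a$ for $a$ in $C$, I pick the least index $j$ with $a_j\notin\ext$ (it exists since $a\notin\ext$). Then $a_j\in A\setminus\ext=\ext^+$, so either some $e\in\ext$ attacks $a_j\in C$ --- contradicting that $\ext$ does not attack $C$ --- or $a_j$ has a support set $B$ with $B\cap\ext=\emptyset$; but powerfulness forces every support set of $a_j$ to meet $\{a_0,\dots,a_{j-1}\}\subseteq\ext$, again a contradiction. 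The case $j=0$ is covered because an unsupported $a_0$ admits no support set, leaving only the attack alternative. Hence $\ext$ must attack $C$, giving $a\in\ext^{att}$.

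I expect the selection of the first sequence element outside $\ext$ in the last paragraph to be the main obstacle: it is the step that converts the global hypothesis $\ext^+=A\setminus\ext$ into a local contradiction against the acyclic, grounded structure encoded by powerful sequences, and it is essential to treat the boundary case $j=0$ separately so that the support-cutoff alternative becomes vacuous there.
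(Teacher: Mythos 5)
Your proposal is correct and follows essentially the same route as the paper's proof: both directions rest on Lemmas~\ref{lemma:afndefatt} and \ref{lemma:afndeatt}, and the crux of the converse is the same powerful-sequence argument, which you phrase as a least-index-outside-$\ext$ contradiction where the paper runs a forward induction ($a_0 \in \ext$, then $a_1$, \dots)---a purely cosmetic difference. If anything, you are slightly more careful than the paper in making explicit that $\ext \cap \ext^+ = \emptyset$ (via conflict-freeness and coherence), a step the paper leaves implicit when it claims it ``suffices to show that $\ext$ is complete.''
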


\begin{proof}
Let us show that if $\ext$ is strongly coherent and $\ext^{att} = A\setminus \ext$, then $\ext$ is stable. By Lemma \ref{lemma:afndeatt} we know
that $\ext^{att} \subseteq \ext^+$. Thus, it suffices to show that $\ext$ is complete. 
Since $\ext$ is strongly coherent, $\ext \cap \ext^{att} = \emptyset$.
Moreover, from Lemma \ref{lemma:afndefatt} and the fact that $\ext^{att} = A\setminus \ext$ it follows that $\ext$ is at 
least admissible. Now assume there is an argument
$a \notin \ext$ that is defended by $\ext$. Since $a \in \ext^{att}$, $\ext$ could not have been conflict--free in the first place. 
Thus, there cannot be a defended
argument not in $\ext$. Hence, the set is complete and as a result, also stable.

Let us now show the other way. Since $\ext$ is complete, it is at least strongly coherent. What remains to be shown is that in this case, 
whatever is in $\ext^+$
is in $\ext^{att}$. Let us assume it is not the case, i.e. there is an argument in  $a\in \ext^+$ s.t. $\ext$ does not attack all coherent sets containing $a$.
%If $a$ is in $\ext^+$ due to attack, then obviously it will also be in $\ext^{att}$. Thus, let us assume that $a$ is in $\ext^+$ due to lack of support,
%i.e. $\exists C \subseteq A$ s.t. $C N a$ and $C \cap \ext = \emptyset$.
Let $(a_0,...,a_n)$ be a powerful sequence for $a$ that is not attacked by $\ext$. Assume that none of the elements of the sequence belong to $\ext$.
This means that $a_0$ is in $\ext^+$, and as it requires no support due to the powerful sequence conditions, it has to be the case that $\ext$ attacks it. 
Consequently, the powerful sequence for $a$ would also be attacked by $\ext$ and we would reach a contradiction. 
Thus, let us assume that at least $a_0$ is in $\ext$. If $a_1$ is not there, then by the fact it is supported by $a_0$ and thus by 
$\ext$ we again would reach a conclusion
that it can only be the case that $\ext$ attacks $a_1$. Consequently, the sequence would again be attacked and we reach a contradiction. We will come
to the same conclusion when we assume that $a_1$ is in $\ext$, but $a_2$ is not.
We can
continue until we reach $a_n = a$ and it is easy to see that it could not have been the case that $a$ was in $\ext^+$, but not in $\ext^{att}$. Hence,
$\ext$ is strongly coherent and $\ext^{att} = A\setminus \ext$.
\end{proof}

\begin{theorem}
\label{thm:compafn}
Let $\AFN = (A,R,N)$ be an AFN.
The following holds:
\begin{itemize}
\item the grounded extension of $\AFN$ is the least w.r.t.  $\subseteq$ complete extension of $\AFN$
\item a preferred extension of $\AFN$ is a maximal w.r.t. $\subseteq$ complete extension of $\AFN$
\item each stable extension of $\AFN$ is preferred in $\AFN$, but not vice versa.
% is semi--stable in $\AFN$ and each semi--stable extension of $\AFN$
\end{itemize}
\end{theorem}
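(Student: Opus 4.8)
The plan is to prove each of the three bullets of Theorem~\ref{thm:compafn} separately, leaning on the machinery already established for AFNs, in particular Lemma~\ref{lemma:afndefatt}, Lemma~\ref{lemma:afndeatt} and Lemma~\ref{lemma:afnstb2}. For the first bullet, that the grounded extension is the least complete extension, I would set up a characteristic operator $\charFN$ in the AFN analogous to $\charEAFC$, namely $\charFN(\ext) = \{a \mid \ext$ defends $a$ in $\AFN\}$, and take the grounded extension to be the least fixed point obtained by iterating $\charFN$ from $\emptyset$. The key steps are (i) verify that $\charFN$ is monotone on coherent sets with respect to $\subseteq$, using the fact that enlarging $\ext$ can only enlarge $\ext^{att}$ and can only help coherence of $\ext \cup \{a\}$; (ii) observe that every complete extension is a fixed point of $\charFN$ by definition; and (iii) conclude by the standard Knaster--Tarski-style argument that the least fixed point sits below every other fixed point, hence below every complete extension, while itself being complete.

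For the second bullet, I would argue that every preferred extension is complete, and then exhibit a complete-but-not-preferred witness. The inclusion direction mirrors the EAFC argument in the proof of Thm.~\ref{thm:dungeaf}: assume a preferred (hence admissible) $\ext$ defends some $a \notin \ext$, and show $\ext \cup \{a\}$ is again admissible, contradicting maximality. The subtle point here, absent in the plain AF case, is coherence: I must check that $\ext \cup \{a\}$ remains strongly coherent, which follows because $\ext$ defends $a$ and defense in Def.~\ref{def:afn1} already requires $\ext \cup \{a\}$ to be coherent, while conflict--freeness is preserved since $a$ being defended means its attackers are discarded. The converse non-inclusion is a small example where two maximal complete extensions have differing size, or a single complete extension that is not maximal.

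For the third bullet I would show every stable extension is preferred, then give a non-example. By Lemma~\ref{lemma:afnstb2}, a stable $\ext$ is strongly coherent with $\ext^{att} = A \setminus \ext$, and combined with Lemma~\ref{lemma:afndefatt} this forces $\ext$ to be admissible, indeed complete, as already argued inside the proof of Lemma~\ref{lemma:afnstb2}. To upgrade completeness to preferredness I would suppose toward contradiction that some admissible $\ext' \supsetneq \ext$ exists; any $a \in \ext' \setminus \ext$ lies in $\ext^{att}$ since $\ext^{att} = A\setminus\ext$, which by strong coherence of $\ext'$ (conflict--freeness together with coherence) yields a contradiction, as $\ext \subseteq \ext'$ would then attack a coherent subset of $\ext'$ containing $a$. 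The ``not vice versa'' direction is witnessed by any AFN possessing a preferred extension that fails to discard everything outside it, for instance one where a self--attacking or support--starved argument leaves $\ext^{att} \subsetneq A \setminus \ext$.

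The main obstacle I anticipate is the coherence bookkeeping rather than the fixed-point or maximality skeleton, which is standard. Specifically, in both the first and second bullets the delicate verification is that the sets I build by adding defended arguments remain \emph{coherent}, since powerful sequences (Def.~\ref{def:afnsem}) can be broken by the presence or absence of supporters in ways that have no analogue in Dung's framework; I would handle this by invoking the defense clause's built-in coherence requirement and by showing that discarding an attacker never destroys an existing powerful sequence. I would reserve most care for confirming that $\ext \cup \{a\}$ inherits powerful sequences for all its members when $\ext$ defends $a$, as this is exactly the place where the support relation $N$ interacts nontrivially with the attack relation $R$.
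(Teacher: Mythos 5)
The first thing to note is that the paper itself contains no proof of Theorem \ref{thm:compafn}: it is stated in the appendix as a recalled result from the AFN literature \cite{incoll:newafn} (in the main text the paper merely asserts that the AFN semantics \enquote{satisfy Thm. \ref{thm:dung2}}). So there is no in-paper argument to compare against, and your proposal must stand on its own. Judged that way, your skeleton is the right one: it reconstructs the standard Dung-style argument adapted to AFNs, with the characteristic operator $\charFN$ iterated from $\emptyset$ for the grounded bullet, a fundamental-lemma argument (admissible $\ext$ defending $a$ implies $\ext \cup \{a\}$ admissible) for preferred-implies-complete, and Lemma \ref{lemma:afnstb2} plus a maximality argument for stable-implies-preferred. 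Your restriction of monotonicity of $\charFN$ to coherent sets is exactly the care required (on arbitrary sets the coherence clause of defense is not monotone), and your stable-implies-preferred step is correct: any $a \in \ext' \setminus \ext$ lies in $\ext^{att} = A \setminus \ext$, so the coherent set $\ext'$ containing $a$ is attacked by $\ext \subseteq \ext'$, contradicting conflict-freeness of $\ext'$.

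Two points need repair. First, in the grounded bullet you track coherence along the iteration but never argue conflict-freeness of the iterates, which is equally necessary for the least fixed point to be admissible and hence complete; the missing step is the one you only sketch in the second bullet, and it must be made explicit: if $a, b \in \charFN(\ext)$ with $a R b$ and $\ext$ is admissible, then $a \in \ext^{att}$, but $\ext \cup \{a\}$ is coherent and contains $a$, so $\ext$ attacks some element of $\ext \cup \{a\}$; conflict-freeness of $\ext$ forces that element to be $a$ itself, and then the attacker of $a$ inside $\ext$ must lie in $\ext^{att}$ (since $\ext$ defends $a$), making $\ext$ attack its own coherent self --- a contradiction. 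Second, your proposed \enquote{support-starved} witness for a preferred-but-not-stable extension does not work: an argument with no powerful sequence belongs to no coherent set, hence lies \emph{vacuously} in $\ext^{att}$ for every $\ext$ (see $f$ in Example \ref{ex:afn}, which is duly discarded by both stable extensions there), so such arguments never obstruct stability. Only your self-attack witness is correct, e.g. $(\{a,b\}, \{(b,b)\}, \emptyset)$, where $\{a\}$ is preferred yet $b \notin \{a\}^{att}$, so $\{a\}$ is not stable.
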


\subsubsection{Additional notions and proofs for ADFs}

In this work, we have used shortened versions of various Theorems and notions from \cite{report:semanticsrev}. In particular, we have
trimmed Theorem \ref{thm:extcollapsetrim} to the semantics recalled in this work. We will now reintroduce certain notions
due to their impact on the proofs in the next sections.

\begin{theorem}
Let $\ADF = (A,L, C)$ be an AADF$^+$. The following holds:
\begin{enumerate}
\item Every conflict--free extension of $\ADF$ is pd--acyclic conflict--free in $\ADF$
%\item Every naive extension of $\ADF$ is pd--acyclic naive in $\ADF$
\item Every model of $\ADF$ is stable in $\ADF$
\item Given a conflict--free set of arguments $\ext \subseteq A$, $\ext^+ = \ext^{p+} = \ext^{a+}$
\item The aa/cc/ac/ca$_1$/ca$_2$--admissible extensions of $\ADF$ coincide
\item The aa/cc/ac/ca$_1$/ca$_2$--complete extensions of $\ADF$ coincide
\item The aa/cc/ac/ca$_1$/ca$_2$--preferred extensions of $\ADF$ coincide
\item The grounded and acyclic grounded extensions of $\ADF$ coincide
\end{enumerate}
\label{thm:extcollapse}
\end{theorem}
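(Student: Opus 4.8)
The plan is to prove each of the seven items of Theorem \ref{thm:extcollapse} by systematically exploiting the defining property of an AADF$^+$: every partially acyclic evaluation is in fact acyclic (i.e. has $F = \emptyset$). My strategy is to first establish item (3) -- the collapse of the three discarded sets -- since the remaining items are essentially corollaries of it once it is combined with Lemma \ref{lemma:disc}. By Lemma \ref{lemma:disc} we already know that for any conflict--free $\ext$ the inclusions $\ext^+ \subseteq \ext^{p+} \subseteq \ext^{a+}$ hold, so to collapse them I only need the reverse inclusion $\ext^{a+} \subseteq \ext^+$. The key observation is that the three discarded sets differ only in whether they quantify over partially acyclic evaluations $(F,G,B)$ or purely acyclic evaluations $(F,B)$ with $F=\emptyset$; since in an AADF$^+$ every partially acyclic evaluation \emph{is} acyclic, the universe of evaluations one quantifies over is identical in all three cases, forcing the sets to coincide.

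First I would unwind the definitions of $\ext^+$, $\ext^{p+}$ and $\ext^{a+}$ and argue, using Def. \ref{def:badfaadf}, that the set of partially acyclic evaluations for any argument $a$ equals the set of its acyclic evaluations. This immediately identifies the condition \enquote{for every partially acyclic evaluation $(F,G,B)$ for $a$, $B \cap \ext \neq \emptyset$} (defining $\ext^+$) with the condition \enquote{for every pd--acyclic evaluation $(F,B)$ for $a$, $B \cap \ext \neq \emptyset$} (defining $\ext^{a+}$), giving $\ext^+ = \ext^{a+}$ and hence, sandwiched by Lemma \ref{lemma:disc}, $\ext^+ = \ext^{p+} = \ext^{a+}$, which is item (3). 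For item (1), I would take a conflict--free $\ext$ and, using item (3) together with the last clause of Lemma \ref{lemma:disc} (pd--acyclic conflict--freeness forces $\ext^{p+}=\ext^{a+}$ and disjointness), show that every $a\in\ext$ admits an acyclic evaluation whose blocking set avoids $\ext$; the acyclicity of all evaluations is exactly what lets the standard conflict--free witness be upgraded to an acyclic one.

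Items (4)--(7) then follow by a uniform argument. Once the three ranges $v_\ext$, $v_\ext^a$, $v_\ext^p$ coincide -- which is a direct consequence of item (3), since each range is built by assigning $\tvt$ to $\ext$ and $\tvf$ to the corresponding discarded set, and those discarded sets are now equal -- the notions \enquote{decisively in w.r.t. $v_\ext$}, \enquote{decisively in w.r.t. $v_\ext^a$} and \enquote{decisively in w.r.t. $v_\ext^p$} become literally the same condition. The definitions of cc--, aa-- and ca$_2$--admissible (and likewise complete, preferred) differ only in which range is used and in which flavour of conflict--freeness is demanded; item (1) collapses the two conflict--freeness notions on the relevant sets, and the range collapse handles the decisiveness clause, so the admissible families coincide (item 4). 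Completeness (item 5) adds the maximality--of--accepted--decisively--in clause, which again references only the now--identical ranges, and preferredness (item 6) is $\subseteq$--maximality over the common admissible family. The grounded and acyclic grounded extensions (item 7) are the least cc--complete and least aa--complete sets respectively, which coincide once item (5) holds. Item (2) I would handle separately but analogously: a model satisfies $C_a(\ext\cap par(a))=out$ for all $a\notin\ext$, and I would show that in an AADF$^+$ this forces $\ext^{a+}=A\setminus\ext$ together with pd--acyclic conflict--freeness (via item 1), which is exactly stability.

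The main obstacle I anticipate is the careful verification in item (3) that the correspondence between partially acyclic and acyclic evaluations is genuinely a bijection preserving the blocking set $B$, rather than merely a one--way containment. The definition of $\ext^+$ quantifies over triples $(F,G,B)$ while $\ext^{a+}$ quantifies over pairs $(F,B)$ with $F=\emptyset$; I must confirm that in an AADF$^+$ no argument gains or loses an evaluation, and crucially that the blocking sets attached to corresponding evaluations are identical, so that the \enquote{$B\cap\ext\neq\emptyset$} tests agree argument--by--argument. This is where the AADF$^+$ hypothesis does all the work, and I would want to trace through the construction of $B$ in the evaluation definition to make sure the cyclic contribution $\bigcup_{a\in F} pd^{\ADF}_\ext(a)^\tvf$ vanishes exactly when $F=\emptyset$, leaving the two blocking sets equal.
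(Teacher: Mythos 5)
Your proposal cannot be checked against the paper's own argument, because the paper contains none: Theorem \ref{thm:extcollapse} is imported from the companion report \cite{report:semanticsrev}, to which the reader is explicitly referred for proofs. Judged on its merits, your reconstruction is essentially correct, and its core is exactly the right observation: in an AADF$^+$ the partially acyclic evaluations \emph{are} the acyclic evaluations (a triple $(F,G,B)$ with $F=\emptyset$ is, by the paper's own convention, the pair $(G,B)$, and the cyclic contribution $\bigcup_{a\in F} pd_{\ext}^{\ADF}(a)^{\tvf}$ to the blocking set vanishes), so the defining conditions of $\ext^+$, $\ext^{p+}$ and $\ext^{a+}$ become literally identical — in particular the clause $F'\subseteq\ext$ in the definition of $\ext^{p+}$ is vacuously satisfied by $F'=\emptyset$. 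This gives item (3), collapses the three ranges, and items (4)--(7) then follow by your uniform argument, provided you also record the easy converse of item (1): pd--acyclic conflict--freeness implies conflict--freeness in \emph{any} ADF (the decisively in interpretations underlying an evaluation on $\ext$ force $C_a(\ext\cap par(a))=in$), so that the two conflict--freeness clauses genuinely agree. Item (1) itself is correctly obtained, as you say, from Lemma \ref{lemma:disc} ($\ext\cap\ext^{p+}=\emptyset$ for conflict--free $\ext$) combined with the collapse of evaluations.

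Two caveats. First, item (2) is not automatic from this machinery: a model gives $C_a(\ext\cap par(a))=out$ for every $a\notin\ext$, but you must still argue that every acyclic evaluation of such an $a$ is blocked by $\ext$. The clean route is to invoke the paper's Lemma \ref{lemma:modrange}; alternatively, suppose some $a\notin\ext$ had an unblocked acyclic evaluation and take the first element $a_i$ of its pd--sequence lying outside $\ext$ — its interpretation has $\tvt$--part inside $\ext$ and $\tvf$--part disjoint from $\ext$, so decisiveness forces $C_{a_i}(\ext\cap par(a_i))=in$, contradicting the model condition. You gesture at this step but do not supply it. Second, the theorem as stated also covers the ac-- and ca$_1$--families, which this paper never defines; a proof written from this paper alone, yours included, can only establish the collapse for the aa/cc/ca$_2$ families, i.e. the trimmed version given as Theorem \ref{thm:extcollapsetrim}. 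Neither point undermines your strategy, which is the natural way the collapse is established.
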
  
 
\begin{theorem}
Let $\ADF = (A,L,C)$ be an AADF$^+$. The following holds:
\begin{enumerate}
\item Every admissible labeling of $\ADF$ has a corresponding aa/ac/cc/ca$_1$/ca$_2$--admissible extension of $\ADF$ and vice versa.
\item Every complete labeling of $\ADF$ has a corresponding aa/ac/cc/ca$_1$/ca$_2$--complete extension of $\ADF$ and vice versa.
\item Every preferred labeling of $\ADF$ has a corresponding aa/ac/cc/ca$_1$/ca$_2$--preferred extension of $\ADF$ and vice versa. 
\end{enumerate}
\label{thm:labextcollapse}
\end{theorem}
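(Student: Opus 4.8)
The plan is to avoid proving the five-fold correspondence directly and instead reduce it to a single representative case by leaning on Theorem~\ref{thm:extcollapse}. That theorem already tells us that on an AADF$^+$ the aa/cc/ac/ca$_1$/ca$_2$ variants of the admissible, complete and preferred extensions all coincide. Hence it suffices to establish the labeling correspondence for just one extension family, and then the remaining four inherit it for free. I would pick the cc--family, because its standard range $v_\ext$ is the most direct analogue of a three--valued labeling: the $\tvt$--part records the accepted arguments, the $\tvf$--part records the discarded ones, and the $\tvu$--part records the rest. The target of the proof then splits into three items, each of which is a statement about the cc--family only.

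The correspondence map I would fix is the standard one between three--valued interpretations and sets. To a labeling $v$ I associate the extension $\ext = v^{\tvt}$, and conversely to an extension $\ext$ I associate the labeling induced by its standard range, assigning $\tvt$ to the members of $\ext$, $\tvf$ to the members of $\ext^+$, and $\tvu$ to everything else. For item~1 the heart of the argument is to verify that, under this map, the operator condition defining an admissible labeling matches exactly the two conditions defining a cc--admissible extension, namely conflict--freeness together with every member being decisively in w.r.t.\ the range. Concretely I would show (a) that an argument is \enquote{legally in} for the labeling precisely when it is decisively in w.r.t.\ $v_\ext$, and (b) that an argument is \enquote{legally out} precisely when it lies in $\ext^+$; the $\tvu$--labels then drop out as the complement, giving the required bijection in both directions.

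For item~2 I would add the reinstatement requirement that distinguishes complete from admissible: a complete labeling must also label in every argument that is forced in, which under the map is exactly the cc--complete condition that every argument decisively in w.r.t.\ $v_\ext$ already belongs to $\ext$. For item~3 both notions are the respective maximal ones, and since the map is monotone on in--parts (larger in--part corresponds to larger extension), maximality transports faithfully in both directions; thus the maximal admissible labelings correspond to the maximal cc--admissible extensions, i.e.\ the cc--preferred ones. In each case, once the cc--version is settled, I invoke Theorem~\ref{thm:extcollapse} to replace \enquote{cc} by any of aa/ac/ca$_1$/ca$_2$.

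The main obstacle I expect is step~(b): reconciling the operator--based notion of \enquote{legally out} in the labeling world with the discarded--set notion in the extension world. For a \emph{general} ADF these diverge precisely on arguments sitting inside positive--dependency (support) cycles, where a $\tvf$--label and membership in $\ext^+$ need not coincide. This is exactly where the AADF$^+$ hypothesis earns its keep: by Theorem~\ref{thm:extcollapse} the three discarded sets collapse, $\ext^+ = \ext^{p+} = \ext^{a+}$, so cyclic support produces no phantom acceptances and the $\tvf$--part of any labeling can be identified unambiguously with $\ext^+$. I would therefore isolate this collapse as the single lemma underpinning (a) and (b), and treat the rest of the verification as a routine unfolding of the definitions of decisiveness and of the labeling operator.
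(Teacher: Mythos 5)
You should first note that the paper itself contains no proof of Theorem~\ref{thm:labextcollapse}: it is recalled verbatim from \cite{report:semanticsrev} in the appendix (\enquote{we will now reintroduce certain notions\dots}), so your attempt can only be judged on its own merits. Your high--level architecture is sound and almost certainly matches the intended one: reduce the five--fold statement to a single family via Theorem~\ref{thm:extcollapse}, and relate labelings and extensions through the $\tvt$--part map in one direction and the range interpretation in the other. However, there are three concrete gaps. First, the claimed \enquote{bijection in both directions} for item~1 is false: the admissible correspondence is many--to--one, not bijective. An admissible labeling may leave undecided arguments that the range maps to $\tvf$ (the everywhere--undecided labeling is admissible, and so is any labeling whose $\tvf$--part is a proper subset of $\ext^+ \setminus \ext$), so \enquote{legally out precisely when in $\ext^+$} cannot hold as an \emph{iff} for admissible labelings; only the containment $v^{\tvf} \subseteq (v^{\tvt})^+$ is available, and that containment is exactly what is needed. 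A bijection can only be claimed for complete labelings.

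Second, and more seriously, you attribute the crux to the wrong lemma. The collapse $\ext^+ = \ext^{p+} = \ext^{a+}$ from Theorem~\ref{thm:extcollapse} is a statement about conflict--free \emph{sets}; it does not by itself connect the $\tvf$--part of a labeling $v$ to the discarded set of $v^{\tvt}$. The missing bridge is a separate lemma: if $v$ is admissible and $v(s)=\tvf$ (so $s$ is decisively out w.r.t.\ $v$), then every acyclic evaluation $((a_0,\dots,a_n{=}s),B)$ for $s$ satisfies $B \cap v^{\tvt} \neq \emptyset$. This is proved by induction along the pd--sequence: from an unblocked evaluation one builds, step by step, a completion of $v$ under which $C_{a_i}$ evaluates to $in$, showing no $a_i$ can be decisively out w.r.t.\ $v$ --- and it is precisely here, not in the collapse itself, that the AADF$^+$ hypothesis acts, since a nonempty pd--set $F$ (a support cycle, e.g.\ $C_x = x$) breaks the induction base; indeed for general ADFs the statement fails. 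Only after this lemma does the collapse let you pass from $\ext^{a+}$ to $\ext^+$, so that decisiveness transfers from $v$ to the range. Calling this \enquote{routine unfolding of the definitions} hides the entire mathematical content. Third, your item~3 argument is invalid as stated: preferred labelings are maximal w.r.t.\ the information ordering $\leq_i$, while xy--preferred extensions are maximal w.r.t.\ $\subseteq$ on $\tvt$--parts, and a monotone map does not automatically carry maximal elements of one order to maximal elements of the other (for general ADFs this is exactly what breaks: a two--valued labeling rejecting a support cycle is $\leq_i$--maximal yet has a non--maximal $\tvt$--part). The repair again runs through the key containment: for admissible $v$ one shows $v \leq_i v_{v^{\tvt}}$, so every $\leq_i$--maximal admissible labeling is a range, and ranges are monotone ($\ext \subseteq \ext'$ implies $\ext^+ \subseteq \ext'^+$); both directions of item~3 then follow.
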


\begin{lemma}
Let $\ADF = (A,C)$ be an ADF and $\ext \subseteq A$ a model of $\ADF$. Then $\ext^{a+} = A\setminus \ext$ and $\ext^{p+} = A \setminus \ext$.
\label{lemma:modrange}
\end{lemma}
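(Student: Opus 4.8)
The plan is to prove each of the two set equalities by a pair of inclusions, driven by two separate engines: the conflict--freeness of $\ext$ together with Lemma~\ref{lemma:disc} for the inclusions \emph{into} $A \setminus \ext$, and the defining property of a model (every argument outside $\ext$ evaluates to $out$) for the reverse inclusions. Since a model is in particular conflict--free, Lemma~\ref{lemma:disc} immediately yields $\ext \cap \ext^{p+} = \emptyset$, i.e. $\ext^{p+} \subseteq A \setminus \ext$, as well as the chain $\ext^+ \subseteq \ext^{p+} \subseteq \ext^{a+}$. What remains for the $\ext^{p+}$ claim is $A \setminus \ext \subseteq \ext^{p+}$, and feeding this into the chain will simultaneously give $A \setminus \ext \subseteq \ext^{a+}$.

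The core is a single \emph{grounding} step: if $(F',(a_0,\dots,a_n),B')$ is a partially acyclic evaluation for its final element $a_n$ with $F' \subseteq \ext$ and $B' \cap \ext = \emptyset$, then every $a_i$ lies in $\ext$. I would prove this by induction along the pd--sequence. For $a_0$ we have $pd_{\ext}^{\ADF}(a_0)^{\tvt} \subseteq F' \subseteq \ext$ and $pd_{\ext}^{\ADF}(a_0)^{\tvf} \subseteq B'$, which is disjoint from $\ext$; hence the $\ext$--interpretation on $par(a_0)$ is a completion of the decisively in interpretation $pd_{\ext}^{\ADF}(a_0)$, forcing $C_{a_0}(\ext \cap par(a_0)) = in$. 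Since $\ext$ is a model, an argument outside $\ext$ would evaluate to $out$, so $a_0 \in \ext$; the inductive step is identical, using $pd_{\ext}^{\ADF}(a_i)^{\tvt} \subseteq F' \cup \{a_0,\dots,a_{i-1}\} \subseteq \ext$. Reading this for $a = a_n$ shows that any evaluation witnessing $a \notin \ext^{p+}$ would force $a \in \ext$; contrapositively, every $a \in A \setminus \ext$ belongs to $\ext^{p+}$ (the case where $a$ has no evaluation at all being covered vacuously). The same induction applied to an acyclic evaluation (the subcase $F' = \emptyset \subseteq \ext$) shows every $a \in A \setminus \ext$ lies in $\ext^{a+}$ too. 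Together with the first paragraph this closes $\ext^{p+} = A \setminus \ext$ and secures $A \setminus \ext \subseteq \ext^{a+}$.

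It remains to prove $\ext^{a+} \subseteq A \setminus \ext$, i.e. that no $a \in \ext$ is acyclically discarded, and this is the main obstacle. To keep $a \in \ext$ out of $\ext^{a+}$ one must exhibit \emph{some} acyclic evaluation for $a$ whose blocking set avoids $\ext$; but an argument sitting on a positive--dependency cycle within $\ext$ has no acyclic evaluation whatsoever, and is therefore vacuously acyclically discarded. Hence this inclusion cannot follow from the model property alone — it genuinely needs $\ext$ to be pd--acyclic conflict--free, at which point the last clause of Lemma~\ref{lemma:disc} delivers both $\ext \cap \ext^{a+} = \emptyset$ and $\ext^{p+} = \ext^{a+}$, so that $\ext^{a+} = \ext^{p+} = A \setminus \ext$ follows at once. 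I would therefore discharge this step by invoking pd--acyclic conflict--freeness of $\ext$, which is automatic in the AADF$^+$ setting where every conflict--free set is pd--acyclic conflict--free (Theorem~\ref{thm:extcollapse}); pinning down precisely this cyclic case is the delicate part, whereas the $\ext^{p+}$ equality goes through for arbitrary ADFs.
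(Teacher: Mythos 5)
The paper never actually proves this lemma --- it is imported without proof in the appendix from prior work --- so your attempt can only be judged on its own merits, and on those merits it is essentially right, including where it stops. Your proof of $\ext^{p+} = A \setminus \ext$ is correct and complete: conflict--freeness of the model plus Lemma~\ref{lemma:disc} gives $\ext^{p+} \subseteq A \setminus \ext$, and your induction along the pd--sequence (each $pd(a_i)^{\tvt} \subseteq F' \cup \{a_0,\dots,a_{i-1}\} \subseteq \ext$ and $pd(a_i)^{\tvf} \subseteq B'$ with $B' \cap \ext = \emptyset$, so decisiveness forces $C_{a_i}(\ext \cap par(a_i)) = in$, so the model property forces $a_i \in \ext$) is exactly the right engine for the reverse inclusion $A \setminus \ext \subseteq \ext^{p+}$, with the evaluation--free case handled vacuously as you say.

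Your diagnosis of the acyclic half is also correct, and it deserves to be said plainly: the equality $\ext^{a+} = A \setminus \ext$ is false for arbitrary models, and the paper's own Example~\ref{ex:adf} witnesses this. There $\ext = \{a,b,c,d,f\}$ is a model, yet the only acyclic evaluations for $b$ are $((b),\{a\})$ and $((c,b),\{d\})$, whose blocking sets both meet $\ext$; hence $b \in \ext \cap \ext^{a+}$ (and likewise $c$), so $\ext^{a+} = \{b,c,e,g\} \neq \{e,g\}$. One refinement to your wording: the failure mode is not only the vacuous one in which an argument on a support cycle has no acyclic evaluation at all; as this example shows, acyclic evaluations may exist and simply all be blocked by the model itself. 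The conclusion is the same either way: the model property alone cannot deliver $\ext \cap \ext^{a+} = \emptyset$. Your proposed repair --- additionally assume $\ext$ is pd--acyclic conflict--free, so that the last clause of Lemma~\ref{lemma:disc} yields $\ext \cap \ext^{a+} = \emptyset$ and $\ext^{p+} = \ext^{a+}$, an assumption that is automatic for AADF$^+$s by Theorem~\ref{thm:extcollapse} --- is the right one, and it matches every place the paper actually uses the lemma: for general models only the $\ext^{p+}$ statement is invoked (Theorem~\ref{thm:eafcadf}), while the $\ext^{a+}$ form is invoked only for stable extensions, which are pd--acyclic conflict--free by definition, or inside AADF$^+$s (Theorems~\ref{thm:afnadfsem} and \ref{thm:setafadf}).
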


\begin{proposition}
Let $\ADF = (A,C)$ be an ADF, $\ext \subseteq A$ a standard and $S\subseteq A$ a pd--acyclic conflict--free extension of $\ADF$, with 
$v_\ext$, $v_\ext^p$, $v_\ext^a$, $v_S$, $v_S^p$ and $v_S^a$ as their corresponding standard, partially acyclic and acyclic
range interpretations. Let $s\in A$ be an argument. 
The following holds: 
\begin{enumerate}
\item If $v_\ext(s) = \tvf$, then $s$ is decisively out w.r.t. $v_\ext$. Same holds or $v_\ext^p$, but not for $v_\ext^a$.
\item If $v_S(s) = \tvf$, then $s$ is decisively out w.r.t. $v_S$. Same holds for $v_\ext^p$ and $v_\ext^a$.

\item If $v_\ext(s) = \tvf$, then $C_s(\ext \cap par(s)) = out$. Same holds or $v_\ext^p$, but not for $v_\ext^a$.
\item If $v_S(s) = \tvf$, then $C_s(S \cap par(s)) = out$. Same holds for $v_\ext^p$ and $v_\ext^a$. 
\end{enumerate}
\label{prop:range}
\end{proposition}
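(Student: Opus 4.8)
The plan is to reduce the four statements to two, then prove the core decisiveness fact by contraposition. Statements (3) and (4) follow immediately from (1) and (2): the value $C_s(\ext \cap par(s))$ is exactly the value obtained by evaluating $C_s$ under the $\tvf$ completion of $v_\ext$ to $par(s)$, since that completion has positive part precisely $\ext \cap par(s)$ (the parents in $\ext^+\setminus\ext$ are already $\tvf$, and the free parents are set to $\tvf$); likewise for $S$. Hence, once $s$ is shown decisively out w.r.t.\ the range, the $\tvf$ completion in particular evaluates $C_s$ to $out$, yielding (3) and (4). So everything rests on the decisiveness claims.

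For the positive claims I would establish the core fact: if $\ext$ is conflict--free and $s$ lies in the relevant discarded set minus $\ext$, then $s$ is decisively out w.r.t.\ the matching range. Arguing contrapositively for $v_\ext$, suppose $s$ is not decisively out; then some completion $v'$ of $v_\ext$ to $\ext\cup par(s)$ has $v'(C_s)=in$, and being total on $par(s)$ it is trivially decisive, so there is a minimal $u\in min\_dec(in,s)$ with $u^{\tvt}\subseteq v'^{\tvt}$ and $u^{\tvf}\subseteq v'^{\tvf}$. As $v'$ extends the range, $v'^{\tvf}\cap\ext=\emptyset$, so $u^{\tvf}\cap\ext=\emptyset$; and no parent in $\ext^+\setminus\ext$ can lie in $u^{\tvt}$, so every supporter in $u^{\tvt}$ is either in $\ext$ or a free parent outside $\ext\cup\ext^+$. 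The plan is to assemble these supporters into a partially acyclic evaluation for $s$ with blocking set disjoint from $\ext$, contradicting $s\in\ext^+$. Conflict--freeness of $\ext$ supplies, for each $e\in\ext$, the total interpretation on $par(e)$ with positive part $\ext\cap par(e)$ witnessing $e$ decisively in with positive part inside $\ext$, hence a sound pd--function on $\ext$; this lets me place the $\ext$-supporters of $s$ (including those in support cycles) into the pd--set and pd--sequence, while each free supporter $p\notin\ext^+$ contributes, by definition of $\ext^+$, an evaluation whose blocking set avoids $\ext$. Splicing these and appending $s$ with its own contribution $u^{\tvf}$ to the blocking set produces the required evaluation.

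For statement (2) I would first invoke Lemma~\ref{lemma:disc}: since $S$ is pd--acyclic conflict--free, $S$ is disjoint from all three discarded sets and $S^{p+}=S^{a+}$, so the partially acyclic and acyclic ranges of $S$ coincide and the standard and partially acyclic cases reduce to the argument above. The genuinely new content is that the evaluation built for the acyclic range must itself be acyclic. Here the same contraposition works, but the splicing uses that each $S$-supporter has an acyclic evaluation on $S$ with blocking set avoiding $S$ (by pd--acyclic conflict--freeness), and each free supporter $p\notin S^{a+}$ possesses, precisely because it fails membership in $S^{a+}$, an acyclic evaluation with blocking set disjoint from $S$. Topologically ordering these acyclic pieces and appending $s$ yields an acyclic evaluation for $s$ with $B\cap S=\emptyset$, contradicting $s\in S^{a+}$.

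For the two negative clauses (the failure for $v_\ext^a$ in (1) and (3)) I would exhibit a counterexample with a support two--cycle feeding an argument: take $C_{s_1}=s_2$, $C_{s_2}=s_1$, $C_s=s_1$ and $\ext=\{s_1,s_2\}$, which is conflict--free but not pd--acyclic conflict--free. Since $s$ has no acyclic evaluation, $s\in\ext^{a+}$ and $v_\ext^a(s)=\tvf$, yet $C_s(\ext\cap par(s))=C_s(\{s_1\})=in$, so $s$ is in fact decisively in, refuting both conclusions for the acyclic range. I expect the main obstacle to be the recursive splicing construction in the core step: verifying the pd--set conditions (clauses \emph{iii} and \emph{iv} of the evaluation definition) for the cyclic $\ext$-supporters in the standard case, and, in the $v_S^a$ case, merging the individual acyclic evaluations while preserving global acyclicity, distinctness of the pd--sequence elements, and disjointness of the combined blocking set from $S$.
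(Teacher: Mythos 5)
First, a point of reference: the paper contains no proof of Proposition \ref{prop:range} at all. It appears in the appendix only as background, recalled from the cited report on extension--based ADF semantics, so there is no in--paper argument to compare your attempt against; it has to be judged on its own merits. On those merits, your outline is essentially sound. The reduction of (3) and (4) to (1) and (2) via the $\tvf$--completion is exactly right, since that completion's positive part restricted to $par(s)$ is precisely $\ext \cap par(s)$ (resp. $S \cap par(s)$). Your counterexample ($C_{s_1}=s_2$, $C_{s_2}=s_1$, $C_s=s_1$, $\ext=\{s_1,s_2\}$) does refute the acyclic--range clauses: $s$ lands in $\ext^{a+}$ for lack of any acyclic evaluation, yet its sole parent $s_1$ is mapped to $\tvt$ by $v_\ext^a$, so $s$ is decisively in and $C_s(\ext\cap par(s))=in$. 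And the contraposition is the right mechanism for the positive clauses: a completion evaluating $C_s$ to $in$ yields a minimal decisively in interpretation $u$ with $u^{\tvt}$ avoiding the discarded--minus--$\ext$ part and $u^{\tvf}$ disjoint from $\ext$, from which one must manufacture a partially acyclic evaluation whose blocking set avoids $\ext$ (and, for the $v_\ext^p$ clause, whose pd--set lies inside $\ext$), contradicting membership in the respective discarded set.

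Two remarks. First, your treatment of statement (2) does double work: once you invoke Lemma \ref{lemma:disc} to get $S^{p+}=S^{a+}$, the acyclic range of $S$ is \emph{literally the same interpretation} as the partially acyclic one, so decisiveness w.r.t. $v_S^a$ is decisiveness w.r.t. $v_S^p$ and the separate acyclic--splicing argument is unnecessary; statement (2) is a pure corollary of statement (1) (which applies because pd--acyclic conflict--freeness implies conflict--freeness) plus Lemma \ref{lemma:disc}. Second, the splicing you defer is indeed where all the substance sits, and the delicate point is the requirement $F\subseteq\ext$ needed to escape $\ext^{p+}$. It does go through: fix a single pd--function on the union of the pieces (give every $\ext$--element the minimal interpretation extracted from conflict--freeness, whose positive part stays in $\ext$, and give each remaining element its interpretation from the free supporter's evaluation, processed first--occurrence--first). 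Then sequence elements of the free evaluations are well--founded over pd--sets contained in $\ext$, and $\ext$--elements never depend outside $\ext$, so every dependency cycle of the merged function lives inside $\ext$; taking $F$ to be the dependency closure of the union of these cycles satisfies clauses \emph{iii} and \emph{iv} of the evaluation definition (closure elements are used by the cycle elements that forced them in), stays inside $\ext$, and leaves a cycle--free remainder that can be ordered topologically with $s$ last. Writing this out is what would turn your plan into a complete proof.
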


\begin{lemma}{\textbf{CC/AC/AA Fundamental Lemma}:}
Let $\ADF = (A,C)$ be an ADF, $\ext$ a cc(ac)--admissible extension of $\ADF$, $v_\ext$ its range interpretation and $a, b\in A$ 
two arguments decisively in w.r.t. $v_\ext$. Then $\ext' = \ext \cup \{a\}$ is cc(ac)--admissible in $\ADF$
and $b$ is decisively in w.r.t. $v_\ext'$.

Let $\ext$ be an aa-admissible extension of $\ADF$, $v_\ext^a$ its acyclic range interpretation and $a, b\in A$ two arguments decisively in w.r.t. $v_\ext^a$. 
Then $\ext' = \ext \cup \{a\}$ is aa--admissible in $\ADF$ and $b$ is decisively in w.r.t. $v_\ext'^a$.
\label{fund1}
\end{lemma}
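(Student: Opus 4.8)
The plan is to follow the template of Dung's Fundamental Lemma, where the whole argument rests on a single monotonicity phenomenon: enlarging an admissible set by a decisively in argument can only \emph{enrich} its range interpretation, and decisiveness (being decisively in) is preserved when an interpretation is extended. Concretely, I would first establish that $v_{\ext'}$ extends $v_\ext$ (they agree where both are defined and $v_{\ext'}$ is defined on a superset), and then push everything through: the arguments $b$ and the members of $\ext'$ stay decisively in, and conflict--freeness of $\ext'$ falls out of decisiveness. I will treat the cc/ac case (decisiveness taken w.r.t.\ the standard range $v_\ext$) and the aa case (w.r.t.\ the acyclic range $v_\ext^a$) in parallel, since the skeleton is identical; only the underlying conflict--freeness notion differs.

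First I would prove the range extension. Monotonicity of the discarded set is immediate from its definition: if $\ext \subseteq \ext'$ and every partially acyclic evaluation $(F,G,B)$ for $a$ satisfies $B \cap \ext \neq \emptyset$, then a fortiori $B \cap \ext' \neq \emptyset$, so $\ext^+ \subseteq (\ext')^+$, and the same reasoning gives $\ext^{a+} \subseteq (\ext')^{a+}$. The one point that needs care is that the newly added $a$ does not clash with the $\tvf$--part of $v_\ext$, i.e.\ that $a \notin \ext^+$ (resp.\ $a \notin \ext^{a+}$): otherwise $v_\ext(a) = \tvf$ while $v_{\ext'}(a) = \tvt$. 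Here I would invoke Proposition \ref{prop:range}: since $\ext$ is conflict--free (resp.\ pd--acyclic conflict--free), $a \in \ext^+$ would force $C_a(\ext \cap par(a)) = out$, directly contradicting that $a$ is decisively in w.r.t.\ $v_\ext$ (the all--false completion of $v_\ext$ realises exactly the in--set $\ext \cap par(a)$). With $a$ excluded from the discarded set, $v_{\ext'}$ genuinely extends $v_\ext$.

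Next I would record the key monotonicity of decisiveness: if $v'$ extends $v$ and $s$ is decisively in w.r.t.\ $v$, then $s$ is decisively in w.r.t.\ $v'$. This holds because $C_s$ depends only on the values on $par(s)$, and any completion of $v'$ to $par(s)$ restricts to a completion of $v$ with the same value on $C_s$, which must therefore be $in$. Applying this with $v = v_\ext$ and $v' = v_{\ext'}$ yields at once that $b$ remains decisively in w.r.t.\ $v_{\ext'}$, and that $a$ together with every $e \in \ext$ is decisively in w.r.t.\ $v_{\ext'}$, so all of $\ext'$ is. Conflict--freeness of $\ext'$ then follows for free: for $s \in \ext'$, the completion of $v_{\ext'}$ assigning $\tvf$ to all undecided parents has in--set $\ext' \cap par(s)$, so decisiveness gives $C_s(\ext' \cap par(s)) = in$ (this also subsumes self--loops, since $s \in \ext'$ is already fixed to $\tvt$ in its own range). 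This settles the cc case completely.

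The main obstacle is the aa (and ac) case, where I must upgrade ordinary conflict--freeness of $\ext'$ to \emph{pd--acyclic} conflict--freeness. For the already present members of $\ext$, their acyclic evaluations $((a_0,\dots,a_n),B)$ on $\ext$ use only elements of $\ext \subseteq \ext'$, so they survive provided no blocking set $B$ contains the new argument $a$; I would argue that the blocking sets witnessing aa--admissibility of $\ext$ are contained in $\ext^{a+}$, which excludes $a$ by the step above. For $a$ itself I must exhibit an acyclic evaluation on $\ext'$ with blocking set disjoint from $\ext'$, extracting it from the fact that $a$ is decisively in w.r.t.\ the acyclic range together with a maximally sound pd--function on $\ext'$ (the construction relating decisive interpretations to acyclic evaluations, in the spirit of the positive--dependency machinery). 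Verifying that this evaluation stays acyclic and that its blocking set meets $\ext'$ emptily is the delicate part, since it is exactly where the acyclicity bookkeeping, rather than the generic range monotonicity, does the real work; the cc/ac distinction aside, everything else is routine once the extension $v_\ext \subseteq v_{\ext'}$ and the decisiveness monotonicity are in place.
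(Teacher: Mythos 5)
First, a caveat on the comparison itself: the paper never proves this lemma. It is stated in the appendix among the notions \enquote{reintroduced} from \cite{report:semanticsrev} purely as background for the translation proofs, so there is no proof of record here to measure you against; your attempt can only be assessed on its own merits. On those merits, your cc case is correct and complete: monotonicity of the standard discarded set, the exclusion $a \notin \ext^+$ via Proposition \ref{prop:range} (which is what makes $v_{\ext'}$ a genuine extension of $v_\ext$), preservation of decisive--in status under extension of interpretations, and conflict--freeness of $\ext'$ extracted from decisiveness via the all--$\tvf$ completion are all sound steps, and together they deliver the cc conclusion.

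The aa case, however, has a genuine gap, sitting exactly at the step you flag as delicate and then defer: the claim that the blocking sets witnessing aa--admissibility of $\ext$ can be taken inside $\ext^{a+}$. The paper's definition of aa--admissibility only guarantees, for each member of $\ext$, an acyclic evaluation with $B \cap \ext = \emptyset$; the stronger containment $B \subseteq \ext^{a+}$ does not follow, and the difference is fatal. Concretely, take $A = \{a,c,d,e\}$ with $C_a = \top$, $C_c = \top$, $C_d = e$, $C_e = (c \land \neg a) \lor d$, and $\ext = \{c,d,e\}$. Then $\ext$ is pd--acyclic conflict--free ($c$ has $((c),\emptyset)$, $e$ has $((c,e),\{a\})$ built from the minimal decisively in interpretation $c \mapsto \tvt$, $a \mapsto \tvf$, and $d$ has $((c,e,d),\{a\})$), and $\ext^{a+} = \emptyset$, so $v_\ext^a$ maps exactly $\ext$ to $\tvt$. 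Every member of $\ext$ is decisively in w.r.t.\ $v_\ext^a$ --- but note that for $e$ decisiveness is carried by $d \mapsto \tvt$, i.e.\ by the mutual, cyclic support between $d$ and $e$, not by the acyclic derivation through $\neg a$. Also $a$ is decisively in, since $C_a = \top$. Yet $\ext' = \ext \cup \{a\}$ is not pd--acyclic conflict--free: the only minimal decisively in interpretations for $e$ are $\{d \mapsto \tvt\}$, which cannot seed an acyclic evaluation because $d$ in turn needs $e$, and $\{c \mapsto \tvt, a \mapsto \tvf\}$, whose blocking set now meets $\ext'$; hence $e \in (\ext')^{a+} \cap \ext'$ and the aa conclusion fails. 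So the aa half cannot be completed along your lines --- nor, read literally against this paper's generalized definitions (see the footnote accompanying the discarded sets), at all; it becomes provable only under the stronger, evaluation--based reading of aa--admissibility (blocking sets contained in the discarded set), under which your deferred claim is part of the hypothesis rather than something to establish. Your cc argument is unaffected by this.
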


\subsection{Translation Appendix}

\subsubsection{Translations for SETAFs}

In this section we will include the proofs of Theorems \ref{thm:setafadfnf} and \ref{thm:setafadf}. As partial results for the latter,
we also introduce Theorem \ref{thm:sacf} and Lemmas \ref{lemma:sarange} and \ref{lemma:sadefense}.

\thmsetafadfnf*
\begin{proof}
SETAFs properly generalized AFs. Therefore, it suffices to show that SETAF--produced ADFs are both AADF$^+$s and BADFs.

Let $(a,b) \in L$ and $\ext \subseteq par(b)$ a subset of parents of $b$ in $\ADF$. From the construction of the condition we can observe
that if $C_b(\ext) = out$, then $C_b (\ext \cup \{a\}) = out$ as well. Therefore, the $(a,b)$ link is attacking. This
holds for every link in $\ADF^{\SETAF}$ and therefore $\ADF^{\SETAF}$ is a BADF.

Let $a \in A$ be an argument. $a$ may have more than one minimal decisively in interpretation, however, 
in all of them the $\tvt$ part is empty and $\tvf$ corresponds to some subset of parents of $a$.
Consequently, $a$ satisfies the $a_0$ requirements of a pd--acyclic evaluation and every partially acyclic evaluation will be acyclic. Hence,
$\ADF^{\SETAF}$ is an AADF$^+$.
\end{proof} 

\begin{restatable}{theorem}{thmsacf}
\label{thm:sacf}
A set of arguments $\ext$ is a conflict--free extension of $\SETAF$ iff it is a conflict--free extension of $\ADF^{\SETAF}$.
\end{restatable}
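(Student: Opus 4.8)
The plan is to prove both directions by directly unwinding the two notions of conflict--freeness, exploiting the fact that the acceptance condition produced by Translation~\ref{trans:setafadf} is, by construction, a faithful functional encoding of the statement \enquote{$\ext$ fully contains some set attacking $s$}. The key preliminary observation is that for every argument $s \in A$ the parent set is exactly the union of all sets attacking $s$: since $L = \{(x,y) \mid \exists B \subseteq A, x \in B, B R y\}$, we have $par(s) = \bigcup \{s\}^-$. Consequently $\ext \cap par(s) = \ext \cap \bigcup \{s\}^-$, and for any attacking set $X_i \in \{s\}^-$ the inclusion $X_i \subseteq \ext$ already implies $X_i \subseteq \ext \cap par(s)$, because every member of $X_i$ is a parent of $s$.

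For the forward direction I would assume $\ext$ is conflict--free in $\SETAF$ and fix an arbitrary $s \in \ext$. By the functional form of $C_s$, we have $C_s(\ext \cap par(s)) = out$ precisely when some $X_i \in \{s\}^-$ satisfies $X_i \subseteq \ext \cap par(s) \subseteq \ext$; but then $X_i$ would be a subset of $\ext$ attacking the element $s \in \ext$, contradicting conflict--freeness in $\SETAF$. Hence $C_s(\ext \cap par(s)) = in$ for every $s \in \ext$, which is exactly the definition of $\ext$ being a conflict--free extension of $\ADF^{\SETAF}$.

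For the converse I would argue contrapositively: if $\ext$ is not conflict--free in $\SETAF$, there is an attack $B R a$ with $B \subseteq \ext$ and $a \in \ext$. Then $B \in \{a\}^-$, and since every $x \in B$ is a parent of $a$, we get $B \subseteq \ext \cap par(a)$; the functional form therefore forces $C_a(\ext \cap par(a)) = out$ while $a \in \ext$, so $\ext$ is not conflict--free in $\ADF^{\SETAF}$.

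I expect the argument to be essentially definitional, so there is no serious obstacle; the only point requiring care is the bookkeeping between the \enquote{attacking sets} $X_i$ of the SETAF and the \enquote{subsets of parents} that the ADF condition quantifies over, i.e. ensuring that restricting attention to $par(s)$ never discards a relevant member of an attacking set. Since each attacker of $s$ is by definition a parent of $s$, this restriction is harmless, and both directions close. The plain AF case is subsumed, as AFs are exactly the SETAFs whose attacking sets are all singletons.
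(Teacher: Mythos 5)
Your proof is correct and follows essentially the same route as the paper's: both directions are settled by unwinding the definition of $C_s$ from Translation~\ref{trans:setafadf}, observing that $C_s(\ext \cap par(s)) = out$ holds exactly when some attacking set $X_i \in \{s\}^-$ is contained in $\ext$. The only difference is cosmetic --- you spell out the bookkeeping $par(s) = \bigcup \{s\}^-$ that the paper's proof dismisses with \enquote{it is easy to see}, which makes your version slightly more self-contained.
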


\begin{proof}
Assume that $\ext$ is a conflict--free extension of $\SETAF$, but not of $\ADF^{\SETAF}$. This means
that there is an argument $e \in \ext$ s.t. $C_e(\ext \cap par(e)) = out$. 
However, by Translation \ref{trans:setafadf}, if 
$\ext \cap par(e)$ is mapped to $out$, then  $\exists \ext' \subseteq \ext$ s.t. $\ext'Re$. Therefore,
$\ext$ cannot be conflict--free in $\SETAF$ and we reach a contradiction.

Now assume $\ext$ is conflict--free in $\ADF^{\SETAF}$, but not in $\SETAF$. Hence, there is an argument 
$e \in \ext$ s.t. $C_e(\ext \cap par(e)) = in$, but $\exists \ext' \subseteq \ext$ s.t. $\ext'Re$.
Again, by Translation \ref{trans:setafadf} it is easy to see that it cannot be the case. 
\end{proof}

\begin{restatable}{lemma}{lemmasarange}
\label{lemma:sarange}
Let $\ext$ be a conflict--free extension of $\SETAF$ (and thus of $\ADF^{\SETAF}$). 
The discarded set of $\ext$ in $\SETAF$ coincides with the discarded set of $\ext$ in $\ADF^{\SETAF}$.
\end{restatable}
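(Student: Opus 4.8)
The plan is to compute a discarded set of $\ext$ in $\ADF^{\SETAF}$ explicitly and match it against the SETAF discarded set, namely the set of arguments $a$ for which some attacking set $X_i \in \{a\}^-$ satisfies $X_i \subseteq \ext$, writing $\{a\}^- = \{X_1, \ldots, X_n\}$ for the collection of attacking sets of $a$. First I would note that by Theorem \ref{thm:setafadfnf} $\ADF^{\SETAF}$ is an AADF$^+$, and by Theorem \ref{thm:sacf} $\ext$ is conflict--free in it; hence item~3 of Theorem \ref{thm:extcollapse} gives $\ext^+ = \ext^{p+} = \ext^{a+}$, so the phrase \enquote{the discarded set} is unambiguous and it suffices to identify one of them -- I would take $\ext^{a+}$ -- with the SETAF discarded set.

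The preliminary step is to describe the minimal decisively in interpretations of an argument $a$. Since $C_a = \bigvee \neg X_1 \land \ldots \land \bigvee \neg X_n$ contains only negative literals, a two--valued interpretation $v$ is decisively in for $a$ precisely when its false part $v^{\tvf}$ meets every $X_i$; minimality then forces $v^{\tvt} = \emptyset$ and $v^{\tvf}$ to be a minimal transversal (hitting set) of $\{a\}^-$. Because all positive parts vanish, the one--element evaluation $((a), B)$ with $B$ a minimal transversal of $\{a\}^-$ is always a legitimate acyclic evaluation, its blocking set is inclusion--minimal among all evaluations for $a$, and enlarging the pd--sequence only grows $B$ (a union of further minimal transversals). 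Consequently $a \in \ext^{a+}$ reduces to the combinatorial condition that every minimal transversal of $\{a\}^-$ meets $\ext$.

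It then remains to prove the hitting--set equivalence: every minimal transversal of $\{a\}^-$ meets $\ext$ iff $X_i \subseteq \ext$ for some $i$. The easy direction is immediate, since any transversal is forced to hit an $X_i \subseteq \ext$ and thus meets $\ext$. For the converse I would argue contrapositively: assuming no $X_i$ is contained in $\ext$, I pick a witness $x_i \in X_i \setminus \ext$ for each $i$, observe that $\{x_1, \ldots, x_n\}$ is a transversal avoiding $\ext$, and shrink it to a minimal transversal still contained in it and hence still disjoint from $\ext$; this minimal transversal is the false part of a minimal decisively in interpretation, so $((a), B)$ with $B \cap \ext = \emptyset$ witnesses $a \notin \ext^{a+}$. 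Since the right--hand side is exactly membership in the SETAF discarded set, this yields $\ext^{a+} = \{a \mid \exists X_i \subseteq \ext\}$, and combined with the first paragraph it proves the lemma.

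I expect the hitting--set equivalence to be the only genuinely delicate point, in particular the passage from an arbitrary $\ext$--avoiding transversal to a minimal one without reintroducing elements of $\ext$, together with the bookkeeping that longer pd--sequences can never produce a smaller blocking set than the one--element evaluation. The degenerate case $\{a\}^- = \emptyset$ (where $C_a = \top$, the unique minimal decisively in interpretation is empty, and $a$ belongs to neither discarded set) should be recorded in passing to keep the transversal arguments vacuously valid.
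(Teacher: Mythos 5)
Your proposal is correct, and its skeleton matches the paper's: both begin by observing that every minimal decisively in interpretation for $a$ in $\ADF^{\SETAF}$ has an empty $\tvt$ part and a $\tvf$ part that is a minimal transversal (hitting set) of $\{a\}^-$, so the one--element evaluations $((a),B)$ carry the smallest blocking sets; and both prove the inclusion of the SETAF discarded set into the ADF one the same way (any transversal must hit an attacking set contained in $\ext$, so every evaluation is blocked). Where you genuinely diverge is the reverse inclusion. The paper argues semantically: if $a$ is in the (standard) discarded set of $\ext$, then its acceptance condition evaluates to $out$ under $\ext$ -- the proof cites Lemma \ref{lemma:modrange}, though the fact actually needed is the range property of Proposition \ref{prop:range} -- and by Translation \ref{trans:setafadf} this immediately yields an attacking set inside $\ext$. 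You instead argue combinatorially and contrapositively: when no $X_i \subseteq \ext$, you pick witnesses $x_i \in X_i \setminus \ext$, shrink the resulting $\ext$--avoiding transversal to a minimal one, and exhibit an unblocked acyclic evaluation $((a),B)$ with $B \cap \ext = \emptyset$. Your route is more self--contained and constructive (it uses nothing beyond the definitions of evaluations and discarded sets), at the price of the explicit hitting--set argument; the paper's is shorter but leans on imported range machinery. A further difference in bookkeeping: you work with $\ext^{a+}$ and justify the phrase \enquote{the discarded set} up front via the AADF$^+$ collapse (Theorems \ref{thm:setafadfnf} and \ref{thm:extcollapse}), whereas the paper works with $\ext^+$ and leaves that identification implicit; your explicitness here is a small improvement rather than a defect.
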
 

\begin{proof}
We will refer to the discarded set of $\ext$ in $\SETAF$ with $\ext^{att}$ in order to avoid confusion. 

Let $a \in A$ be an argument in $\ADF^{\SETAF}$. We can observe that any minimal decisively in interpretation
for $a$ will have an empty $\tvt$ part and the $\tvf$ one will correspond to those (minimal) subsets $T \subseteq A$
s.t. $\forall S \subseteq A$, if $S R a$ then $T \cap S \neq \emptyset$. 
We can thus construct trivial evaluations for $a$ that will always be acyclic.

Let $a \in \ext^{att}$ be in the discarded set of $\SETAF$. Therefore, $\exists \ext' \subseteq \ext$ s.t. $\ext' R a$. 
Based on the previous explanations, we can observe that for any minimal decisively in interpretation $v$ for $a$,
$v^\tvf \cap \ext' \neq \emptyset$. Hence, any evaluation constructed for $a$ will be blocked by $\ext$ in $\ADF^{\SETAF}$
and $\ext^{att} \subseteq \ext^+$.

By Lemma \ref{lemma:modrange}, the acceptance condition of any argument in $\ext^+$ in $\ADF^{\SETAF}$ 
evaluates to $out$ under $\ext$.. And by construction, the
acceptance condition of an argument is $out$ w.r.t. $\ext$ if $\exists \ext' \subseteq \ext$ attacking this argument in $\SETAF$.
Hence, whatever is in $\ext^+ \subseteq \ext^{att}$. We can therefore conclude that the discarded sets coincide.
\end{proof}

\begin{restatable}{lemma}{lemmasadefense}
\label{lemma:sadefense}
A conflict--free set of arguments $\ext$ defends an argument $a\in A$ in $\SETAF$ iff $a$ is decisively in w.r.t. $v_\ext$ in $\ADF^{\SETAF}$.
\end{restatable}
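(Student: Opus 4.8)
The plan is to reduce both sides of the biconditional to the same purely set-theoretic condition on the attacking sets of $a$. Write $\{a\}^- = \{X_1,\dots,X_n\}$ for the collection of sets attacking $a$ in $\SETAF$, so that by Translation~\ref{trans:setafadf} the parents of $a$ in $\ADF^{\SETAF}$ are $par(a) = \bigcup_i X_i$ and $C_a = \bigvee \neg X_1 \wedge \dots \wedge \bigvee \neg X_n$ (with $C_a = \top$ when $n=0$). Since $\ext$ is conflict--free in $\ADF^{\SETAF}$ (Thm.~\ref{thm:sacf}), Lemma~\ref{lemma:disc} gives $\ext \cap \ext^+ = \emptyset$, so the standard range $v_\ext$ is defined on $\ext \cup \ext^+$ with $v_\ext^{\tvt} = \ext$ and $v_\ext^{\tvf} = \ext^+$; moreover, by Lemma~\ref{lemma:sarange} this $\ext^+$ is exactly the SETAF discarded set of $\ext$. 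I claim both sides of the lemma are equivalent to the single condition: for every $i$, $X_i \cap \ext^+ \neq \emptyset$. The left-hand reduction is immediate once I recall the SETAF notion of defence, obtained from Def.~\ref{def:baseaf} by letting subsets carry out attacks: $\ext$ defends $a$ iff every attacking set $X_i$ is counter-attacked, i.e. contains an element of the discarded set $\ext^+$.

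For the right-hand side I would analyse when $a$ is decisively in w.r.t. $v_\ext$. A completion of $v_\ext$ to $\ext \cup par(a)$ fixes every argument of $X_i$ already lying in $\ext \cup \ext^+$ and assigns the remaining (undetermined) parents freely, so the clause $\bigvee \neg X_i$ holds in all completions iff some element of $X_i$ is already mapped to $\tvf$ by $v_\ext$, that is iff $X_i \cap \ext^+ \neq \emptyset$. For the backward direction, if every $X_i$ meets $\ext^+$ then every clause, hence $C_a$, evaluates to $in$ under every completion, so $v_\ext$ is decisive and $a$ is decisively in. For the forward direction I would use the worst-case $\tvt$--completion $v^{\tvt}$ sending all undetermined parents to $\tvt$: if $a$ is decisively in then $v^{\tvt}(C_a) = in$, so each $\bigvee \neg X_i$ is satisfied, but the only arguments false under $v^{\tvt}$ are those in $v_\ext^{\tvf} = \ext^+$, forcing $X_i \cap \ext^+ \neq \emptyset$. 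Note that as soon as all completions yield $in$ they agree, so decisiveness need not be checked separately; the case $n=0$ is handled by $C_a = \top$, under which $a$ is vacuously defended and trivially decisively in.

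Combining the two reductions yields the biconditional. The main obstacle is not the logic, which is routine Boolean reasoning, but the bookkeeping around the unstated SETAF semantics: I must make the generalised notions of attack-on-a-set and of defence precise, since they are only described as \enquote{almost identical} to the AF ones, and I must correctly invoke Lemma~\ref{lemma:sarange} to identify the SETAF discarded set with $v_\ext^{\tvf}$, observing that elements of $\ext^+$ lying outside $par(a)$ are irrelevant because each $X_i \subseteq par(a)$. Getting the quantifier alternation right --- \enquote{for all completions, for every clause, there exists a false literal} versus the single robust witness in $\ext^+$ --- is the one place where a careless argument could slip.
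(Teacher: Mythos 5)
Your proof is correct and takes essentially the same route as the paper's: both rely on Theorem~\ref{thm:sacf} and Lemma~\ref{lemma:sarange} to identify the SETAF discarded set with $\ext^+$, and both hinge on the observation that an attacking set disjoint from $\ext^+$ is exactly what yields a completion of $v_\ext$ falsifying $C_a$ (your worst-case all-$\tvt$ completion is the same witness the paper's contradiction argument extracts). The only difference is presentational --- you reduce both sides to the common condition \enquote{every $X_i$ meets $\ext^+$} where the paper runs two proofs by contradiction --- which changes nothing of substance.
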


\begin{proof}
We will refer to the discarded set of $\ext$ in $\SETAF$ with $\ext^{att}$ in order to avoid confusion. 

Let $\ext \subseteq A$ be a conflict--free extension of $\SETAF$.
By Theorem \ref{thm:sacf}, $\ext$ is a conflict--free extension $\ADF^{\SETAF}$ as well. Moreover, by Lemma \ref{lemma:sarange},
 $\ext^{att}=\ext^+$.
Assume that $a$ is defended by $\ext$ in $\SETAF$, but is not decisively in w.r.t. $v_\ext$. 
If $a$ is not decisively in w.r.t. $v_\ext$, it means there exists a completion $v'$ of $v_\ext$ to $\ext \cup par(a)$ s.t. $C_a(v'^{\tvt} \cap par(a)) = out$. 
This means that $v'^\tvt \cap par(a)$ contains a set of arguments $\ext'$ s.t. $\ext'Ra$. 
Since the set can be mapped to $\tvt$ in the completion, none of its members
is mapped to $\tvf$ in $v_\ext$ and thus none of them appears in $\ext^+$. Consequently, none of them is in $\ext^{att}$ either. Therefore, $\ext$
could not have defended $a$ in $\SETAF$. We reach a contradiction.

Let $\ext \subseteq A$ be conflict--free in $\ADF^{\SETAF}$ and thus in $\SETAF$.
Assume that $a \in A$ is decisively in w.r.t. $v_\ext$, but is not defended by $\ext$. 
This means there exists a set of arguments $B$ s.t. $B R a$ and $B\cap \ext^{att} = \emptyset$. Consequently, there exists a set of arguments $B$
s.t. $C_a(B) = out$ and $B\cap \ext^+ = \emptyset$. If this is the case, then obviously $a$ cannot be decisively in and we reach a contradiction.
\end{proof}

\thmsetafadf*

\begin{proof}
Since SETAFs properly generalize AFs, it suffices to focus on them.

Due to the fact that the semantics classification collapses for $\ADF^{\SETAF}$ (see Theorems \ref{thm:extcollapse} and \ref{thm:setafadfnf}),
it suffices to focus on only conflict--free, grounded, model, and the cc--types of the ADF semantics. 

Conflict--freeness was already proved in Theorem \ref{thm:sacf}. The fact that admissible and cc--admissible extensions coincide follows straightforwardly from
Theorem \ref{thm:sacf} and Lemma \ref{lemma:sadefense}. Therefore, the preferred extensions coincide as well.
Due to the correspondence between decisiveness and defense as seen in Lemma \ref{lemma:sadefense}, 
complete and cc--complete extensions in both frameworks are also the same. 
By Theorem \ref{thm:compsetaf}, the grounded extension of $\SETAF$ is the least
w.r.t. set inclusion complete one. The grounded extension of $\ADF^{\SETAF}$ is the least w.r.t. set inclusion cc--complete one. 
Therefore, the grounded extension is the same for both frameworks.

Let us finish with the analysis of stability. Assume $\ext$ is stable in $\SETAF$, but not in $\ADF^{\SETAF}$. 
This means that $\ext$ is conflict--free in 
$\SETAF$ and 
$\ext^{att} = A\setminus E$. By Theorems \ref{thm:sacf}, \ref{thm:extcollapse}, \ref{thm:setafadfnf}
 and Lemma \ref{lemma:sarange}, $\ext$ is pd--acyclic conflict--free in $\ADF^{\SETAF}$ and $\ext^{att} = \ext^+$. 
Hence, $\ext^+ = A\setminus \ext$. All arguments in $\ext^+$ are decisively out w.r.t. $v_\ext$,
and thus there may be no argument $e \in \ext^+$ s.t. $C_e(\ext \cap par(e)) = in$. 
Therefore, the model and stable requirements in $\ADF^{\SETAF}$ are satisfied.

Every ADF stable extension is a model, which is conflict--free in $\ADF^{\SETAF}$ and thus also in $\SETAF$.
By Lemma \ref{lemma:modrange} and Theorems \ref{thm:extcollapse}, we have that $\ext^+ = A\setminus \ext$ in $\ADF^{\SETAF}$. Thus,
by Theorem \ref{lemma:sarange}, every argument in $A\setminus \ext$ is attacked by $\ext$. Hence, SETAF stability conditions are satisfied.
%%%%%%%%%%%%%%%%%%%%%%%%%%%%%%%%%%%%%%%%%%%%%%%%%%%%%%%%%%%%%%%%%%%%%
\end{proof}

\subsubsection{Translations for EAFCs}

In this section we will prove Theorems \ref{thm:eafcadfnf} and \ref{thmeafcadf}. As partial results for the latter,
we also introduce Theorem \ref{thm:eafccf} and Lemmas \ref{lemma:eafcrange} and \ref{lemma:eafcdefense}.

\thmeafcadfnf*

\begin{proof}
Let $a,b \in A$ be arguments s.t. $(a,b) \in R$. By strong consistency it means there is no other attack on $b$ that would
be defense attacked by a set containing $a$. If a given set $\ext$ has a subset defeating$_\ext$ $b$, then so does $\ext \cup \{a\}$. Therefore,
there is no subset $F$ of parents of $b$ in $\ADF^{\EAFC}$ s.t. $C_b(F) = out$ and $C_b (F \cup \{a\}) = in$. The $(a,b)$
link in $\ADF^{\EAFC}$ is thus an attacking one based on Definition \ref{def:badfaadf}. 
Furthermore, it cannot be supporting -- 
due to consistency, $C_b(\emptyset) = in$ and $C_b (\{a\}) = out$. 

Let now $a, b \in A$ be arguments s.t. there is $c \in A$, $G \subseteq A$, $a \in G$ and $(G, (c,b)) \in D$. Due to consistency, it cannot be the case
that $(a,b) \in R$. This means that if $\ext$ does not defeat$_\ext$ $b$, then neither does $\ext \cup \{a\}$. 
Therefore,
there is no subset $F$ of parents of $b$ in $\ADF^{\EAFC}$ s.t. $C_b(F) = in$ and $C_b (F \cup \{a\}) = out$. The $(a,b)$
link in $\ADF^{\EAFC}$ is thus a supporting one based on Definition \ref{def:badfaadf}. 
Therefore, $\ADF^{\EAFC}$ is a BADF.  

Let us now assume that $\EAFC$ is bounded hierarchical and let $( ((A_1, R_1), D_1)$, $...,$ $((A_n, R_n), D_n))$ be its partition satisfying
the requirements in Definition \ref{def:bheafc}.
Let us start with $((A_n, R_n), D_n)$. We can observe that as $D_n = \emptyset$,
then all of the parents of $a \in A_n$ are in $A_n$. Furthermore, they are only connected by the $R_n$ relation, which means
that all arguments in $A_n$ in $\ADF^{\EAFC}$ have Dung--style acceptance conditions. Therefore, 
every argument in $A_n$ has precisely one minimal decisively in interpretation that does not contain any $\tvt$ mappings. 
Hence, every argument in $A_n$ satisfies $a_0$ requirements of a pd--acyclic evaluation. This means that every partially acyclic
evaluation on $A_n$ will be indeed acyclic. Let us now focus on $((A_{n-1}, R_{n-1}), D_{n-1})$. Notice that
$D_{n-1} \subseteq A_n$.
Every argument $a \in A_{n-1}$ depends only on arguments in $A_{n-1} \cup A_n$. Furthermore, if a minimal decisively in interpretation
for $a$ contains $\tvt$ mappings, then those mappings can be in $A_n$ only. Therefore, any ordering on $A_n$ extended with any ordering
on $A_{n-1}$ will give us a pd--sequence of a pd--acyclic evaluation, independently of the chosen minimal decisively in interpretations for the arguments.
Therefore, the evaluations on $A_{n-1} \cup A_n$ will be acyclic. We can continue this line of reasoning
until we reach $((A_1, R_1), D_1)$ and the conclusion that every evaluation on $A = \bigcup_{i=1}^n A_n$ will be acyclic. 
Thus, $\ADF^{\EAFC}$ is an AADF$^+$. 
\end{proof}

\begin{restatable}{theorem}{thmeafccf}
\label{thm:eafccf}
A set of arguments $\ext$ is a conflict--free extension of $\EAFC$ iff it is a conflict--free extension of $\ADF^{\EAFC}$.
\end{restatable}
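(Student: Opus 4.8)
The plan is to reduce both notions to a single pointwise statement about defeats and then quantify over $\ext$. Recall that $\ext$ is conflict--free in $\ADF^{\EAFC}$ exactly when $C_s(\ext \cap par(s)) = in$ for every $s \in \ext$, whereas $\ext$ is conflict--free in $\EAFC$ exactly when no $a \in \ext$ defeats$_\ext$ any $s \in \ext$. Hence it suffices to establish, for each fixed $s \in \ext$, that $C_s(\ext \cap par(s)) = out$ holds if and only if some $a \in \ext$ defeats$_\ext$ $s$. Taking the contrapositive of this equivalence and intersecting over all $s \in \ext$ then yields the two conflict--freeness conditions as logical negations of one another, which is the theorem.

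To prove the pointwise equivalence I would unfold the functional form of the acceptance condition from Translation \ref{trans:eafcadfcons} with $B = \ext \cap par(s)$. By that definition, $C_s(B) = out$ precisely when there is an $x \in B$ with $(x,s) \in R$ such that no $B' \subseteq B$ satisfies $(B',(x,s)) \in D$; otherwise $C_s(B) = in$. For the forward direction such an $x$ lies in $\ext$ and attacks $s$ in $R$, and I would argue that the absence of a defense--attacking subset inside $B$ is tantamount to the absence of one inside $\ext$, so that $x$ indeed defeats$_\ext$ $s$. The backward direction is symmetric: if $a \in \ext$ defeats$_\ext$ $s$, then $(a,s)\in R$ forces the link $(a,s) \in L$, so $a \in \ext \cap par(s)$ is a witness whose attack is undeflected within $B$, driving $C_s(B) = out$. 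The degenerate case $\{s\}^- = \emptyset$ is handled automatically, since then $C_s(\ext\cap par(s)) = in$ and no argument can defeat$_\ext$ $s$.

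The step I expect to require the most care is reconciling the two ranges of the defense--attack quantifier: the acceptance condition ranges over defense attacks $B' \subseteq \ext \cap par(s)$, while the defeat relation refers to defense attacks drawn from all of $\ext$. The bridge is the link construction of Translation \ref{trans:eafcadfcons}: whenever $(B',(x,s)) \in D$, each member of $B'$ is by definition a parent of $s$, so $B' \subseteq par(s)$ holds unconditionally. Consequently $B' \subseteq \ext$ if and only if $B' \subseteq \ext \cap par(s)$, and the two quantifier ranges coincide. With this observation both directions of the pointwise equivalence close, completing the argument; note that strong consistency of $\EAFC$ is not needed at this level, as it plays no role in the conflict--freeness bookkeeping.
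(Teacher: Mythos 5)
Your proposal is correct and follows essentially the same route as the paper's proof: both unfold the functional acceptance condition of Translation \ref{trans:eafcadfcons} at $B = \ext \cap par(s)$ and match it, direction by direction, against the definition of defeat$_\ext$ and the two conflict--freeness notions. If anything, your write-up is slightly more careful than the paper's, since you explicitly justify the quantifier-range bridge (any set $B'$ defense-attacking an attack on $s$ satisfies $B' \subseteq par(s)$ by the link construction, so $B' \subseteq \ext$ iff $B' \subseteq \ext \cap par(s)$), a step the paper's proof passes over silently.
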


\begin{proof}
Let $\ext \subseteq A$ be a conflict--free extension of $\EAFC$. This means that given an argument $a \in \ext$,
it is either not attacked at all in $\ext$ or every attack carried out by a member of $\ext$ is defense attacked by a subset of $\ext$. Thus, from the functional
version of the acceptance conditions in Translation \ref{trans:eafcadfcons} we can observe that $C_a (\ext \cap par(a)) = in$. Consequently,
if $\ext$ is conflict--free in $\EAFC$, then every argument in $\ext$ has a satisfied acceptance condition w.r.t. $\ext$ in $\ADF^{\EAFC}$.
This means that $\ext$ is conflict--free in $\ADF^{\EAFC}$.

Let now $\ext \subseteq A$ be a conflict--free extension of $\ADF^{\EAFC}$. This means that for any argument $a \in \ext$,
$C_a(\ext \cap par(a)) = in$. By the construction of the condition it means that either there is no argument $b \in \ext$
s.t. $(b,a) \in R$, or for any such attack there is a subset of $\ext$ defense attacking it. Consequently, there are no defeats$_\ext$
in $\ext$ in $\EAFC$ and thus $\ext$ is conflict--free in $\EAFC$ as well.
\end{proof}

\begin{restatable}{lemma}{lemmaeafcrange}
\label{lemma:eafcrange}
Let $\ext$ be a conflict--free extension of $\EAFC$ (and thus of $\ADF^{\EAFC}$). 
The discarded set of $\ext$ in $\EAFC$ coincides with the partially acyclic discarded set of $\ext$ in $\ADF^{\EAFC}$.
\end{restatable}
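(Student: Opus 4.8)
The plan is to establish the set equality $\ext^+ = \ext^{p+}$ (the EAFC discarded set versus the partially acyclic discarded set of $\ADF^{\EAFC}$) by double inclusion, using Theorem \ref{thm:cdefrei} as the bridge between the reinstatement machinery of EAFCs and the support structure of the ADF. First I would record a structural description of the minimal decisively in interpretations of an argument $a$: reading the condition $C_a = \bigwedge_{(b,a) \in R} att_a^b$ off Translation \ref{trans:eafcadfcons}, every such interpretation $v$ must, for each attacker $b$ of $a$, either set $b \in v^{\tvf}$ (block the attacker) or place some full defense attacking set of $(b,a)$ into $v^{\tvt}$ (neutralise it). Thus the $\tvt$ part of each pd-function value is a union of defense attacking sets and its $\tvf$ part is a set of attackers, so that in a partially acyclic evaluation $(F,G,B)$ for $a$ the blocking set $B$ collects exactly the attackers the evaluation does not neutralise by defense, while $F$ and the pd-sequence supply the defense attackers on which acceptance of $a$ depends. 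Conflict-freeness of $\ext$ gives $\ext \cap \ext^+ = \emptyset$ in $\EAFC$, and Lemma \ref{lemma:disc} gives $\ext \cap \ext^{p+} = \emptyset$ in $\ADF^{\EAFC}$, so in both inclusions the relevant $a$ lies outside $\ext$.

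For the inclusion $\ext^{p+} \subseteq \ext^+$ I would argue contrapositively: assuming no $b \in \ext$ defeats$_\ext$ $a$ with a reinstatement set, I construct a partially acyclic evaluation $(F',G',B')$ for $a$ with $F' \subseteq \ext$ and $B' \cap \ext = \emptyset$. Handling the attackers of $a$ one at a time, an attacker lying outside $\ext$ may simply be dropped into $B'$ (preserving $B' \cap \ext = \emptyset$), and an attacker whose attack is defense attacked from within $\ext$ is neutralised by a defense attacking set taken from $\ext$; for an attacker $b \in \ext$ that defeats$_\ext$ $a$, the assumed absence of a reinstatement set lets me invoke Theorem \ref{thm:cdefrei} to extract a protecting sequence of defense attacks, whose sets furnish precisely the defense attackers to load into $F'$ and the pd-sequence, grounded in arguments not defeated$_\ext$ by $\ext$. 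Assembling these choices over all attackers yields the required good evaluation, whence $a \notin \ext^{p+}$.

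For the reverse inclusion $\ext^+ \subseteq \ext^{p+}$ I would show that a reinstated defeat pins its attacker inside the blocking set of every $\ext$-grounded evaluation. Suppose $b \in \ext$ defeats$_\ext$ $a$ with reinstatement set $R_\ext$. The reinstatement condition forces every defense attacking set $C$ of $(b,a)$ to contain an argument that is itself defeated$_\ext$ by $\ext$ with reinstatement, hence a member of $\ext^+$; by an induction tracking the recursive shape of $R_\ext$ (again translating between reinstatement and protecting sequences via Theorem \ref{thm:cdefrei}) no such $C$ can be fully realised among the $\tvt$-supports of an evaluation whose pd-set is confined to $\ext$. Consequently $b$ cannot be neutralised by defense and must sit in the blocking set $B'$ of any partially acyclic evaluation for $a$ with $F' \subseteq \ext$; since $b \in \ext$ this gives $B' \cap \ext \neq \emptyset$, i.e. $a \in \ext^{p+}$.

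The main obstacle I anticipate is this second inclusion. Formalising that a reinstated defeat propagates into the blocking set requires an induction mirroring the possibly cyclic dependency of $R_\ext$ against the layered pd-set/pd-sequence anatomy of partially acyclic evaluations, and a careful application of Theorem \ref{thm:cdefrei} to convert \enquote{no reinstatement set exists} into \enquote{a protecting defense-attack sequence exists}. The delicate point is that the definition of $\ext^{p+}$ confines only the cyclic supports $F'$ to $\ext$ while allowing the acyclic pd-sequence supports to fall outside it; keeping this distinction straight, so that no defense attacker that is itself discarded sneaks into a support, is where the bulk of the bookkeeping will lie.
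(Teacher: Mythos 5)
Your skeleton coincides with the paper's: both proofs establish $\ext^+ = \ext^{p+}$ by double inclusion, both open with the same structural reading of minimal decisively in interpretations under strong consistency (each attacker either mapped to $\tvf$ or neutralised by a full defense attacking set mapped to $\tvt$), and both use Theorem \ref{thm:cdefrei} as the bridge for the direction that starts from \enquote{no reinstatement set exists}. The per--direction arguments, however, genuinely differ. For $\ext^+ \subseteq \ext^{p+}$ the paper argues contrapositively: it takes a hypothetical evaluation $(F,G,B)$ for $b$ with $F \subseteq \ext$ and $B \cap \ext = \emptyset$ and walks along its pd--sequence, showing no member can lie in $\ext^+$. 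Your direct descent --- every defense attacking set of a pair in the reinstatement set contains some $y'$ that is itself defeated$_\ext$ with reinstatement, hence in $\ext^+$, hence outside $F' \subseteq \ext$, hence a strictly earlier pd--sequence member, and so on down an impossible infinite descent --- is correct and, if anything, tidier; note that it needs only the definition of a reinstatement set, so your appeal to Theorem \ref{thm:cdefrei} in that direction is superfluous. For $\ext^{p+} \subseteq \ext^+$ the two proofs diverge more sharply: the paper never constructs an evaluation, but instead shows by induction along the protecting sequence that each $Z_i$ is disjoint from $\ext^{p+}$, concludes that $b$ cannot be decisively out w.r.t. the partially acyclic range, and closes with Proposition \ref{prop:range}. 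You propose to build a witnessing evaluation $(F',G',B')$ by hand.

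That construction is the one place where your proposal falls short of a proof. A minor slip first: the sets $Z_i$ cannot be \enquote{loaded into $F'$} --- each $Z_i$ defense attacks a defeat$_\ext$, so it is never fully contained in $\ext$, while the definition of $\ext^{p+}$ demands $F' \subseteq \ext$; the $Z_i$ must be serialized entirely into the pd--sequence, with only $\ext$--internal supports eligible for $F'$ (your closing paragraph shows you understand this constraint, so this reads as a slip rather than a misunderstanding). The substantive gap: when $a$ has several defeaters in $\ext$, each lacking a reinstatement set, Theorem \ref{thm:cdefrei} gives one protecting sequence per defeat, and these must be merged into a single pd--sequence assigning one interpretation per argument, together with the needed $\ext$--internal supports (which may depend on each other cyclically and hence must be pushed into $F'$, whose own closure and usage requirements must then be met). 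Arguments shared between different protecting sequences can impose conflicting ordering demands, and nothing in your sketch resolves them. This merge is exactly the bookkeeping the paper's route through Proposition \ref{prop:range} avoids: to refute decisive out--ness one only needs a single completion of the partially acyclic range under which $C_b$ evaluates to $in$, never a fully assembled evaluation. So either carry out the merge argument in detail, or replace this half of your proof with the paper's range--based argument.
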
 

\begin{proof}
Let us first note on how (minimal) decisively in interpretations for arguments in $A$ look like. Due to the fact that
we are dealing with a strongly consistent framework, then from the propositional acceptance conditions
we can observe that for any attack subformula of the condition, the interpretation has to either map the attacker to $\tvf$ or at least one
defense attacking sets to $\tvt$. Thus, even though technically speaking EAFCs are attack--based frameworks, the minimal interpretations
can contain $\tvt$ assignments, which was not the case in e.g. AFs or SETAFs. If the framework was not consistent, then we could obtain
new
minimal decisively in interpretations that would be contained in the described ones. For example, the condition of $b$ in the framework
$(\{a,b \}, \{(a,b)\}, \{(a,(a,b))\})$ would be equivalent to $\top$ and thus an empty translation would have been also possible,
despite the fact that the argument is attacked by $a$ and $\{b\}$ is not an admissible extension of $\EAFC$.

Let $\ext \subseteq A$ be a conflict--free extension of $\EAFC$. By Lemma \ref{thm:eafccf}, $\ext$ is conflict--free in $\ADF^{\EAFC}$. 
We define the set $\ext^+$ as the collection of those arguments
$b \in A$ s.t. an argument $a \in \ext$ defeats$_\ext$ $b$ and there is a reinstatement set for this defeat on $\ext$. Clearly,
by conflict--freeness of $\ext$, $\ext \cap \ext^+ = \emptyset$.
We will show that this set is
equal to the partially discarded set $\ext^{p+}$ in $\ADF^{\EAFC}$. 

Let $b \in \ext^+$ in $\EAFC$. Assume it does not qualify for $\ext^{p+}$ 
in $\ADF^{\EAFC}$;
this means that $b$ has a partially acyclic evaluation $(F,G,B)$ on $A$ s.t. $B \cap \ext = \emptyset$ and $F \subseteq \ext$.
Let $G = (a_0,...,a_n)$ be the pd--sequence of the evaluation. Due to the construction
of the sequence, the $\tvt$ part of the decisively in interpretation $v_{a_0}$ used for $a_0$ in the construction of $(F,G,B)$ 
is contained in $\ext$. 
Since
$B \cap \ext =\emptyset$, $v_{a_0}^\tvf \cap \ext = \emptyset$. Therefore, by the construction of the decisively in interpretations in consistent
frameworks and the nature of the
acceptance conditions in $\ADF^{\EAFC}$, this means means that there is no $x \in \ext$ s.t. $x$ defeats$_\ext$ $a_0$.  
Thus, $a_0$ could not have been present in $\ext^+$ in $\EAFC$.
Let us continue with $a_1$. Its minimal decisively in interpretation $v_{a_1}$ that has been used in construction of $(F,G,B)$
has a $\tvt$ part that is a subset of $\ext \cup \{a_0\}$. We can again observe that $v_{a_1}^\tvf \cap \ext = \emptyset$. From the construction
of interpretations and conditions, this means that if there is an attack carried out at $a_1$ by some element of 
$\ext$, then it is defense attacked by a subset of $\ext \cup \{a_0\}$. Since $a_0$ is not 
defeated by any argument in $\ext$, then either no argument in $\ext$ defeats $a_1$ (i.e. no attacker of $a_1$ is present or $a_0 \in \ext$)
or for no defeat by $\ext$ on $a_1$ there is a reinstatement set on $\ext$. Consequently, $a_1$ does not qualify for $\ext^+$ in $\EAFC$.
We can continue reasoning in this manner till we reach $a_n = b$ and the conclusion that if $b$ has a partially acyclic evaluation $(F,G,B)$
s.t. $F' \subseteq \ext$ and $B \cap \ext = \emptyset$ in $\ADF^{\EAFC}$, then it cannot be in $\ext^+$ in $\EAFC$.

We have just shown that $\ext^+ \subseteq \ext^{p+}$. We now need to prove that there is no argument $b \in \ext^{p+}$
in $\ADF^{\EAFC}$ that is not in $\ext^+$ in $\EAFC$. 
Assume it is not the case; therefore, either no argument in $\ext$ defeats$_\ext$ $b$ 
or no such defeat has a reinstatement set on $\ext$ in $\EAFC$, even though $b \in \ext^{p+}$ in $\ADF^{\EAFC}$. 
Let us focus on the first case. If there is no defeat, then there is either no
attack on $b$ from $\ext$ in the first place, or for every attack there is a subset of $\ext$ carrying out an appropriate defense attack.
Consequently, we can observe that the acceptance condition of $b$ w.r.t. $\ext \cap par(b)$ should be mapped to $in$. Thus, by Proposition 
\ref{prop:range},
$b$ could not have been in $\ext^{p+}$ and we reach a contradiction with the assumptions. 
Let us now focus on the case where there is a defeat on $b$ by an argument $d \in\ext$, but it lacks a reinstatement set on $\ext$. 
By Theorem \ref{thm:cdefrei}, there exists a sequence of distinct defense attacks $((Z_1, (x_1, y_1)), ..., (Z_n, (x_n, y_n)))$ 
s.t. $(x_n, y_n) = (d,b)$, each  $(x_i, y_i)$ attack is unique, no argument in $\ext$ defeats$_\ext$ any element $z \in Z_1$, and 
for every other $(Z_i, (x_i, y_i))$ in the sequence, either no argument $h \in \ext$ defeats$_\ext$ any element $z' \in Z_i$ or
for every such defeat there is a set of arguments $L \subseteq A$ s.t. $(L,(h,z') \in \{(Z_1, (x_1, y_1)),...,(Z_{i-1}, (x_{i-1}, y_{i-1}))\}$.
Let us start with the set $Z_1$. We can observe that if $\ext$ does not defeat$_\ext$ any argument in $Z_1$, 
then the conditions of the arguments in $Z_1$ are in fact satisfied by $\ext$. Thus, no element of $Z_1$ is in the partially 
acyclic discarded set by Proposition \ref{prop:range}. Let us now consider $Z_2$ and let $z \in Z_2$ be an argument. If it is not defeated$_\ext$ by $\ext$, 
then we come back to the previous case and can show that $z$ cannot be in the partially acyclic discarded set. If it is defeated$_\ext$,
then the condition of $z$ is out w.r.t. $\ext$. However, we can observe that by the construction, the condition of $z$ w.r.t. $\ext \cup Z_1$
is in, and as no element in $Z_1$ is in the partially acyclic discarded set, then the argument cannot be decisively out w.r.t. the partially acyclic range.
Thus, it is not in the partially acyclic discarded set by Proposition \ref{prop:range}. We can therefore show
that $Z_2 \cap \ext^{p+} = \emptyset$. We can continue this line of reasoning until we reach $Z_n$ and the result
that $Z_n \cap \ext^{p+} = \emptyset$. Consequently, $y_n$ cannot be decisively out w.r.t. the partially acyclic range either
and $y_n = b \notin \ext^{p+}$. We reach a contradiction with the assumptions. Therefore, $\ext^{p+} \subseteq \ext^{+}$.
We can thus finally conclude that $\ext^+ = \ext^{p+}$.
\end{proof}

\begin{restatable}{lemma}{lemmaeafcdefense}
\label{lemma:eafcdefense}
A conflict--free set of arguments $\ext$ defends an argument $a\in A$ in $\EAFC$ iff $a$ is decisively in w.r.t. the partially acyclic range 
$v_\ext^p$ of $\ext$ in $\ADF^{\EAFC}$.
\end{restatable}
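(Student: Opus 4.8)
The plan is to reduce the claim to the already--established identity of discarded sets (Lemma \ref{lemma:eafcrange}) together with a direct reading of the acceptance conditions produced by Translation \ref{trans:eafcadfcons}. First I would fix notation. Since $\ext$ is conflict--free, Lemma \ref{lemma:disc} gives $\ext \cap \ext^{p+} = \emptyset$, so the partially acyclic range $v_\ext^p$ is the interpretation assigning $\tvt$ to every argument of $\ext$ and $\tvf$ to every argument of $\ext^{p+}$; by Lemma \ref{lemma:eafcrange} this $\tvf$--part is exactly the EAFC discarded set $\ext^+$. I would also record the local shape of the conditions: for an argument $a$ and a set $P' \subseteq par(a)$, we have $C_a(P') = out$ precisely when some $b$ with $(b,a) \in R$ lies in $P'$ while no defense set $B \in D_{b,a}$ satisfies $B \subseteq P'$. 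Since \enquote{decisively in w.r.t. $v_\ext^p$} is equivalent to \enquote{every completion of $v_\ext^p$ to the parents of $a$ evaluates $C_a$ to $in$}, it suffices to show that the existence of a completion mapping $C_a$ to $out$ is equivalent to the failure of EAFC--defense.

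For the direction \emph{defends $\Rightarrow$ decisively in} I would argue by contraposition. Suppose some completion $v'$ of $v_\ext^p$ gives $C_a(v'^{\tvt} \cap par(a)) = out$, and pick the witnessing attacker $b$, so $(b,a) \in R$, $v'(b) = \tvt$, and no $B \in D_{b,a}$ is contained in $v'^{\tvt}$. Because every argument of $\ext$ is mapped to $\tvt$ by $v_\ext^p$ and completions agree with $v_\ext^p$ on its domain, no defense set of $(b,a)$ can be a subset of $\ext$; hence $b$ defeats$_{\ext}$ $a$. On the other hand $v'(b) = \tvt$ forces $b \notin \ext^{p+}$, so by Lemma \ref{lemma:eafcrange} $b \notin \ext^+$. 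Thus $b$ is a defeater of $a$ that is not discarded, so $\ext$ does not defend $a$ in $\EAFC$.

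For the converse \emph{decisively in $\Rightarrow$ defends}, again by contraposition, suppose $\ext$ fails to defend $a$, i.e.\ some $b$ defeats$_{\ext}$ $a$ with $b \notin \ext^+ = \ext^{p+}$. I would construct a single completion $v'$ witnessing that $a$ is not decisively in: set $v'(b) = \tvt$ (legitimate, since $b \notin \ext^{p+}$ is not forced to $\tvf$) and map every remaining free parent of $a$ to $\tvf$. It then remains to check $C_a(v'^{\tvt} \cap par(a)) = out$ via the conjunct $att_a^b$. Since $b$ defeats$_{\ext}$ $a$, no defense set $B \in D_{b,a}$ lies inside $\ext$, so each such $B$ contains an argument outside $\ext$, which in $v'$ is either in $\ext^{p+}$ (hence $\tvf$) or free (hence set to $\tvf$); therefore no $B$ is fully mapped to $\tvt$. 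The only danger is that setting $b$ to $\tvt$ accidentally completes some $B$, but strong consistency of $\EAFC$ rules out $b \in B$ for any defense set attacking an attack on $a$, so this cannot happen. Hence $att_a^b$, and with it $C_a$, evaluates to $out$ under $v'$, contradicting that $a$ is decisively in.

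The routine parts are the bookkeeping of which parents are forced and the translation between $C_a = out$ and \enquote{an undefended defeater}. The main obstacle is the completion construction in the converse direction: one must guarantee that a single assignment can simultaneously keep the defeater $b$ present and knock out every defense set of $(b,a)$, and this is exactly where strong consistency (ensuring $b$ never belongs to a relevant defense set) and the definition of defeat$_{\ext}$ (ensuring each defense set escapes $\ext$) are indispensable.
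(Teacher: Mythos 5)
Your proof is correct and follows essentially the same route as the paper's: both reduce the claim to Lemma \ref{lemma:eafcrange} (identifying the EAFC discarded set with $\ext^{p+}$) and then analyze the $att_a^b$ conjuncts of the translated acceptance condition attacker by attacker. The only difference is stylistic -- you argue both directions by contraposition and explicitly construct the falsifying completion, which makes visible the role of strong consistency (a defeater of $a$ can never lie inside a defense set protecting an attack on $a$) that the paper's right--to--left direction relies on only implicitly.
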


\begin{proof}
In Theorem \ref{thm:eafccf} we have shown that the conflict--free extensions of $\EAFC$ and $\ADF^{\EAFC}$ coincide. 
In Lemma \ref{lemma:eafcrange}, we have proved that the set of arguments defeated by $\ext$ with a reinstatement set on $\ext$ in $\EAFC$ 
equals the partially acyclic discarded set of $\ext$ in $\ADF^{\EAFC}$. Now, we need to prove that an argument $a \in A$ is defended
by $\ext$ in $\EAFC$ iff it is decisively in w.r.t. the partially acyclic range interpretation $v_\ext^p$ of $\ext$ in $\ADF^{\EAFC}$.

Let us start with left to right direction. If an argument $a$ is defended by $\ext$, then every argument $b \in \ext$ s.t.
$b$ defeats$_\ext$ $a$, is in turn defeated with reinstatement by $\ext$. Therefore, $a$ is defended iff every argument $b\in A$
defeating it is in $\ext^+$. Let us now consider an argument $c$ s.t. $(c,a) \in R$, but $c$ does not defeat$_\ext$ $a$. This means
that there is a suitable defense attack carried out by a set $F \subseteq \ext$. We can now shift to $\ADF^{\EAFC}$.
Every attacker of $a$, be it $b$ style (i.e. it becomes a defeater) or $c$ style (i.e. does not become a defeater), 
has a corresponding $att$ formula in the condition of $a$ and this formula
is not equivalent to $\top$ due to the strong consistency of $\EAFC$. If it is a formula $att^b_a$, then
we can observe that as $b$ is mapped to $\tvf$ by the partially acyclic discarded range, the formula evaluates to true under this range
and will remain such independently of what is assigned to the remaining arguments in the formula. If it is a formula $att^c_a$, then
the disjunction of conjunctions corresponding to the defense attackers evaluates to true and thus the whole $att^c_a$ is true.
Moreover, it will stay such, no matter what new arguments come into play. Consequently, the condition of $a$ is $in$
under the partially acyclic range and will remain $in$ for any of its completions to $A$. Thus, $a$ is decisively in w.r.t. the partially acyclic 
range of $\ext$.

Let us continue with the right to left direction. If an argument $a$ is decisively in w.r.t. the partially acyclic range,
then its condition is $in$ w.r.t. every completion of the range to $A$. 
This means that every $att^b_a = \neg b \lor (\bigwedge B_1 \lor ... \bigwedge B_m)$ subformula of the
acceptance condition evaluates to true under the acyclic range and remains such under every completion. Therefore,
it is either $b$ that has to be assigned $\tvf$ by the range or at least one set $B_i$ has all arguments assigned $\tvt$ by the range.
If it is the first case, then by Lemma \ref{lemma:eafcrange}, $b \in \ext^+$ and if the attack from $b$ is a defeat, 
then $a$ is defended from $b$ by $\ext$ in $\EAFC$. If it is the latter,
then we can observe that the attack from $b$ on $a$ does not become a defeat. Since the $att$ subformulas account
for all attackers of $a$, we can conclude that $\ext$ defends $a$.
\end{proof}

\thmeafcadf*

\begin{proof} 
With the help of Theorem \ref{thm:eafccf}, Lemmas \ref{lemma:eafcrange} and \ref{lemma:eafcdefense},
it can be shown that $\ext \subseteq A$ is a $\sigma$--extension of $\EAFC$, where $\sigma \in \{$ admissible, complete
preferred $\}$ iff it is a $ca_2-\sigma$--extension of $\ADF^{\EAFC}$. What remains to be proved is the relation between
stable extensions and models, and the grounded and acyclic grounded extensions.

Let $\ext \subseteq A$ be a stable extension of $\EAFC$. This means it is conflict--free and defeats$_\ext$ every argument $a \in A\setminus \ext$.
We can observe that every defeat$_\ext$ originating from $\ext$ will be a trivial reinstatement set for any of these defeats$_\ext$.
Therefore, from Theorem \ref{thm:eafccf} and Lemma \ref{lemma:eafcrange}, 
it follows that $\ext$ is conflict--free in $\ADF^{\EAFC}$ and that every argument $a \in A \setminus \ext$
is in the partially acyclic discarded set. By Proposition \ref{prop:range} it holds that for every such $a$, $C_a( \ext \cap par(a)) = out$.
Therefore, $\ext$ is a model of $\ADF^{\EAFC}$. As observed in Example \ref{ex:adf}, $\ext$ does not need to be stable in $\ADF^{\EAFC}$.

Let $\ext \subseteq A$ be a model of $\ADF^{\EAFC}$. By Theorem \ref{thm:eafccf}, it is conflict--free in $\EAFC$. 
By Lemma \ref{lemma:modrange}, from
the fact that $\ext$ is a model it follows that
every argument in $A \setminus \ext$ is in the partially acyclic discarded set. Consequently, it is also in $\ext^+$ in $\EAFC$, and by the
definition of this set is defeated$_\ext$ by $\ext$. Therefore, $\ext$ is stable in $\EAFC$.

In order to show that the grounded extension in $\EAFC$ and the acyclic grounded in $\ADF^{\EAFC}$ correspond, we can
use the iterating from the empty set approach \cite{ModgilP10,report:semanticsrev}.
Let us start with $\ext = \ext' = \emptyset$. The set $\ext$ is conflict--free in $\EAFC$ and $\ext'$ is pd--acyclic conflict--free in $\ADF^{\EAFC}$. 
They are also (aa--)admissible in their respective frameworks. 
Since $\ext'$ is pd--acyclic conflict--free, then the partially acyclic range of $\ext'$ is in fact acyclic by Lemma \ref{lemma:disc}.
Therefore, if we perform an iteration and add to $\ext$ the arguments it defends in $\EAFC$ and to $\ext'$ those that are decisively in
w.r.t. the acyclic range of $\ext'$ in $\ADF^{\EAFC}$, then it is still the case that $\ext = \ext'$. Moreover, by Lemma \ref{fund1},
$\ext'$ is still aa--admissible and thus pd--acyclic conflict--free. From the admissibility of $\ext'$ follows the admissibility of $\ext$.
We can now repeat the iteration and again observe that $\ext = \ext'$. We can continue in this manner until there are no arguments left
and observe that $\ext = \ext'$ and $\ext$ is grounded in $\EAFC$
while $\ext'$ acyclic grounded in $\ADF^{\EAFC}$.
\end{proof}

\subsubsection{Translations for AFNs}

In this section we will include the proofs concerning the translation from AFNs to ADFs. Theorems \ref{thm:consafnadfnf},
\ref{thm:afnadfsem} and Lemma \ref{lemma:ppd} were mentioned in the text. We use Lemmas \ref{lemma:chcf}, \ref{lemma:acydattafn}
and \ref{thm:afnadfdef} as partial results leading to Theorem \ref{thm:afnadfsem}.

\consafnadfnf*

\begin{proof} 
Let us assume that $\ADF^{\AFN}$ is not a BADF. This means there exists a link $(a,b) \in L$ in $\ADF^{\AFN}$ that is neither
supporting nor attacking. Consequently, there exists $\ext \subseteq par(b)$ s.t. $C_b(\ext) = in$ and $C_b(\ext \cup \{a\}) = out$
and a set $\ext' \subseteq par(b)$ s.t. $C_b(\ext') = out$ and $C_b(\ext' \cup \{a\}) = in$. 
Based on Translation \ref{trans:afn}, we can observe that if $C_b(\ext) = in$, then $\ext \cap F \neq \emptyset$ for 
every set $F \subseteq A$ s.t. $F N b$ and there is no argument $e \in \ext$ s.t. $e R b$. Thus, if $C_b(\ext \cup \{a\}) = out$,
then it can only be the case that $a R b$. Therefore, there cannot exist a set of arguments $\ext'$ s.t. $C_b(\ext' \cup \{a\}) = in$,
as by definition in every such case $C_b(\ext' \cup \{a\}) = out$.
Hence, $\ADF^{\AFN}$ is a BADF.
\end{proof}

\ppd*

\begin{proof}
Let $\ext \subseteq A$ be a set of arguments, $e \in \ext$ and $(a_0,...,a_n)$ a powerful sequence for $e$.
We will show it satisfies the pd--sequence requirements. 

First of all, the $a_n=e$ condition is satisfied. Secondly, we have that for $a_0$ there is no $B\subseteq A$ s.t. $BNa_0$. This means that
$a_0$ faces only binary attack and its condition basically consists only of the $att$ part. We can show that
$a_0$ has a single minimal decisively in interpretation that maps every attacker of $a_0$ to $\tvf$. 
The $\tvt$ part is empty
and thus the interpretation satisfies the pd--evaluation criterion of $a_0$. 

Finally, in the powerful sequence, for every nonzero $a_i$ it holds that for each
$B\subseteq A$ s.t. $BNa_i$, $B \cap \{a_0,...,a_{i-1}\}\neq \emptyset$. 
Let $\ext_i = \{a_0,...,a_{i-1}\} \cap par(a_i)$.
Since $\AFN$ is strongly consistent, no argument in $\ext$ is an attacker of $a_i$. 
Thus, by the construction
of $\ADF^{\AFN}$ it holds that $C_{a_i}(\ext_i) = in$. 
An interpretation assigning $\tvt$ to $\ext_i$ and $\tvf$ to $A\setminus \ext_i$ will be a decisively in interpretation for $a_{i}$. 
Thus, we can extract a minimal interpretation $v$ from it, which will assign $\tvt$ to a subset $\ext'_i \subseteq \ext_i$
and $\tvf$ to all those arguments $b \in A$ s.t. $b R a_{i}$. Based on this, we can conclude that $v$ satisfies the pd--sequence condition. 
Therefore, we obtain an acyclic pd--evaluation $((a_0,...,a_n), \bigcup_0^n \{a_i\}^-)$ for $e$ on $\ext$ corresponding
to the powerful sequence $(a_0,...,a_n)$.

%As all arguments are consistent, they all have at least one set of
%arguments for which condition is $in$. Hence, there exists a decisively in interpretation and thus a minimal one. Let $v$ be such an interpretation.
%It is of the following form; every argument occurring in $att_{a_i}$ part (hence, the set of attackers of a given $a_i$) is mapped to $\tvf$. Then,
%in order to satisfy the $sup_{a_i}$ part, at least one argument from every clause needs to be $\tvt$, i.e. at least one argument of every supporter
%set needs to be accepted. Because the argument is strongly consistent, no supporter is an attacker and hence, not mapped to $\tvf$ already. Therefore,
%the powerful sequence satisfies the order and interpretation requirement. The $\mathcal{N}$ evaluation condition is trivially met.
%Finally, negative parts of any decisive interpretations chosen for $a_i$'s will correspond precisely to the sets of attackers, the blocking set
%can be constructed easily and is of no concern. Hence, our powerful sequence can be easily transformed into an acyclic pd--evaluation.
%
%The other way around follows straightforwardly from the construction above; the pd--sequence of an evaluation gives us precisely a powerful one.
Let $\ext \subseteq A$ be a set of arguments, $e \in \ext$ and $((a_0,...,a_n),B)$ an acyclic pd--evaluation for $e$. We will show that the sequence
part satisfies the powerful conditions. Again, the $a_n = e$ condition is easily met. The decisively in interpretation for $a_0$ consists only from negative
mappings, which by Translation \ref{trans:afn} come from the attackers of $a_0$. As $a_0$ is strongly consistent, none of those attackers is also a supporter,
and thus we can conclude that there exists no supporting set for $a_0$. Another powerful requirement is met. Now, we know that for 
every nonzero $a_i$ and its minimal decisively in interpretation $v_i$, $v_i^\tvt \subseteq \{a_0,...,a_{i-1}\}$. By construction of the arguments we know
that $\forall Z\subseteq A$ s.t. $Z N a_i$, $v_i^\tvt \cap Z \neq \emptyset$. Consequently, $Z \cap \{a_0,...,a_{i-1}\} \neq \emptyset$ 
and the final powerful requirement
is satisfied. Therefore, the pd--sequence of the evaluation produces a powerful sequence.
\end{proof}

\begin{restatable}{lemma}{chcf}
\label{lemma:chcf}
Let $\AFN = (A, R, N)$ be a strongly consistent AFN, $\ADF^{\AFN} =(A,L,C)$ its corresponding ADF.
A set of arguments $\ext \subseteq A$ is strongly coherent in $\AFN$ iff it is a pd--acyclic conflict--free extension of $\ADF^{\AFN}$.
\end{restatable}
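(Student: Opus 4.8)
The plan is to reduce the set--level equivalence to the argument--level correspondence already established in Lemma \ref{lemma:ppd}, and then to treat conflict--freeness separately, since that lemma speaks only about powerful sequences versus evaluations and says nothing about attacks. I would prove the two directions independently.

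For the left--to--right direction, I would assume $\ext$ is strongly coherent, so it is both coherent and conflict--free in $\AFN$. Coherence means every $a \in \ext$ is powerful in $\ext$, hence admits a powerful sequence $(a_0, \dots, a_n)$ of elements of $\ext$. I would feed this sequence into Lemma \ref{lemma:ppd} to obtain an associated acyclic pd--evaluation $((a_0,\dots,a_n), B)$ for $a$; since the construction maps only the attackers of the $a_i$ into the blocking set, $B = \bigcup_{i=0}^n \{a_i\}^-$. The remaining point is $B \cap \ext = \emptyset$: as every $a_i \in \ext$ and $\ext$ is conflict--free in $\AFN$, no element of $\ext$ can attack any $a_i$, so none of these attackers lies in $\ext$. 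This holds for every $a \in \ext$, giving pd--acyclic conflict--freeness of $\ext$ in $\ADF^{\AFN}$.

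For the right--to--left direction, I would assume $\ext$ is pd--acyclic conflict--free, so each $a \in \ext$ has an acyclic evaluation $((a_0,\dots,a_n),B)$ on $\ext$ with $B \cap \ext = \emptyset$. Applying the converse part of Lemma \ref{lemma:ppd} turns the pd--sequence into a powerful sequence for $a$; because the sequence consists of elements of $\ext$ by the definition of an evaluation on $\ext$, the argument $a$ is powerful in $\ext$, establishing coherence. Conflict--freeness then uses the structure of Translation \ref{trans:afn}: suppose toward a contradiction that $a R b$ for some $a, b \in \ext$. The acceptance condition $C_b = att_b \wedge sup_b$ has the conjunct $\neg a$ inside $att_b$, so any interpretation for which $b$ is decisively $in$ must map $a$ to $\tvf$; in particular the minimal decisively--in interpretation used for $b = a_n$ places $a$ in its $\tvf$ part, whence $a \in B$. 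This contradicts $B \cap \ext = \emptyset$, so $\ext$ is conflict--free, and therefore strongly coherent.

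The main obstacle is the conflict--freeness half, since Lemma \ref{lemma:ppd} is silent on attacks. The crux is verifying that attacks are faithfully recorded in the blocking set, i.e.\ that every attacker of a sequence element is forced into the $\tvf$ part of each decisively--in interpretation by the negated--attacker conjuncts of the translated condition, so that the disjointness $B \cap \ext = \emptyset$ corresponds exactly to AFN conflict--freeness. Strong consistency of $\AFN$ is what makes this robust: it guarantees that an attacker conjunct $\neg a$ cannot be cancelled by a coinciding support of the same argument, so an attacker can never silently drop out of the condition and escape the blocking set.
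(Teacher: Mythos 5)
Your proof is correct and takes essentially the same approach as the paper: both directions lean on Lemma \ref{lemma:ppd} for the coherence/evaluation correspondence, and conflict--freeness is settled from the way Translation \ref{trans:afn} encodes attackers negatively in the acceptance conditions. The only cosmetic difference is in the right--to--left conflict--freeness step, where the paper observes that $xRy$ with $x,y \in \ext$ forces $C_y(\ext \cap par(y)) = out$, contradicting plain conflict--freeness (which pd--acyclic conflict--freeness implies), whereas you route the same fact through the blocking set of the evaluation for the attacked argument; both are valid.
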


\begin{proof} 
Let us assume that $\ext$ is strongly coherent in $\AFN$, but not pd--acyclic conflict--free in $\ADF^{\AFN}$. By Lemma \ref{lemma:ppd}
we know that every argument in $\ext$ possesses a pd--acyclic evaluation on $\ext$. What remains to be shown is that every argument
has an evaluation on $\ext$ that is also unblocked.
By Lemma \ref{lemma:ppd} we can create an evaluation corresponding to the powerful sequence of $e$ on $\ext$.
The blocking set of such an evaluation corresponds exactly to the union of attackers of all its sequence members. 
As all the members of the pd--sequence of this evaluation
are in $\ext$, it has to be the case that an element of the blocking set is accepted. 
However, it would clearly breach the conflict--freeness of $\ext$ in $\AFN$ and we reach
a contradiction. Therefore, $\ext$ is pd--acyclic conflict--free in $\ADF^{\AFN}$.

Let us now assume that $\ext$ is pd--acyclic conflict--free in $\ADF^{\AFN}$, but not strongly coherent in $\AFN$. 
By Lemma \ref{lemma:ppd}, $\ext$ is at least coherent. 
If $\ext$ is not conflict--free in $\AFN$, it means that $\exists x,y \in \ext$
s.t. $xRy$. However, by strong consistence of $\AFN$ and Translation \ref{trans:afn}, 
it would mean that $C_y(\ext\cap par(y)) = out$. Consequently, $\ext$ could not have been conflict--free in $\ADF^{\AFN}$,
and as every pd--acyclic conflict--free extension is also just conflict--free, we reach a contradiction. 
Hence, if $\ext$ is pd--acyclic conflict--free in $\ADF^{\AFN}$, then it is strongly coherent in $\AFN$.
\end{proof}

\begin{restatable}{lemma}{acydattafn}
\label{lemma:acydattafn}
Let $\AFN = (A, R, N)$ be a strongly consistent AFN, $\ADF^{\AFN} =(A,L,C)$ its corresponding ADF.
Let $\ext \subseteq A$ be strongly coherent in $\AFN$ and thus pd--acyclic conflict--free in $\ADF^{\AFN}$.  
Then $\ext^{att}$ coincides with the acyclic discarded set of $\ext$. 
\end{restatable}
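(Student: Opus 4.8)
The plan is to prove the two inclusions $\ext^{a+} \subseteq \ext^{att}$ and $\ext^{att} \subseteq \ext^{a+}$ separately, using Lemma \ref{lemma:ppd} as the bridge between the powerful sequences of $\AFN$ and the acyclic evaluations of $\ADF^{\AFN}$. The first ingredient I would record is a precise description of the blocking set produced from a powerful sequence. By Translation \ref{trans:afn} the acceptance condition of $a$ is the conjunction of an attack part $att_a = \bigwedge \neg \{a\}^-$ and a support part $sup_a$; hence in any minimal decisively in interpretation of $a$ the $\tvf$--part is precisely the attacker set $\{a\}^-$ (every attacker must be mapped to $\tvf$, while the $\tvt$--part only picks a transversal of the support sets). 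Consequently, for a powerful sequence $(a_0,\dots,a_n)$ for $a$, the acyclic evaluation furnished by Lemma \ref{lemma:ppd} has blocking set $B = \bigcup_{i=0}^n \{a_i\}^-$, the union of the attackers of the sequence members, independently of the chosen supporters. Thus $B \cap \ext \neq \emptyset$ holds exactly when $\ext$ attacks some member of the sequence.

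Next I would set up the dictionary between coherent sets containing $a$ and powerful sequences for $a$. If $C$ is coherent and $a \in C$, then $a$ is powerful in $C$ by Def.\ \ref{def:afnsem}, so there is a powerful sequence for $a$ whose elements all lie in $C$. Conversely, given any powerful sequence $(a_0,\dots,a_n)$ for $a$, every prefix $(a_0,\dots,a_i)$ is again a powerful sequence (the conditions on $a_0$ and on the necessity sets are inherited), so each $a_i$ is powerful in $\{a_0,\dots,a_n\}$; hence $\{a_0,\dots,a_n\}$ is a coherent set containing $a$. Together with Lemma \ref{lemma:ppd} this also settles the vacuous case: $a$ belongs to no coherent set iff $a$ has no powerful sequence iff $a$ has no acyclic evaluation, and then $a$ lies in both $\ext^{att}$ and $\ext^{a+}$ vacuously.

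For $\ext^{a+} \subseteq \ext^{att}$ I would take $a \in \ext^{a+}$ and an arbitrary coherent $C$ with $a \in C$. Extracting a powerful sequence for $a$ inside $C$ and converting it through Lemma \ref{lemma:ppd} into an acyclic evaluation $((a_0,\dots,a_n),B)$, the hypothesis $a \in \ext^{a+}$ forces $B \cap \ext \neq \emptyset$; by the first step this means some $e \in \ext$ attacks some $a_i \in C$, which is exactly the witness demanded by the definition of $\ext^{att}$. As $C$ was arbitrary, $a \in \ext^{att}$. For the reverse inclusion I would take $a \in \ext^{att}$ and an arbitrary acyclic evaluation $((a_0,\dots,a_n),B)$ for $a$; by Lemma \ref{lemma:ppd} its sequence is a powerful sequence for $a$, so by the dictionary $\{a_0,\dots,a_n\}$ is a coherent set containing $a$. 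The defining property of $\ext^{att}$ then supplies $e \in \ext$ and $a_i$ with $e R a_i$, whence $e \in \{a_i\}^- \subseteq B$ and $B \cap \ext \neq \emptyset$; since the evaluation was arbitrary, $a \in \ext^{a+}$. Combining the two inclusions yields $\ext^{att} = \ext^{a+}$.

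The main obstacle I anticipate lies in the first two steps rather than in the inclusions themselves. I must pin the blocking set down as exactly the union of attacker sets, so that ``$B$ meets $\ext$'' and ``$\ext$ attacks a member of the sequence'' become interchangeable, and I must verify that the element set of a powerful sequence is itself coherent so that it is a legitimate witness $C$ in the definition of $\ext^{att}$. Both points rest on strong consistency, which guarantees that attackers and supporters of an argument are disjoint and hence that the $\tvf$--part of a minimal decisively in interpretation of $a$ equals $\{a\}^-$ precisely; once this is established, the quantifier matching in both directions is routine.
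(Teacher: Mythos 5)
Your proof is correct and takes essentially the same route as the paper's: both directions hinge on Lemma \ref{lemma:ppd} together with the observation that, under strong consistency, the blocking set of any acyclic evaluation is exactly the union of the attacker sets $\bigcup_{i=0}^n \{a_i\}^-$ of its sequence members, so that blocking by $\ext$ and attacking the sequence coincide. If anything, your write--up is more careful than the paper's, since you explicitly establish the dictionary between coherent sets and powerful sequences (via prefix coherence of the sequence's element set) and handle the vacuous case, both of which the paper's proof uses only implicitly.
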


\begin{proof}
If every coherent set containing $a$ is attacked by $\ext$, it means that every powerful sequence for $a$ is attacked by $\ext$.
By Lemma \ref{lemma:ppd}, we have that every powerful sequence corresponds to an acyclic pd--evaluation. 
As seen in the proof, attackers of the members of this sequence
form the blocking set of the evaluation. 
Thus, if $\ext$ attacks  a member of the powerful sequence, it means that an argument from the blocking set of the evaluation
is in $\ext$. Therefore, the evaluation is blocked, and whatever is in $\ext^{att}$ is in $\ext^{a+}$.

Now let us assume there is an argument $a \in \ext^{a+}$, but not in $\ext^{att}$. 
This means that $a$ has an unattacked powerful sequence, but every of its pd--acyclic evaluations $(F,B)$ is blocked through the blocking set.
By Lemma \ref{lemma:ppd} we can construct a pd--evaluation corresponding to the unattacked sequence. Since
the blocking set of the evaluation is composed of the attackers of members of the powerful sequence, it cannot be the case
that there is no $b \in \ext$ attacking the sequence and at the same time $\ext \cap B \neq \emptyset$. We reach a contradiction.
Therefore, whatever is in $\ext^{a+}$ is also in $\ext^{att}$. 
\end{proof}

\begin{restatable}{theorem}{afnadfdef}
\label{thm:afnadfdef}
Let $\AFN = (A, R, N)$ be a strongly consistent AFN and $\ADF^{\AFN} =(A,L,C)$ its corresponding ADF. 
Let $\ext \subseteq A$ be strongly coherent in $\AFN$ and thus pd--acyclic conflict--free in $\ADF^{\AFN}$. Then
$\ext$ defends an argument $a\in A$ in $\AFN$ iff this argument is decisively in w.r.t $v_\ext^a$ in $\ADF^{\AFN}$.
\end{restatable}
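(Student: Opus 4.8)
The plan is to reduce everything to the shape of the translated acceptance condition $C_a = att_a \land sup_a$ from Translation \ref{trans:afn}, where $att_a = \bigwedge \neg \{a\}^-$ forbids the attackers of $a$ and $sup_a = \bigvee Z_1 \land \dots \land \bigvee Z_m$ demands a witness from each necessity set $Z_i N a$. First I would record the two facts that make the two notions comparable: by Lemma \ref{lemma:chcf} the hypothesis that $\ext$ is strongly coherent is the same as being pd--acyclic conflict--free, so by Lemma \ref{lemma:disc} we have $\ext \cap \ext^{a+} = \emptyset$; and by Lemma \ref{lemma:acydattafn} the AFN discarded set $\ext^{att}$ coincides with the acyclic discarded set $\ext^{a+}$. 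Consequently the acyclic range $v_\ext^a$ assigns $\tvt$ exactly to $\ext$ and $\tvf$ exactly to $\ext^{att} \setminus \ext = \ext^{att}$, with these two blocks disjoint. Decisiveness of $a$ then amounts to checking that, no matter how the parents of $a$ lying outside $\ext \cup \ext^{att}$ are completed, both $att_a$ and $sup_a$ keep the same truth value.

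For the left--to--right direction I assume $\ext$ defends $a$. By Def. \ref{def:afn1} every $b$ with $b R a$ lies in $\ext^{att} = \ext^{a+}$, hence is mapped to $\tvf$ by $v_\ext^a$ (and by disjointness is not in $\ext$); thus $att_a$ is already satisfied and stays satisfied under every completion. Defense also gives that $\ext \cup \{a\}$ is coherent, so $a$ has a powerful sequence inside $\ext \cup \{a\}$; since its elements are distinct and end in $a$, all earlier elements lie in $\ext$, and condition iii) of Def. \ref{def:afnsem} forces every $Z_i N a$ to meet $\ext$. Hence each disjunct $\bigvee Z_i$ already has a $\tvt$ witness under $v_\ext^a$, so $sup_a$ is fixed true. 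Therefore $C_a$ evaluates to $in$ on every completion and $a$ is decisively in w.r.t. $v_\ext^a$.

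For the converse I assume $a$ is decisively in w.r.t. $v_\ext^a$ and read off the two requirements of defense. If some attacker $b$ of $a$ were not already mapped to $\tvf$ -- either because $b \in \ext$ or because $b \notin \ext \cup \ext^{att}$ -- then a completion assigning $b$ to $\tvt$ would falsify $att_a$ and hence $C_a$, contradicting decisiveness; so every attacker lies in $\ext^{att}$, which is exactly the attacker condition of defense. Similarly, if some $Z_i N a$ had no member in $\ext$, then a completion sending every still--free member of $Z_i$ to $\tvf$ would falsify $\bigvee Z_i$ and thus $C_a$; so each $Z_i$ meets $\ext$. To obtain coherence of $\ext \cup \{a\}$ I pick for each $Z_i$ a witness $z_i \in Z_i \cap \ext$, use that $z_i$ is powerful in $\ext$ (Lemma \ref{lemma:ppd} together with coherence of $\ext$), concatenate these powerful sequences, delete repetitions, and append $a$; strong consistency guarantees that the witnesses are genuine supporters and not attackers of $a$, so the resulting sequence witnesses that $a$ is powerful in $\ext \cup \{a\}$. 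Together with coherence of $\ext$ this shows $\ext \cup \{a\}$ is coherent, completing defense.

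The routine bookkeeping -- that $att_a$ and $sup_a$ depend monotonically on the fixed $\tvf$/$\tvt$ assignments -- is immediate from the shape of the formulas, so the only genuinely delicate step is the support side of the converse: assembling a single powerful sequence for $a$ out of the per--set witnesses while keeping the elements distinct and respecting the acyclic support ordering. Strong consistency is what prevents a chosen supporter from simultaneously being an attacker of $a$ (which would otherwise reintroduce a clash in $att_a$), and Lemma \ref{lemma:ppd} is what lets me move freely between powerful sequences in $\AFN$ and acyclic evaluations in $\ADF^{\AFN}$ when justifying that the concatenated sequence really is powerful.
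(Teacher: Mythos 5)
Your proof is correct and takes essentially the same route as the paper's: both reduce the claim to the shape of the translated condition $C_a = att_a \land sup_a$ from Translation \ref{trans:afn}, both use Lemma \ref{lemma:acydattafn} to identify $\ext^{att}$ with $\ext^{a+}$ (so attackers are exactly the arguments the acyclic range sends to $\tvf$), and both settle the support side via powerful sequences and Lemma \ref{lemma:ppd}. The only difference is presentational: the paper argues both directions by contradiction, whereas you argue directly, and your explicit concatenation of the witnesses' powerful sequences spells out the step the paper leaves implicit, namely that if every necessity set of $a$ meets the coherent set $\ext$ then $\ext \cup \{a\}$ is coherent.
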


\begin{proof}
We will use the formulation of defense in AFNs from Lemma \ref{lemma:afndefatt}.

Let us  assume that $a$ is defended in $\AFN$,
but is not decisively in w.r.t. $v_\ext^a$. This means there exists at least one completion $v'$ of the acyclic range interpretation that outs the acceptance
condition of $a$. Let $\ext' = v'^{\tvt}$. According to Translation \ref{trans:afn}, the condition of $a$ is not satisfied iff there exists $b \in \ext'$ s.t.
$bRa$ or there exists $C \subseteq A$ s.t. $C N a$ and $C \cap \ext' = \emptyset$. If it is the first case, then from the fact that
$\ext^{att} = \ext^{a+}$ by Lemma \ref{lemma:acydattafn}, it follows that
there is an attacker $b$ of $a$ not included in $\ext^{att}$. Thus, $a$ could have not been defended in $\AFN$. If it is the latter case,
it means that there exists $C \subseteq A$ s.t. $C N a \land C \cap \ext = \emptyset$ as well. Consequently, $\ext \cup \{a\}$ could not have been coherent.
We reach a contradiction. Therefore, if an argument $a$ is defended by $\ext$ in $\AFN$, then it is decisively in w.r.t $v_\ext^a$ in $\ADF^{\AFN}$.

Let us now assume that $a$ is decisively in w.r.t. $v_\ext^a$, but is not defended in $\AFN$. This means that either there is an argument 
$b \in A$ s.t. $b R a$ and $b \notin \ext^{att}$, or $\ext \cup \{a\}$ is not coherent. By Translation \ref{trans:afn} and Lemma \ref{lemma:acydattafn}, 
it is easy to see that if it were the first case, then
$a$ could not have been decisively in w.r.t. the acyclic range of $\ext$. 
Let us thus assume that the issue lies in the coherence. Since we know that $\ext$ is strongly coherent, $a$
is the only argument that would not have a powerful sequence on $\ext \cup \{a\}$. This means that either there is no powerful sequence for $a$
to start with, or there is a set $C\subseteq A$ s.t. $CNa$ and $C\cap \ext = \emptyset$. 
If it is the first case, then by Lemma \ref{lemma:ppd} there is no pd--acyclic evaluation
for $a$ in $\ADF$. Consequently, it has to mapped to false by $v_\ext^a$ and is therefore decisively out w.r.t. it by Proposition \ref{prop:range}. 
We reach a contradiction with the assumption it is decisively in. If it is the latter case,
then by the Translation \ref{trans:afn} the acceptance condition of $a$ could not have been satisfied by $\ext$. Hence, $a$ could not have been
decisively in w.r.t $v_\ext^a$ and we reach a contradiction. We can therefore conclude that if $a$ is decisively in w.r.t. $v_\ext^a$ in $\ADF^{\AFN}$, 
then it is defended by $\ext$ in $\AFN$.
\end{proof}

\afnadfsem*

\begin{proof}
Let $\ext$ be an admissible extension in $\AFN$. By Lemma \ref{lemma:chcf} and Theorem \ref{thm:afnadfdef} we know that it is pd--acyclic conflict--free
in $\ADF^{\AFN}$ and that all arguments in $\ext$ are decisively in w.r.t $v_\ext^a$. Since the members of the blocking sets correspond to the attackers
of the arguments, they are naturally falsified in the range interpretation. Consequently, all aa--admissible criterions are satisfied. 
The other way around follows straightforwardly from the theorems.

We now know that the admissible extensions of $\AFN$ and $\ADF^{\AFN}$ coincide. Thus, the maximal w.r.t. set inclusion admissible sets are the same,
and $\ext$ is preferred in $\AFN$ iff it is aa--preferred in $\ADF^{\AFN}$.

The completeness follows straightforwardly from admissibility and Theorem \ref{thm:afnadfdef}. We can use Theorem \ref{thm:compafn}
 in order to show that $\ext$ is grounded in $\AFN$ iff it is acyclic grounded in $\ADF^{\AFN}$.

What remains to be shown is the correspondence of stable semantics. Let $\ext$ be AFN stable. By Lemma \ref{lemma:chcf} we know that $\ext$ is then
at least pd--acyclic conflict--free in $\ADF^{\AFN}$. It is easy to see by the definition of the deactivated set and Translation \ref{trans:afn}, that the acceptance
condition of every
argument $a \notin \ext$ will be out. Thus, $\ext$ satisfies the model criterion and we can conclude that it is ADF stable.

Let now $\ext$ be ADF stable.Since $\ext$ is also a model, then we know by Lemma \ref{lemma:modrange} that $\ext^{a+} = A\setminus \ext$.
We know it is pd--acyclic conflict--free, thus at least strongly coherent in $\AFN$ by Lemma\ref{lemma:chcf}. By this and Lemma \ref{lemma:acydattafn} 
we can conclude that $\ext^{a+}$ coincides with $\ext^{att}$. Thus, by Lemma \ref{lemma:afnstb2} $\ext$ is AFN stable. 
\end{proof} 

\end{document}